\documentclass[11pt]{article}

\usepackage{acl}

\usepackage{times}
\usepackage{latexsym}
\usepackage{amsmath}
\usepackage{amssymb}
\usepackage{amsthm}
\usepackage{amsfonts}
\usepackage{float}
\usepackage{subcaption}
\usepackage{algorithm}
\usepackage{algpseudocode}

\usepackage[T1]{fontenc}

\usepackage[utf8]{inputenc}

\usepackage{microtype}

\usepackage{inconsolata}

\usepackage{graphicx}
\usepackage{booktabs}
\usepackage{colortbl}
\usepackage{enumitem}
\usepackage{bm}

\newtheorem{theorem}{Theorem}
\newtheorem{lemma}{Lemma}
\newtheorem{proposition}{Proposition}
\newtheorem{corollary}{Corollary}
\newtheorem{assumption}{Assumption}
\theoremstyle{remark}
\newtheorem{remark}{Remark}
\theoremstyle{plain}

\definecolor{tableheader}{RGB}{245,247,250}
\definecolor{llamablue}{RGB}{223,236,247}
\definecolor{optgreen}{RGB}{224,242,226}
\definecolor{grzopink}{RGB}{252,237,241}
\definecolor{grzopinkdeep}{RGB}{250,230,236}
\definecolor{foorange}{RGB}{224,118,0}
\definecolor{deltapos}{RGB}{0,128,0}
\definecolor{deltaneg}{RGB}{178,34,34}
\newcommand{\dpos}[1]{\textcolor{deltapos}{#1}}
\newcommand{\dneg}[1]{\textcolor{deltaneg}{#1}}

\DeclareMathAlphabet\mathbfcal{OMS}{cmsy}{b}{n}
\newcommand{\mat}[1]{\mathbf{#1}}

\newcommand{\forow}{\textcolor{foorange}{\textbullet}\,}

\title{GRZO: Group-Relative Zeroth-Order Optimization \\ for Large Language Model Fine-Tuning}

\author{
  \textbf{Liyan Tan} \quad
  \textbf{Yequan Zhao} \quad
  \textbf{Yifan Yang} \quad
  \textbf{Ruijie Zhang} \quad
  \textbf{Xinling Yu} \quad
  \textbf{Zheng Zhang} \\
  University of California, Santa Barbara \\
  \{liyan\_tan, yequan\_zhao, yifanyang, ruijiezhang, xyu644\}@ucsb.edu \\
  zhengzhang@ece.ucsb.edu
}

\hypersetup{pdfauthor={Liyan Tan, Yequan Zhao, Yifan Yang, Ruijie Zhang, Xinling Yu, Zheng Zhang},
            pdftitle={GRZO: Group-Relative Zeroth-Order Optimization for Large Language Model Fine-Tuning},
            pdfsubject={Zeroth-order optimization for large language model fine-tuning},
            pdfkeywords={zeroth-order optimization, large language model fine-tuning, memory-efficient training, variance reduction}}

\begin{document}
\maketitle
\begin{abstract}
Zeroth-order (ZO) optimization is a memory-efficient alternative to backpropagation for fine-tuning large language models, but its deployment is limited by the high variance of gradient estimation. We propose \textbf{GRZO}, a \textbf{G}roup-\textbf{R}elative \textbf{Z}eroth-\textbf{O}rder optimizer that draws one pseudo-independent perturbation per mini-batch example and aggregates the per-example losses through group-relative normalization, raising the effective gradient-direction count from one to the batch size at no additional forward cost while preserving inference-level memory. We prove that GRZO is directionally unbiased with variance shrinking proportionally to the batch size, yielding a tighter nonconvex convergence bound than MeZO. Across RoBERTa-large, Llama3-8B, and OPT-13B over multiple tasks, GRZO improves average accuracy on Llama3-8B by $+3.0$ over MeZO while staying within $0.5\%$ of the forward-only inference memory floor; as a drop-in replacement for the MeZO core, it lifts sparse, low-rank, and quantized ZO variants by $+4.9$ on average.
\end{abstract}

\section{Introduction}

\begin{figure*}[!t]
\centering
\begin{subfigure}[b]{0.56\textwidth}
    \centering
    \includegraphics[width=\textwidth]{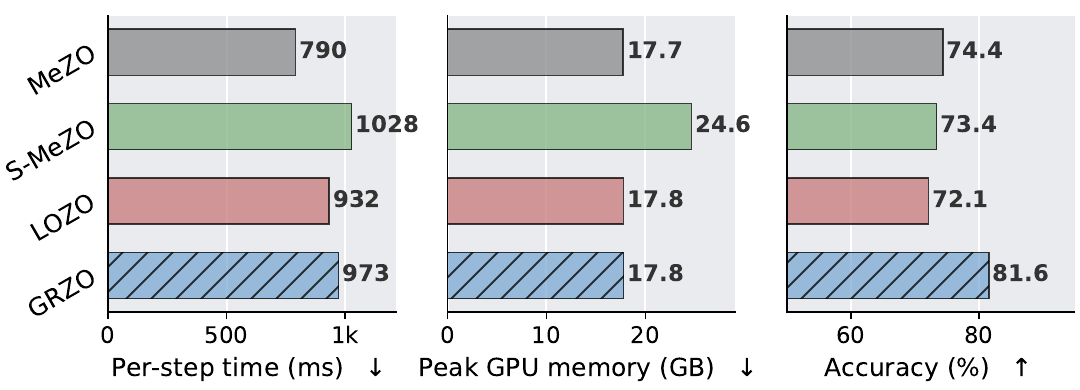}
    \caption{Efficiency and accuracy comparison.}
    \label{fig:intro_resource}
\end{subfigure}
\hfill
\begin{subfigure}[b]{0.42\textwidth}
    \centering
    \includegraphics[width=\textwidth]{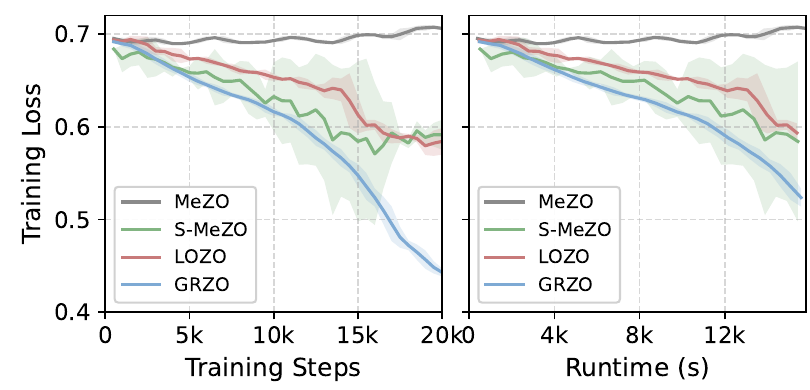}
    \caption{Loss convergence.}
    \label{fig:intro_convergence}
\end{subfigure}
\caption{GRZO at a glance on RTE (Llama3-8B). \textbf{Left}: highest accuracy ($81.6\%$) at inference-level peak memory ($17.82$~GB, $+0.5\%$ over the forward-only floor) for a $23\%$ per-step time premium over MeZO. \textbf{Right}: fastest convergence in both training steps and wall-clock time.}
\label{fig:intro_overview}
\end{figure*}

Fine-tuning large language models (LLMs) for downstream tasks remains essential, yet first-order fine-tuning is expensive: backpropagation requires storing activations, gradients, and optimizer states, and these costs grow linearly with model scale. Memory-efficient methods such as LoRA~\citep{hu2022lora}, Adapter~\citep{houlsby2019parameter}, Prefix-Tuning~\citep{li2021prefix}, Prompt-Tuning~\citep{lester2021power}, GaLore~\citep{zhao2024galorememoryefficientllmtraining}, CoLA~\citep{liu2025cola}, and LaX~\citep{zhang2026lax} reduce certain memory footprints but still rely on backpropagation, inheriting most of its activation-storage cost. Moreover, many practical objectives---accuracy, F1, reward signals---are non-differentiable and lie outside the first-order pipeline; the same barrier arises beyond language modeling, for instance in analog/RF circuit design, where the objective is returned by a simulator that exposes no usable gradient~\citep{tan2026zoaf}. These considerations motivate zeroth-order (ZO) fine-tuning as a forward-only alternative.

The canonical ZO method for LLM fine-tuning is MeZO~\citep{malladi2023fine}, a two-point estimator that approximates the gradient from the loss difference between two perturbed forward passes. \citet{malladi2023fine} report up to a $12\times$ memory reduction over SGD~\citep{amari1993backpropagation} and AdamW~\citep{loshchilov2017decoupled} fine-tuning, keeping training memory near inference levels while remaining compatible with non-differentiable objectives. The catch is that MeZO uses a single random perturbation direction per step; the variance of this estimator grows with model dimension, producing slow descent and brittle optimization once the backbone reaches the multi-billion-parameter range.

A growing literature addresses this variance by reducing the dimension via low-rank~\citep{chen2025enhancing} or sparse~\citep{liu2024sparse,zhang2025mazo} perturbations; in LLM fine-tuning, however, the loss landscape exhibits low effective rank~\citep{aghajanyan2021intrinsic, malladi2023kernel}, so the convergence rate can be independent of the parameter count. Another direction designs lower-variance ZO gradient estimators via control variates~\citep{gautam2024mezosvrg}, Hessian curvature~\citep{zhao2024hizoo}, or minimum-variance two-point estimators~\citep{ma2025revisiting}.

Both families buy variance reduction at a cost---a narrowed update space, extra forward passes, or extra persistent memory---eroding the inference-level efficiency that motivated ZO in the first place. We identify a third, long-overlooked axis for variance reduction: \emph{the mini-batch itself}. Existing ZO methods reuse a single perturbation direction across all $B$ examples of a step, even though the loss is evaluated per example. Drawing $B$ pseudo-independent directions instead---one per example---would, by standard Monte Carlo, reduce the SPSA (simultaneous perturbation stochastic approximation)~\citep{spall2002multivariate} estimator's variance by a factor of $1/B$ at no additional forward cost, no parameter-space restriction, and no extra persistent memory. Realizing this axis efficiently, while preserving MeZO's two-forward-pass budget and inference-level memory footprint, is the central design problem of this paper.

We propose GRZO (Group-Relative Zeroth-Order Optimization), which realizes this axis by drawing $B$ pseudo-independent perturbations via Flipout-style sign factorization~\citep{wen2018flipout} and aggregating the resulting per-example loss differences through GRPO-style group-relative normalization~\citep{shao2024deepseekmath}, all within a single two-forward-pass step. Figure~\ref{fig:intro_overview} summarizes the resulting accuracy, memory, and convergence profile on RTE. Our contributions are:
\begin{itemize}[leftmargin=*]
    \item \textbf{Algorithm.} A ZO optimizer that turns the mini-batch into pseudo-independent perturbation directions while preserving MeZO's two-forward-pass budget and inference-level memory.
    \item \textbf{Theory.} We show the directional unbiasedness and batch-size-scaled variance reduction of GRZO, yielding a strictly tighter nonconvex convergence bound than single-direction ZO.
    \item \textbf{Experimental results.} We show that GRZO outperforms MeZO across models and tasks, and its sparse, low-rank, and quantized variants on most tasks. We further show that GRZO is complementary to sparse, low-rank, and quantized ZO variants and that they can be combined to achieve further performance benefit.
\end{itemize}

\noindent A mechanism-by-mechanism comparison of representative ZO methods is in Table~\ref{tab:related_work} (Appendix~\ref{app:related_work_table}); these approaches are largely orthogonal to GRZO and compose with it. 

\begin{figure*}[t]
\centering
\includegraphics[width=\textwidth]{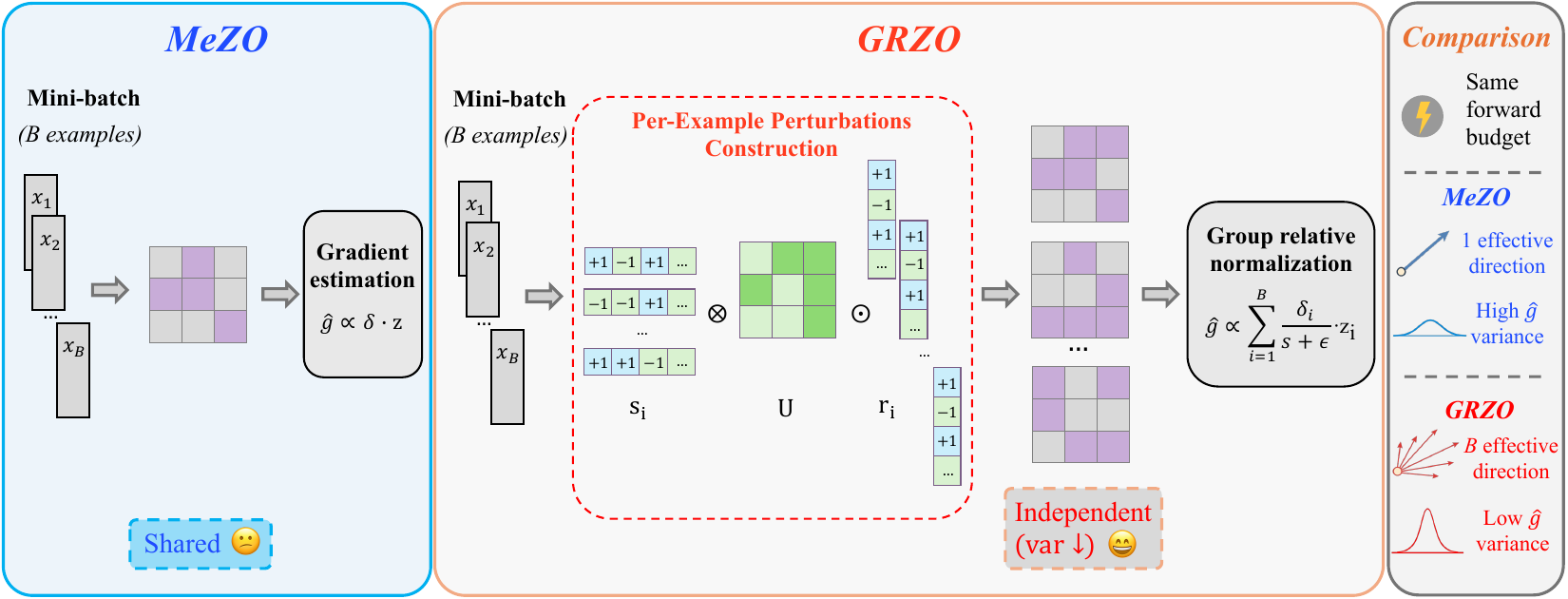}
\caption{Side-by-side pipeline comparison of MeZO (left) and GRZO (right). By constructing pseudo-independent perturbations and group-relative normalization, GRZO achieves $B$ effective perturbation directions and $1/B$ gradient variance under the same forward-pass budget compared to MeZO.}
\label{fig:grzo_overview}
\end{figure*}

\section{Background and Related Work}
\label{sec:related_work}

\subsection{Zeroth-Order (ZO) Optimization}

Zeroth-order (ZO) optimization~\citep{nesterov2017random, ghadimi2013stochastic} adjusts model parameters $\bm{\theta}\in\mathbb{R}^{D}$ using only forward queries of the loss $\mathcal{L}(\bm{\theta})$, avoiding the memory overhead caused by activation and gradient buffers in backpropagation. A ZO optimizer still performs gradient-descent update $\bm{\theta}_{t}\leftarrow\bm{\theta}_{t-1}-\alpha\,\mat{g}$, but it approximates the gradient $\mat{g}$ via $N$ forward passes
\begin{equation}
\mat{g} \approx \widehat{\nabla}_{\bm{\theta}}\mathcal{L}(\bm{\theta}) = \sum_{i=1}^N \frac{1}{N\mu} \left[\mathcal{L}(\bm{\theta}+\mu \bm{\xi}_i) - \mathcal{L}(\bm{\theta})\right] \bm{\xi}_i,
\label{eq:zo_gradient_estimation}
\end{equation}
with $\{\bm{\xi}_i\}_{i=1}^N$ drawn i.i.d.\ from an isotropic distribution $\rho(\bm{\xi})$ (e.g., $\mathcal{N}(\mat{0},\mat{I})$ or Rademacher) and $\mu>0$ a small sampling radius. The estimator $\widehat{\nabla}_{\bm{\theta}}\mathcal{L}$ is unbiased w.r.t.\ the gradient of the smoothed surrogate $f_\mu(\bm{\theta}):=\mathbb{E}_{\bm{\xi} \sim \rho}[\mathcal{L}(\bm{\theta}+\mu \bm{\xi})]$, but biased w.r.t.\ the true gradient $\nabla_{\bm{\theta}}\mathcal{L}$~\citep{berahas2022theoretical}, and its variance carries a dimension-dependent factor $O(D/N)$ at $\mu=O(1/\sqrt{N})$~\citep{liu2020primer, duchi2015optimal, gao2022generalizing}.

MeZO, one of the most popular ZO optimizers for LLM fine-tuning, is the $N{=}1$ two-sided instantiation of Eq.~\eqref{eq:zo_gradient_estimation}: a seed-regenerated direction $\bm{\xi}\in\mathbb{R}^{D}$ drives two symmetric forward passes $\ell^{\pm}=\mathcal{L}(\bm{\theta}\pm\mu\bm{\xi};\mathcal{B})$, and parameters are updated in place along $-\alpha(\ell^{+}-\ell^{-})\bm{\xi}/2\mu$ without materializing the perturbation tensor. Two knobs govern the estimator's quality: the parameter dimension $D$ over which its variance scales, and the SPSA construction itself. The ZO fine-tuning literature addresses these knobs separately.

\subsection{Reducing the Effective Dimension}
One family of approaches shrinks the update space to mitigate the $O(D)$ variance scaling. DeepZero~\citep{chen2024deepzero} and Sparse-MeZO~\citep{liu2024sparse} restrict updates to a sparsity mask, and MaZO~\citep{zhang2025mazo} extends masking to multi-task fine-tuning; 
Low-rank methods reparameterize the perturbation through low-rank matrices~\citep{chen2025enhancing} or tensors~\citep{zhao2023tensor, yang2024adazeta}.
These methods exploit the low effective rank of LLM fine-tuning at the cost of full-parameter expressivity. Along an orthogonal axis, QuZO~\citep{zhou2025quzo} and Poor-Man's Training~\citep{zhao2025poor} reduce memory via low-bit forward passes without altering the SPSA estimator.

\subsection{Improving the Estimator Construction}
A second family of research keeps the full-parameter update space and injects additional information into the estimator. MeZO-SVRG~\citep{gautam2024mezosvrg} pairs each probe with a periodic full-batch reference for an SVRG-style control variate, at the cost of doubled persistent memory; HiZOO~\citep{zhao2024hizoo} adds a diagonal-Hessian preconditioner estimated from one extra forward pass per step; FZOO~\citep{dang2025fzoo} samples $N$ parallel directions for $1/N$-scale variance reduction at $N{+}1$ forwards per step; subspace-orthogonalization~\citep{lang2026powering} decorrelates the direction sequence across steps; and \citet{ma2025revisiting} revisit minimum-variance two-point estimator design; SharpZO~\citep{yang2026sharpzo} extends the forward-only paradigm to sharpness-aware VLM prompt tuning. In every case, the variance reduction is achieved with extra forwards, extra persistent memory, or both.

\section{Method}
\label{sec:method}

Both ZO families reviewed in Section~\ref{sec:related_work} reduce variance at a cost: shrinking the effective parameter dimension sacrifices full-parameter expressivity, while enriching the SPSA estimator adds forward passes or persistent memory. GRZO instead turns the mini-batch dimension itself into the variance-reduction lever: a single two-forward-pass step yields $B$ pseudo-independent gradient directions, preserving the inference-level memory and two-forward-pass envelope of MeZO. Section~\ref{sec:grzo_flipout} constructs the per-example perturbation directions via a sign factorization of a shared base perturbation; Section~\ref{sec:grzo_core} then converts the resulting $B$ per-example loss signals into group-relative weights inspired by GRPO advantages. Figure~\ref{fig:grzo_overview} contrasts the resulting pipeline with MeZO.

\subsection{Per-Example Perturbations via Structured Injection}
\label{sec:grzo_flipout}
GRZO constructs per-example perturbations through a sign factorization originally proposed for Bayesian weight sampling~\citep{wen2018flipout}. For a linear layer with weight $\mat{W}\in\mathbb{R}^{d_{\text{out}}\times d_{\text{in}}}$, we first generate a shared base perturbation matrix $\mat{U} \in \mathbb{R}^{d_{\text{out}}\times d_{\text{in}}}$ in each step from a symmetric isotropic distribution (e.g., $\mathcal{N}(0,1)$ or Rademacher entries). Then for each data sample in $\{ \mat{x}_i, \mat{y}_i\}_{i=1}^B$, we modulate $\mat{U}$ with per-example sign vectors $\mat{r}_i\in\{\pm 1\}^{d_{\text{out}}}$ and $\mat{s}_i\in\{\pm 1\}^{d_{\text{in}}}$ (independent Rademacher pairs) to obtain
\begin{equation}
\Delta \mat{W}_i \;=\; \mat{U} \odot (\mat{r}_i \mat{s}_i^\top),
\label{eq:flipout_deltaW}
\end{equation}
where $\odot$ denotes the element-wise (Hadamard) product. This construction yields $B$ pseudo-independent per-example perturbations from a single shared base $\mat{U}$, without materializing $B$ separate weight copies; see Figure~\ref{fig:grzo_overview} for the pipeline and Appendix~\ref{app:background_details} for the vectorized form. The overhead is a small constant per linear layer (one extra matrix multiplication plus elementwise sign modulations).

\begin{lemma}[Isotropy and conditional decorrelation]
\label{lem:flipout_indep}
Let $\mat{z}_i := \mathrm{vec}(\Delta\mat{W}_i) = \mathrm{vec}(\mat{U} \odot (\mat{r}_i \mat{s}_i^\top))$ with $\mat{U}$ and $\{(\mat{r}_i, \mat{s}_i)\}_{i=1}^B$ as defined above. Then $\mathbb{E}[\mat{z}_i\mat{z}_i^\top]=\mat{I}_{d_{\text{out}}d_{\text{in}}}$ marginally and $\mathbb{E}[\mat{z}_i\mat{z}_j^\top\mid \mat{U}]=0$ for $i\neq j$. Drawing an independent base $\mat{U}^{(\ell)}$ and independent sign pairs per layer, the concatenation $\mat{z}_i=(\mat{z}_i^{(1)},\dots,\mat{z}_i^{(L)})\in\mathbb{R}^{D}$ over all $L$ perturbed blocks inherits both properties on $\mathbb{R}^{D}$, where $D$ is the total number of perturbed parameters (Appendix~\ref{app:flipout_decorrelation}).
\end{lemma}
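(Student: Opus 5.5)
The argument works entrywise: I write out the second moments of $\mat{z}_i$ in coordinates and use the independence structure of $\mat{U}$ and the sign vectors. Index an entry of $\mat{z}_i$ by $(a,b)$ with $a\in[d_{\text{out}}]$ and $b\in[d_{\text{in}}]$, so that $(\mat{z}_i)_{(a,b)} = U_{ab}\, r_{i,a}\, s_{i,b}$.

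\textbf{Marginal isotropy.} For two positions $(a,b)$ and $(a',b')$,
\[
\mathbb{E}\bigl[(\mat{z}_i)_{(a,b)}(\mat{z}_i)_{(a',b')}\bigr]=\mathbb{E}[U_{ab}U_{a'b'}]\;\mathbb{E}[r_{i,a}r_{i,a'}]\;\mathbb{E}[s_{i,b}s_{i,b'}],
\]
because $\mat{U}$, $\mat{r}_i$ and $\mat{s}_i$ are mutually independent. The entries of $\mat{U}$ are i.i.d., symmetric, with unit variance, so $\mathbb{E}[U_{ab}U_{a'b'}]=\delta_{aa'}\delta_{bb'}$. Rademacher signs satisfy $r^2=s^2=1$, so the sign factors equal $1$ on the diagonal. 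Hence the product is $\delta_{aa'}\delta_{bb'}$, i.e.\ $\mathbb{E}[\mat{z}_i\mat{z}_i^\top]=\mat{I}$. Only the first factor is needed for this; the signs merely square to one.

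\textbf{Conditional decorrelation.} Fix $i\neq j$ and condition on $\mat{U}$. Then
\[
\mathbb{E}\bigl[(\mat{z}_i)_{(a,b)}(\mat{z}_j)_{(a',b')}\mid \mat{U}\bigr]=U_{ab}U_{a'b'}\;\mathbb{E}[r_{i,a}s_{i,b}]\;\mathbb{E}[r_{j,a'}s_{j,b'}],
\]
since the sign pairs of examples $i$ and $j$ are independent of each other and of $\mat{U}$. Within one example, $r_{i,a}$ and $s_{i,b}$ are independent zero-mean Rademacher variables, so $\mathbb{E}[r_{i,a}s_{i,b}]=0$ and every entry vanishes. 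Here it matters that the row and column signs are drawn independently, or at least that each has zero mean. I also note that $\mat{z}_i$ and $\mat{z}_j$ are not fully independent; the lemma claims only conditional zero cross-covariance, which is exactly what this computation gives.

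\textbf{Multi-layer extension.} Write the second-moment matrix of the concatenated vector $\mat{z}_i=(\mat{z}_i^{(1)},\dots,\mat{z}_i^{(L)})$ in $L\times L$ blocks.
\begin{itemize}[leftmargin=*]
\item \emph{Diagonal blocks, marginal statement:} each is the identity by the single-layer computation.
\item \emph{Off-diagonal blocks $\ell\neq m$, marginal statement:} independence across layers factors each block as $\mathbb{E}[\mat{z}_i^{(\ell)}]\,\mathbb{E}[\mat{z}_i^{(m)}]^\top$, which is $0$ since every $\mat{z}^{(\ell)}$ has zero mean ($\mathbb{E}[U_{ab}]=0$).
\item \emph{Cross-example statement, $i\neq j$:} condition on the whole collection $\mat{U}=\{\mat{U}^{(\ell)}\}$. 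Each block $\mathbb{E}[\mat{z}_i^{(\ell)}\mat{z}_j^{(m)\top}\mid\mat{U}]$ is zero. When $\ell=m$ this is the single-layer result. When $\ell\neq m$ it factors as $\mathbb{E}[\mat{z}_i^{(\ell)}\mid\mat{U}]\,\mathbb{E}[\mat{z}_j^{(m)}\mid\mat{U}]^\top$, and each conditional mean is zero because the sign expectation vanishes.
\end{itemize}
This gives both properties on $\mathbb{R}^{D}$.

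The main point requiring care is the precise independence assumptions: across examples, between $\mat{r}_i$ and $\mat{s}_i$, and across layers, together with $\mathbb{E}[U_{ab}]=0$ and unit variance. The rest is bookkeeping.
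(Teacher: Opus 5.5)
Your proof is correct. The cross-example and multi-layer parts match the paper's argument: condition on $\mat{U}$, let the zero-mean sign products kill the conditional mean, and use independence across examples and across blocks.

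The isotropy step takes a different cut. You factor the unconditional moment and let the base do the decorrelating via $\mathbb{E}[U_{ab}U_{a'b'}]=\delta_{aa'}\delta_{bb'}$. This is the same ``inherited from the base'' view as the paper's marginal-distribution remark in Appendix~\ref{app:flipout_marginal}. The proof the statement actually cites (Lemma~\ref{lem:cond_indep}) instead stays conditional on $\mat{U}$. There, the sign products $r_{i,a}s_{i,b}$ are pairwise uncorrelated across distinct positions, so $\mathbb{E}[\mat{z}_i\mat{z}_i^\top\mid\mat{U}]=\mathrm{diag}(\mat{u}^2)=\mat{\Lambda}_{\mat{U}}$, and averaging over $\mat{U}$ needs only $\mathbb{E}[u_k^2]=1$. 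It also records that the $\mat{z}_i$ are conditionally \emph{independent} given $\mat{U}$, not merely decorrelated.

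Your version is more direct and works for any sign structure independent of $\mat{U}$, but it yields only the marginal identity. The conditional version needs neither zero-mean nor uncorrelated base entries, and it is what the later analysis consumes. First, $\mat{\Lambda}_{\mat{U}}$ drives the law-of-total-variance split and the $m_4$ terms of Theorem~\ref{thm:grzo_var}. Second, conditional independence, which your own factorization tacitly uses, is what lets $\mathbb{E}[\mat{q}_i^\top\mat{q}_j\mid\mat{U}]$ factor. That step needs independence because $\mat{q}_i=\langle\mat{g}_i,\mat{z}_i\rangle\mat{z}_i$ is quadratic in $\mat{z}_i$, so second-order decorrelation would not suffice.

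One small correction: your aside that zero-mean signs could replace independence of $\mat{r}_i$ and $\mat{s}_i$ is not right as stated. What you need is $\mathbb{E}[r_{i,a}s_{i,b}]=0$, which can fail for dependent zero-mean signs (e.g.\ $\mat{s}_i$ a copy of $\mat{r}_i$ when $d_{\text{in}}=d_{\text{out}}$). Your actual argument uses independence, so nothing breaks.
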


\subsection{Group-Relative Aggregation for Zeroth-Order Updates}
\label{sec:grzo_core}
Given the per-example perturbations $\Delta\mat{W}_i$ in Eq.~\eqref{eq:flipout_deltaW} and a perturbation scale $\sigma > 0$, example $i$ contributes the two-sided perturbed losses $\ell_i^{\pm} = L(\boldsymbol{\theta} \pm \sigma\Delta\mat{W}_i; \mat{x}_i, \mat{y}_i)$ and the perturbation-induced loss difference 
\begin{equation}
  \delta_i := \ell_i^{+} - \ell_i^{-}. 
\end{equation}  
GRZO converts $\{\delta_i\}_{i=1}^{B}$ into advantage-like weights via group-relative normalization. We compute the within-batch standard deviation
\begin{equation}
s = \sqrt{\frac{1}{B}\sum_{i=1}^{B}\bigl(\delta_i - \bar\delta\bigr)^2}  \; \text{with}\;\bar\delta = \frac{1}{B}\sum_{i=1}^{B}\delta_i,
\label{eq:grzo_stats}
\end{equation}
and define the group-relative weights
\begin{equation}
a_i = \frac{\delta_i}{s+\epsilon}.
\label{eq:grzo_adv}
\end{equation}
Here $\epsilon>0$ is a small constant to ensure numerical stability. These weights are scale-invariant under rescaling of $\{\delta_i\}$, decoupling the update from loss-magnitude drift; two-sided differences also give $\mathbb{E}[\delta_i]=0$ by the $\mat{z}_i\leftrightarrow-\mat{z}_i$ symmetry, so the numerator needs no explicit mean-centering. The update direction is
\begin{equation}
\widehat{\mat{g}} \;=\; \frac{1}{2 \sigma B}\sum_{i=1}^{B} a_i\, \mat{z}_i,
\label{eq:grzo_estimator}
\end{equation}
where $\mat{z}_i$ is the per-example perturbation direction from the preceding Section~\ref{sec:grzo_flipout}; full pseudocode is in Algorithm~\ref{alg:grzo} (Appendix~\ref{app:grzo_algorithm}). The conditional decorrelation in Lemma~\ref{lem:flipout_indep} eliminates the cross-example covariance terms in $\mathrm{Var}(\hat{\mat{g}})$ \emph{given} the shared base $\mat{U}$; the residual fluctuation of $\mat{U}$ itself is not removed by per-example averaging and is accounted for separately in Theorem~\ref{thm:grzo_var} (Appendix~\ref{app:grzo_second_moment}).

\section{Theoretical Analysis}
\label{sec:theory}

The construction in Section~\ref{sec:method} promises $B$ pseudo-independent gradient directions per step at no extra forward cost. We now make this precise. Let $D$ denote the total number of perturbed parameters and let $\mat{z}_i\in\mathbb{R}^{D}$ be the concatenation of example $i$'s per-block Flipout perturbations (Lemma~\ref{lem:flipout_indep}). All statements below are for this \emph{joint} perturbation: a single loss difference $\delta_i$ probes every block at once, so the estimator restricted to one block still carries the perturbations and gradients of all other blocks and cannot be analyzed in isolation (Appendix~\ref{app:blockwise_full_network}). We first analyze the canonical estimator
\begin{equation}
\hat{\mat{g}}_{\mathrm{can}} \;=\; \tfrac{1}{2\sigma B}\textstyle\sum_{i=1}^{B}\delta_i\,\mat{z}_i,
\label{eq:g_can_main}
\end{equation}
and then quantify the finite-$B$ effect of the group-relative normalization that turns \eqref{eq:g_can_main} into the GRZO estimator \eqref{eq:grzo_estimator}. Proofs are in Appendix~\ref{app:grzo_unbiased}.

\begin{theorem}[Unbiasedness]
\label{thm:grzo_unbiased}
Under standard smoothness assumptions,
$\mathbb{E}[\hat{\mat{g}}_{\mathrm{can}} \mid \boldsymbol{\theta}] = \nabla F_\sigma(\boldsymbol{\theta}) + O(\sigma^2)$,
where $F_\sigma(\boldsymbol{\theta})=\mathbb{E}_{\mat{z}}[F(\boldsymbol{\theta}+\sigma \mat{z})]$ and the $O(\sigma^2)$ term is the standard ZO smoothing bias, exactly zero for Gaussian base noise.
\end{theorem}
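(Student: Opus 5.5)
The plan is to reduce the statement to a per-example computation, using Lemma~\ref{lem:flipout_indep} only through the marginal law of each $\mat{z}_i$. By linearity of expectation,
\[
\mathbb{E}[\hat{\mat{g}}_{\mathrm{can}}\mid\boldsymbol{\theta}]=\frac{1}{2\sigma B}\sum_{i=1}^B\mathbb{E}[\delta_i\mat{z}_i].
\]
For each $i$ we take the expectation jointly over the example $(\mat{x}_i,\mat{y}_i)$, drawn i.i.d.\ with population loss $F(\boldsymbol{\theta})=\mathbb{E}[L(\boldsymbol{\theta};\mat{x},\mat{y})]$, and over $\mat{z}_i$. Since the data are independent of the perturbation, conditioning on $\mat{z}_i$ first gives
\[
\mathbb{E}[\delta_i\mat{z}_i]=\mathbb{E}_{\mat{z}_i}\bigl[(F(\boldsymbol{\theta}+\sigma\mat{z}_i)-F(\boldsymbol{\theta}-\sigma\mat{z}_i))\mat{z}_i\bigr].
\]
Each $\mat{z}_i$ has the same marginal law: $\mat{U}\odot(\mat{r}_i\mat{s}_i^\top)$ is symmetric in each entry, isotropic, and identically distributed in $i$. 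Hence the sum collapses to a single two-sided smoothed estimator,
\[
\frac{1}{2\sigma}\,\mathbb{E}_{\mat{z}}\bigl[(F(\boldsymbol{\theta}+\sigma\mat{z})-F(\boldsymbol{\theta}-\sigma\mat{z}))\mat{z}\bigr].
\]
The cross-example decorrelation is not needed here; it matters only for the variance.

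Next, use the symmetry $\mat{z}\stackrel{d}{=}-\mat{z}$: the entries of $\mat{U}$ are symmetric, and flipping $\mat{r}_i$ also flips the sign. This turns the two-sided expression into the one-sided $\frac{1}{\sigma}\mathbb{E}[F(\boldsymbol{\theta}+\sigma\mat{z})\mat{z}]$.

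For Gaussian base noise I would like to apply Stein's lemma directly. However, $\mat{z}$ is not exactly Gaussian: it is $\mat{U}$ with random sign flips, and a Gaussian times an independent Rademacher sign is again Gaussian. Moreover, within a block the entries are $U_{ab}r_as_b$, and conditional on the signs these are i.i.d.\ $\mathcal{N}(0,1)$. So $\mat{z}\sim\mathcal{N}(\mat{0},\mat{I}_D)$ exactly, and Stein's identity $\mathbb{E}[F(\boldsymbol{\theta}+\sigma\mat{z})\mat{z}]=\sigma\,\mathbb{E}[\nabla F(\boldsymbol{\theta}+\sigma\mat{z})]=\sigma\nabla F_\sigma(\boldsymbol{\theta})$ gives exact equality, with the derivative exchanged under the expectation by smoothness. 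Layer-wise concatenation of independent blocks preserves this, by the last part of Lemma~\ref{lem:flipout_indep}.

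For non-Gaussian base noise (Rademacher), I would Taylor-expand $F(\boldsymbol{\theta}\pm\sigma\mat{z})$ to third order, assuming $F$ has Lipschitz third derivatives or bounded fourth derivatives.
\begin{itemize}
\item The even-order terms cancel in the difference.
\item The first-order term gives $2\sigma\,\mathbb{E}[\mat{z}\mat{z}^\top]\nabla F=2\sigma\nabla F(\boldsymbol{\theta})$ by the isotropy in Lemma~\ref{lem:flipout_indep}.
\item The cubic term contributes $O(\sigma^3)/\sigma=O(\sigma^2)$, using bounded fourth moments of $\mat{z}$ (entries bounded for Rademacher).
\end{itemize}
Similarly, $\nabla F_\sigma(\boldsymbol{\theta})=\nabla F(\boldsymbol{\theta})+O(\sigma^2)$: the first-order term vanishes by symmetry, and the second-order term is $O(\sigma^2)$ under bounded third derivatives. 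Combining the two expansions gives $\nabla F_\sigma+O(\sigma^2)$.

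The main obstacle is pinning down the ``standard smoothness assumptions'' so that the remainder constants are controlled. Two points need care. First, the $O(\sigma^2)$ constant carries moment factors that grow with $D$, such as $\mathbb{E}\|\mat{z}\|^4$-type quantities; I would state it with the dimension dependence explicit rather than hide it. Second, the exactness claim depends on the Gaussianity argument for the sign-modulated $\mat{U}$. I would verify it carefully, since within a block the same $r_a$ multiplies an entire row, so the argument must condition on all signs jointly.
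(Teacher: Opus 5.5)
Your proposal is correct and follows essentially the paper's route. It uses linearity plus the marginal law of each $\mat{z}_i$, then Stein's identity (equivalently its antithetic two-sided form) for exactness under Gaussian base noise. For Rademacher noise it uses a Taylor expansion giving $\nabla F+O(\sigma^2)$, combined with the separate smoothing-bias bound $\|\nabla F_\sigma-\nabla F\|=O(\sigma^2)$.

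Your conditioning-on-all-signs argument for the joint Gaussianity of $\mat{z}$ is slightly more careful than the paper's coordinate-wise sign-flip remark. Also, a bounded third derivative, as the paper assumes, already suffices for the $O(\sigma^2)$ remainder, so you do not need Lipschitz third derivatives.
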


\begin{theorem}[Variance Bound]
\label{thm:grzo_var}
Let $m_4=\mathbb{E}[\mat{U}_{jk}^4]$. Under the assumptions of Appendix~\ref{app:grzo_second_moment},
\small
\begin{align}
\mathrm{Var}\bigl(\hat{\mat{g}}_{\mathrm{can}}\bigr) &\le \frac{D-1}{B}\bigl(\|\nabla F\|^2+\nu^2\bigr) \nonumber\\
&\quad + (m_4-1)\Bigl(\|\nabla F\|^2+\frac{\nu^2}{B}\Bigr) + O(\rho^2\sigma^4 D^4). \nonumber
\end{align}
\normalsize
\end{theorem}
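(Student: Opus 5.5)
Our approach is to linearize each loss difference, reduce the variance to second moments of quadratic forms in the perturbations, and then split those moments according to whether two examples share the same Rademacher signs or only the common base $\mat{U}$. The expected assumptions are: per-example losses $L(\cdot;\mat{x}_i,\mat{y}_i)$ with gradients $\mat{g}_i$, mean $\nabla F$, and per-example gradient variance $\mathbb{E}\|\mat{g}_i-\nabla F\|^2\le\nu^2$; and a $\rho$-Lipschitz Hessian, so the third-order Taylor remainder of $\delta_i$ is $O(\rho\sigma^3\|\mat{z}_i\|^3)$. First we Taylor-expand the two-sided difference,
\[
\delta_i = 2\sigma\,\mat{g}_i^\top\mat{z}_i + R_i,\qquad |R_i|=O(\rho\sigma^3\|\mat{z}_i\|^3).
\]
This gives $\hat{\mat{g}}_{\mathrm{can}} = \tfrac1B\sum_i \mat{z}_i\mat{z}_i^\top\mat{g}_i + \tfrac{1}{2\sigma B}\sum_i R_i\mat{z}_i$. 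The remainder contributes to the second moment at most $O(\rho^2\sigma^4\,\mathbb{E}\|\mat{z}\|^8)=O(\rho^2\sigma^4D^4)$, including its cross term with the main part absorbed via Young's inequality. So it suffices to bound $\mathbb{E}\|\tfrac1B\sum_i \mat{z}_i\mat{z}_i^\top\mat{g}_i\|^2 - \|\mathbb{E}[\cdot]\|^2$. Its mean is $\nabla F$ by Lemma~\ref{lem:flipout_indep}, using independence of $\mat{z}_i$ from the data.

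Next we expand the square into diagonal terms ($i=j$) and off-diagonal terms ($i\neq j$).

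\textbf{Diagonal terms.} Write $\mat{z}_i=\mat{u}\odot\mat{m}_i$, where $\mat{u}=\mathrm{vec}(\mat{U})$ and $\mat{m}_i$ is the sign pattern, so $\|\mat{z}_i\|^2=\|\mat{u}\|^2$. We compute $\mathbb{E}[\|\mat{z}\|^2(\mat{z}^\top\mat{g})^2]$ coordinatewise. Off-diagonal index pairs $\mathbb{E}[z_a^2 z_b^2]=1$ contribute $(D-1)\|\mat{g}\|^2$, and the diagonal ones contribute $m_4\|\mat{g}\|^2$. Terms with $a\ne b$ in the cross product $g_ag_b z_az_bz_c^2$ vanish because they are odd in either the Rademacher signs or $\mat{u}$. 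This yields $(D-1+m_4)\mathbb{E}\|\mat{g}_i\|^2\le (D-1+m_4)(\|\nabla F\|^2+\nu^2)$, multiplied by $1/B$.

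\textbf{Off-diagonal terms.} For $i\neq j$ we need $\mathbb{E}[\mat{g}_i^\top\mat{z}_i\mat{z}_i^\top\mat{z}_j\mat{z}_j^\top\mat{g}_j]$. We condition on $\mat{U}$ and average over the independent sign vectors. Because the signs $\mat{r}_i\mat{s}_i^\top$ and $\mat{r}_j\mat{s}_j^\top$ are independent, only index coincidences $z_{i,a}z_{i,b}z_{j,b}z_{j,c}$ with $a=b=c$ survive the sign averages; rank-one structure could create extra coincidences, and we will check that those terms are either mean-zero under $\mat{U}$'s symmetry or of lower order. What remains is $\sum_a g_{i,a}g_{j,a}u_a^4$, with expectation $m_4\,\mathbb{E}[\mat{g}_i]^\top\mathbb{E}[\mat{g}_j]$ summed coordinatewise. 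Hence the $B(B-1)/B^2$ cross terms contribute about $m_4\|\nabla F\|^2$. Subtracting $\|\nabla F\|^2$ from the mean leaves $(m_4-1)\|\nabla F\|^2$; this is the non-averaging residual from the shared $\mat{U}$ mentioned after Eq.~\eqref{eq:grzo_estimator}.

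Finally we combine the pieces. The diagonal excess $m_4\nu^2/B$ and the $m_4\|\nabla F\|^2/B$ part are regrouped: the $m_4$ pieces combine with the cross-term count $(B-1)/B$ into $(m_4-1)(\|\nabla F\|^2+\nu^2/B)$, and the remainder becomes $\tfrac{D-1}{B}(\|\nabla F\|^2+\nu^2)$.

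The main obstacle is the off-diagonal fourth-moment calculation. The Flipout sign matrices are rank-one, $\mat{r}\mat{s}^\top$, rather than i.i.d. entries, so $m_{i,a}m_{i,b}$ is not zero-mean when $a,b$ share a row or column. I would handle this by indexing coordinates as $(p,q)$ and verifying that a surviving sign product forces a pairing with the other example's signs. Such a pairing requires index equalities, and the contributions with $a\ne c$ then carry an odd power of some $U_{pq}$ and vanish. If some terms survive, I would bound them by Cauchy--Schwarz into the $(m_4-1)$ term, possibly loosening the constants.
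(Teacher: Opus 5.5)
\textbf{There is a genuine gap in your final step.} Your Taylor reduction and remainder handling coincide with the paper's, and so do your two moment identities: diagonal $(D-1+m_4)\|\mat{g}_i\|^2$ and cross $m_4\langle\mat{g}_i,\mat{g}_j\rangle$. You center at the unconditional mean $\nabla F$. Adding your diagonal bound and your cross contribution, then subtracting $\|\nabla F\|^2$, gives
\begin{align*}
&\tfrac{D-1+m_4}{B}\bigl(\|\nabla F\|^2+\nu^2\bigr)+\tfrac{B-1}{B}\,m_4\|\nabla F\|^2-\|\nabla F\|^2\\
&\quad=\tfrac{D-1}{B}\bigl(\|\nabla F\|^2+\nu^2\bigr)\\
&\qquad+(m_4-1)\bigl(\|\nabla F\|^2+\tfrac{\nu^2}{B}\bigr)+\tfrac{\nu^2}{B}.
\end{align*}
So your claimed regrouping silently drops a $\nu^2/B$. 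Better bookkeeping cannot remove it. The leftover is $\mathrm{Var}_\xi(\bar{\mat{g}})$, the data-sampling variance of the mini-batch mean gradient, and it is genuinely part of the unconditional variance.

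A concrete counterexample: take $D=1$, Rademacher base ($m_4=1$), and the linear loss $\ell(\theta;\xi)=\xi\theta$ with $\mathrm{Var}(\xi)=\nu^2$, so $\rho=0$. Then $z_i^2=1$ and $\hat{\mat{g}}_{\mathrm{can}}=\frac1B\sum_i\xi_i$, which has variance $\nu^2/B>0$. The right-hand side of the theorem is $0$. So under your reading of $\mathrm{Var}$ the statement is false, and your route cannot reach it.

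\textbf{The missing idea is the paper's conditioning convention.} The variance in the theorem is the perturbation variance with the mini-batch $\{\xi_i\}$ held fixed, centered at $\bar{\mat{g}}=\frac1B\sum_i\mat{g}_i$. The data expectation is applied only at the end.

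Let $Q=\frac1B\sum_i\mat{z}_i\mat{z}_i^\top\mat{g}_i$. With this centering, your own moment formulas give $\mathbb{E}[\|Q\|^2\mid\xi]=\frac{D-1}{B^2}\sum_i\|\mat{g}_i\|^2+m_4\|\bar{\mat{g}}\|^2$. The conditional variance is therefore $\frac{D-1}{B^2}\sum_i\|\mat{g}_i\|^2+(m_4-1)\|\bar{\mat{g}}\|^2$. These are exactly the paper's per-example term (I) and shared-base term (II) from the law of total variance over $\mat{U}$.

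Only then take $\mathbb{E}_\xi$, using $\mathbb{E}_\xi\sum_i\|\mat{g}_i\|^2\le B(\|\nabla F\|^2+\nu^2)$ and $\mathbb{E}_\xi\|\bar{\mat{g}}\|^2\le\|\nabla F\|^2+\nu^2/B$, together with $m_4-1\ge 0$. This yields the stated bound exactly. If you want the unconditional variance instead, the correct conclusion of your computation is the stated bound plus $\nu^2/B$. The paper mentions this term and only absorbs it informally into $(D-1)\nu^2/B$. Either way, you must say explicitly which variance you are bounding.

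\textbf{The ``main obstacle'' you flag is not real, and its premise is wrong.} For $a=(p,q)$ and $b=(p,q')$ with $q\neq q'$, $m_{i,a}m_{i,b}=(\mat{s}_i)_q(\mat{s}_i)_{q'}$ has mean zero. So rank-one sign patterns still have identity second moments. They differ from i.i.d.\ signs only in fourth-order products of a single example's signs, and none of those ever appears:
\begin{itemize}
\item In the diagonal term, $\|\mat{z}_i\|^2=\|\mat{u}\|^2$ contains no signs.
\item In the cross term, each example contributes only two of its own signs, and $\mat{z}_i,\mat{z}_j$ are conditionally independent given $\mat{U}$. Hence $\mathbb{E}[\mat{z}_i\mat{z}_i^\top\mat{z}_j\mat{z}_j^\top\mid\mat{U}]=\mathrm{diag}(\mat{u}^2)^2$ immediately.
\end{itemize}
This is how the paper obtains $m_4\langle\mat{g}_i,\mat{g}_j\rangle$, with no index-coincidence analysis needed.
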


\paragraph{Which term shrinks with $B$.}
The first term is the per-example estimation noise: the $B$ conditionally decorrelated directions of Lemma~\ref{lem:flipout_indep} average it down by $1/B$. The second term is the fluctuation of the \emph{shared} base $\mat{U}$, which every example sees identically and which per-example averaging therefore cannot remove. For Rademacher base noise $m_4{=}1$ and this term vanishes identically, leaving the pure $1/B$ law; for Gaussian noise $m_4{=}3$ and it is a batch-independent floor. Its share of the bound is $O(B/D)$---below $10^{-8}$ for an 8B backbone at $B{=}16$---so it is numerically irrelevant at LLM scale, but it is a second reason, beyond the memory argument of Section~\ref{sec:ablation}, to take Rademacher as the default.

\begin{proposition}[Effect of group-relative normalization]
\label{prop:grzo_norm}
Let $\kappa_\delta=\mathbb{E}[\delta_i^4]/(\mathbb{E}[\delta_i^2])^2$, $s_\star^2=\mathbb{E}[s^2]$, $c_\star=1/(s_\star+\epsilon)$, and let $d_{\mathrm{eff}}=\|\nabla F\|^4/\sum_k(\partial_k F)^4$ be the participation ratio of the gradient. Then
\small
\begin{equation}
\frac{\mathrm{std}(s)}{s_\star}\;\le\;\sqrt{\frac{\kappa_\delta-1}{4B}}\;+\;O\bigl(d_{\mathrm{eff}}^{-1/2}\bigr),\nonumber
\end{equation}
\normalsize
and consequently, with $\Delta=O(B^{-1}+d_{\mathrm{eff}}^{-1})$,
$\mathbb{E}[\langle\nabla F_\sigma,\hat{\mat{g}}_{\mathrm{GRZO}}\rangle] = c_\star(1+\Delta)\|\nabla F_\sigma\|^2+O(\sigma^2)$ and
$\mathrm{Var}(\hat{\mat{g}}_{\mathrm{GRZO}})\le c_\star^2(1+\Delta)\,\mathrm{Var}(\hat{\mat{g}}_{\mathrm{can}})$.
\end{proposition}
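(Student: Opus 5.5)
The plan is a delta-method (ratio) argument on the normalizer. I first control the fluctuation of $s^2$ around $s_\star^2$, then transfer that control to $1/(s+\epsilon)$, and finally to the first two moments of $\hat{\mat{g}}_{\mathrm{GRZO}}=\hat{\mat{g}}_{\mathrm{can}}/(s+\epsilon)$.

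\textbf{Step 1: variance of $s^2$.} Write $s^2=\frac1B\sum_i\delta_i^2-\bar\delta^2$. By the $\mat{z}_i\leftrightarrow-\mat{z}_i$ symmetry we have $\mathbb{E}[\delta_i]=0$, so $\bar\delta^2$ has mean $O(\mathbb{E}[\delta^2]/B)$ and contributes only a lower-order correction. The main term splits into two pieces.
\begin{itemize}
\item \emph{Conditional on $\mat{U}$.} Lemma~\ref{lem:flipout_indep} gives $\mathbb{E}[\mat{z}_i\mat{z}_j^\top\mid\mat{U}]=0$, and the sign pairs $(\mat{r}_i,\mat{s}_i)$ are independent across $i$. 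So, to leading order in $\sigma$ (where $\delta_i\approx 2\sigma\langle\nabla F_i,\mat{z}_i\rangle$), the $\delta_i^2$ are conditionally independent. Their average then has variance $(\mathbb{E}\delta^4-(\mathbb{E}\delta^2)^2)/B=(\kappa_\delta-1)(\mathbb{E}\delta^2)^2/B$.
\item \emph{The shared base.} The law of total variance adds $\mathrm{Var}_{\mat{U}}(\mathbb{E}[\delta_i^2\mid\mat{U}])$. Here $\mathbb{E}[\delta_i^2\mid\mat{U}]\approx 4\sigma^2\sum_k(\partial_kF)^2U_k^2$ plus data-noise terms. Its relative variance is $(m_4-1)\sum_k(\partial_kF)^4/\|\nabla F\|^4=(m_4-1)/d_{\mathrm{eff}}$, which is where the participation ratio enters.
\end{itemize}
Combining, $\mathrm{std}(s^2)/s_\star^2\le\sqrt{(\kappa_\delta-1)/B}+O(d_{\mathrm{eff}}^{-1/2})$. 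The linearization $s-s_\star\approx(s^2-s_\star^2)/(2s_\star)$ halves the relative deviation, which yields the stated $\sqrt{(\kappa_\delta-1)/(4B)}$. To make this rigorous rather than heuristic, I would use $|s-s_\star|\le|s^2-s_\star^2|/s_\star$ together with $\mathbb{E}[s]\le s_\star$ (Jensen).

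\textbf{Step 2: the weight $c=1/(s+\epsilon)$.} Expand around $s_\star$:
\[
\frac{1}{s+\epsilon}=c_\star\Bigl(1-c_\star(s-s_\star)+c_\star^2(s-s_\star)^2-\cdots\Bigr).
\]
Taking expectations, the linear term has mean $O(\mathrm{Var}(s)/s_\star^2)$ because $\mathbb{E}[s]-s_\star=O(\mathrm{Var}(s)/s_\star)$. The quadratic term is $O(\mathrm{Var}(s)/s_\star^2)$ as well. By Step 1 this is $O(B^{-1}+d_{\mathrm{eff}}^{-1})=\Delta$.

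\textbf{Step 3: inner product and variance.}
\begin{itemize}
\item \emph{Inner product.} Write $\langle\nabla F_\sigma,\hat{\mat{g}}_{\mathrm{GRZO}}\rangle=c\,\langle\nabla F_\sigma,\hat{\mat{g}}_{\mathrm{can}}\rangle$ and split $c=c_\star+(c-c_\star)$. The first part gives $c_\star\|\nabla F_\sigma\|^2+O(\sigma^2)$ by Theorem~\ref{thm:grzo_unbiased}. For the cross term I would use Cauchy--Schwarz, $|\mathbb{E}[(c-c_\star)X]|\le\mathrm{std}(c)\,\mathrm{std}(X)$ with $X=\langle\nabla F_\sigma,\hat{\mat{g}}_{\mathrm{can}}\rangle$. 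A crude bound on $\mathrm{std}(X)$ would cost a factor $\sqrt{D/B}$ and destroy the $\Delta$ bound.
\item \emph{Variance.} $\mathrm{Var}(c\,\hat{\mat{g}}_{\mathrm{can}})\le\mathbb{E}[c^2\|\hat{\mat{g}}_{\mathrm{can}}\|^2]-\|\mathbb{E}[c\,\hat{\mat{g}}_{\mathrm{can}}]\|^2$. Replacing $c^2$ by $c_\star^2(1+O(\mathrm{Var}(s)/s_\star^2))$ yields $c_\star^2(1+\Delta)\mathrm{Var}(\hat{\mat{g}}_{\mathrm{can}})$ once the mean terms are matched using the inner-product result.
\end{itemize}

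\textbf{Main obstacle.} The hard part is the correlation between $c$ and $\hat{\mat{g}}_{\mathrm{can}}$: the same $\delta_i$ sit in both numerator and normalizer. My first attempt is a leave-one-out decoupling. Replace $s$ by $s_{-i}$, computed without example $i$, in the $i$-th summand; then $s_{-i}$ is conditionally independent of $(\delta_i,\mat{z}_i)$ given $\mat{U}$, so only the $\mat{U}$-mediated dependence remains. The replacement error $|s-s_{-i}|=O(|\delta_i|/(B s_\star))$ contributes $O(\kappa_\delta/B)$ relatively. The residual $\mat{U}$-dependence is controlled by the $O(d_{\mathrm{eff}}^{-1/2})$ fluctuation of $\mathbb{E}[s^2\mid\mat{U}]$ from Step 1, which gives the $d_{\mathrm{eff}}^{-1}$ part of $\Delta$. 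This also needs a lower bound on $s$ with high probability, e.g.\ bounded $\kappa_\delta$ via a Paley--Zygmund argument, so that the expansion of $1/(s+\epsilon)$ is legitimate. I would state that as part of the assumptions of Appendix~\ref{app:grzo_second_moment}.
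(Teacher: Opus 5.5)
Your Steps 1 and 2 follow the paper's proof of the concentration lemma almost line by line. That proof uses the conditional i.i.d.\ structure of the $\delta_i$ given $\mat{U}$ for the $(\kappa_\delta-1)/B$ term, and the fluctuation over $\mat{U}$ of $\mathbb{E}[\delta_i^2\mid\mat{U}]\approx4\sigma^2\mat{g}_i^\top\mat{\Lambda}_{\mat{U}}\mat{g}_i$ for the $d_{\mathrm{eff}}^{-1/2}$ term. It then uses the delta method for the factor $\tfrac12$ and a second-order expansion of $1/(s+\epsilon)$ around $s_\star$.

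The divergence is in Step~3, and your reason for it is a miscalculation. The paper handles the numerator--normalizer correlation with exactly the Cauchy--Schwarz step you set aside. It writes $\mathbb{E}[X/(s+\epsilon)]-c_\star\mathbb{E}[X]=\mathrm{Cov}(1/(s+\epsilon),X)+(\mathbb{E}[1/(s+\epsilon)]-c_\star)\,\mathbb{E}[X]$ and bounds the covariance by $\mathrm{std}(1/(s+\epsilon))\,\mathrm{std}(X)$. This does not cost $\sqrt{D/B}$, for three reasons:
\begin{itemize}
\item $X$ is the projection of $\hat{\mat{g}}_{\mathrm{can}}$ onto the fixed vector $\nabla F_\sigma$.
\item Each summand $\langle\mat{g}_i,\mat{z}_i\rangle\langle\nabla F_\sigma,\mat{z}_i\rangle$ has second moment $O(\|\mat{g}_i\|^2\|\nabla F_\sigma\|^2)$, with no factor of $D$; that factor enters only through $\|\mat{z}_i\|^2$ in $\|\hat{\mat{g}}_{\mathrm{can}}\|^2$.
\item The summands are independent given $\mat{U}$.
\end{itemize}
Hence $\mathrm{std}(X)/\mathbb{E}[X]=O(B^{-1/2}+d_{\mathrm{eff}}^{-1/2})$. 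The paper writes only $O(B^{-1/2})$, omitting the shared-base part, which vanishes for Rademacher noise. Multiplying by $\mathrm{std}(1/(s+\epsilon))/c_\star=O(B^{-1/2}+d_{\mathrm{eff}}^{-1/2})$ lands in $\Delta$.

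For the variance claim you need the same covariance bound explicitly; you cannot simply replace $c^2$ by its mean. The reason is that $\|\hat{\mat{g}}_{\mathrm{can}}\|^2\approx\frac{\|\mat{u}\|^2}{4\sigma^2B^2}\sum_i\delta_i^2$ moves almost in lockstep with $s^2$. Cauchy--Schwarz again suffices, because both factors fluctuate at relative order $B^{-1/2}+d_{\mathrm{eff}}^{-1/2}$.

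Your leave-one-out decoupling is still a valid alternative. What it buys over the paper's one-line Cauchy--Schwarz is an explicit account of where $\Delta$ comes from. With $s_{-i}$ in the $i$-th summand, numerator and normalizer interact only through the shared $\mat{U}$ (the $d_{\mathrm{eff}}^{-1}$ part) and through example $i$'s own influence on $s$ (the $B^{-1}$ part).

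Three details need repair.
\begin{itemize}
\item With $\bar\delta_{-i}$ the leave-one-out mean, $s^2-s_{-i}^2=\frac1B\bigl[(1-\frac1B)(\delta_i-\bar\delta_{-i})^2-s_{-i}^2\bigr]$. So the relative perturbation is $|s-s_{-i}|/s_\star\approx|\delta_i^2-s_\star^2|/(2Bs_\star^2)$; your $O(|\delta_i|/(Bs_\star))$ is not even dimensionally consistent. This still gives a relative $O(B^{-1})$ error under fourth-moment bounds.
\item The lower bound on $s$ with probability $1-O(B^{-1}+d_{\mathrm{eff}}^{-1})$ comes from Chebyshev: $\Pr(s^2<s_\star^2/2)\le4\,\mathrm{Var}(s^2)/s_\star^4$. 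Paley--Zygmund only gives a constant probability.
\item Your proposed rigorization $|s-s_\star|\le|s^2-s_\star^2|/s_\star$ proves $\mathrm{std}(s)/s_\star\le\mathrm{std}(s^2)/s_\star^2$, i.e.\ $\sqrt{(\kappa_\delta-1)/B}$ without the $\tfrac14$. The factor $\tfrac12$ holds only to leading order. That is also how the paper gets it: the delta method, with an $O(B^{-1})$ remainder that the proposition drops.
\end{itemize}
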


The scalar $1/(s+\epsilon)$ is random and correlated with $\hat{\mat{g}}_{\mathrm{can}}$, so it cannot simply be absorbed into the learning rate. Proposition~\ref{prop:grzo_norm} states what it does instead: it concentrates around $c_\star$ at rate $O(B^{-1/2})$ and perturbs both the descent direction and the variance by a relative $O(1/B)$, changing constants but neither the $1/B$ scaling of Theorem~\ref{thm:grzo_var} nor the rate below. This correction grows as $B$ shrinks, which is the mechanism behind the small-batch instability of Appendix~\ref{app:bs-ablation}; the constant $\epsilon>0$ additionally caps $1/(s+\epsilon)\le 1/\epsilon$ and rules out tail blow-up.

\begin{assumption}
\label{ass:convergence}
$F_\sigma$ is $\mathcal{L}$-smooth and lower bounded by $F_\sigma^\star$; per-example gradients satisfy $\mathbb{E}\|\nabla\ell(\boldsymbol{\theta};\xi)-\nabla F(\boldsymbol{\theta})\|^2\le\nu^2$; and $\eta\le\min\{\tfrac{1}{2\mathcal{L}},\tfrac{1}{8\mathcal{L}\kappa}\}$ with $\kappa=\tfrac{D-1}{B}+m_4-1$.
\end{assumption}

\begin{theorem}[Nonconvex Convergence]
\label{thm:grzo_conv}
Under Assumption~\ref{ass:convergence}, GRZO iterations satisfy
\begin{align}
\frac{1}{T}\sum_{t=0}^{T-1}\mathbb{E}\big\|\nabla F_\sigma(\boldsymbol{\theta}_t)\big\|^2
&\;\le\;
\frac{8\bigl(F_\sigma(\boldsymbol{\theta}_0)-F_\sigma^\star\bigr)}{\eta T} \notag\\
&\quad + 4\mathcal{L}\eta\,\beta,
\label{eq:grzo_conv}
\end{align}
where $\beta=\frac{D+m_4-2}{B}\nu^2+2\kappa c_{\mathrm{bias}}^2\sigma^4D^2+O(\rho^2\sigma^4D^4)$.
(Proof: Theorem~\ref{thm:grzo_network_convergence}, Appendix~\ref{app:blockwise_full_network}.)
\end{theorem}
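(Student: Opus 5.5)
The plan is the standard one-step descent-lemma argument for nonconvex SGD, with Theorem~\ref{thm:grzo_unbiased}, Theorem~\ref{thm:grzo_var} and Proposition~\ref{prop:grzo_norm} supplying the first and second moments of the GRZO direction. The update is $\boldsymbol{\theta}_{t+1}=\boldsymbol{\theta}_t-\eta\hat{\mat{g}}_t$. Since $F_\sigma$ is $\mathcal{L}$-smooth (Assumption~\ref{ass:convergence}),
\begin{equation*}
F_\sigma(\boldsymbol{\theta}_{t+1})\le F_\sigma(\boldsymbol{\theta}_t)-\eta\langle\nabla F_\sigma(\boldsymbol{\theta}_t),\hat{\mat{g}}_t\rangle+\tfrac{\mathcal{L}\eta^2}{2}\|\hat{\mat{g}}_t\|^2 .
\end{equation*}
I will take conditional expectation given $\boldsymbol{\theta}_t$. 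Throughout I will work with $\hat{\mat{g}}_{\mathrm{can}}$ and transfer the result to $\hat{\mat{g}}_{\mathrm{GRZO}}$ at the end. Proposition~\ref{prop:grzo_norm} says the normalization rescales the inner product by $c_\star(1+\Delta)$ and the variance by $c_\star^2(1+\Delta)$. These factors can be absorbed into an effective step size $\eta c_\star$, with the $(1+\Delta)$ slack covered by the numerical constants (the $8$ and $4$ in \eqref{eq:grzo_conv} leave room for this).

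\textbf{Inner-product term.} By Theorem~\ref{thm:grzo_unbiased}, $\mathbb{E}[\hat{\mat{g}}]=\nabla F_\sigma+b$, where the bias satisfies $\|b\|\le c_{\mathrm{bias}}\sigma^2 D$. This comes from the third-order Taylor remainder of the two-sided difference: a remainder of order $\rho\sigma^2\|\mat{z}\|^3$ multiplied by $\mat{z}$, then averaged. Using Young's inequality, $\langle\nabla F_\sigma,b\rangle\ge-\tfrac12\|\nabla F_\sigma\|^2-\tfrac12\|b\|^2$. This gives descent $-\tfrac{\eta}{2}\|\nabla F_\sigma\|^2$ plus a bias cost of order $\eta c_{\mathrm{bias}}^2\sigma^4D^2$.

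\textbf{Second-moment term.} I split $\mathbb{E}\|\hat{\mat{g}}\|^2=\|\mathbb{E}\hat{\mat{g}}\|^2+\mathrm{Var}(\hat{\mat{g}})$ and bound the variance with Theorem~\ref{thm:grzo_var}. Its gradient-dependent part is $\kappa\|\nabla F\|^2$ with $\kappa=\tfrac{D-1}{B}+m_4-1$. Its noise part is $\tfrac{D-1}{B}\nu^2+\tfrac{m_4-1}{B}\nu^2=\tfrac{D+m_4-2}{B}\nu^2$, which is exactly the $\nu^2$ coefficient in $\beta$. The remainder is $O(\rho^2\sigma^4D^4)$. I then replace $\|\nabla F\|^2$ by $2\|\nabla F_\sigma\|^2+2\|\nabla F-\nabla F_\sigma\|^2$. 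The second piece is again a smoothing bias of order $c_{\mathrm{bias}}^2\sigma^4D^2$, and multiplied by $\kappa$ it produces the $2\kappa c_{\mathrm{bias}}^2\sigma^4D^2$ term in $\beta$.

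\textbf{Combining and telescoping.} The quadratic term contributes $\mathcal{L}\eta^2(1+2\kappa)\|\nabla F_\sigma\|^2$. The step-size conditions $\eta\le 1/(2\mathcal{L})$ and $\eta\le 1/(8\mathcal{L}\kappa)$ make this at most $\tfrac{\eta}{4}\|\nabla F_\sigma\|^2$ plus slack, leaving net descent of at least $\tfrac{\eta}{8}\|\nabla F_\sigma\|^2$. Rearranging gives
\begin{equation*}
\tfrac{\eta}{8}\mathbb{E}\|\nabla F_\sigma(\boldsymbol{\theta}_t)\|^2\le\mathbb{E}[F_\sigma(\boldsymbol{\theta}_t)-F_\sigma(\boldsymbol{\theta}_{t+1})]+\tfrac{\mathcal{L}\eta^2}{2}\beta' .
\end{equation*}
I then telescope over $t=0,\dots,T-1$, use $F_\sigma\ge F_\sigma^\star$, and divide by $\eta T/8$ to obtain \eqref{eq:grzo_conv}.

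\textbf{Main obstacle.} The first-order bias cost $\eta\|b\|^2$ is not multiplied by $\mathcal{L}\eta$, so it does not naturally take the form $4\mathcal{L}\eta\beta$. I would handle this in one of two ways. The first is to exploit the fact that for Rademacher base noise the $O(\sigma^2)$ bias is absorbed into $\nabla F_\sigma$ (it vanishes for Gaussian noise), so only the curvature remainder survives, and that remainder is already inside $O(\rho^2\sigma^4D^4)$. The second is to charge the bias through the $\kappa$-weighted variance term as described above. A second delicate point is that $1/(s+\epsilon)$ is correlated with $\hat{\mat{g}}_{\mathrm{can}}$. Here I rely entirely on Proposition~\ref{prop:grzo_norm} and do not re-derive the decoupling.
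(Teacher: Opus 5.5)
Your route is the paper's own: the descent lemma for $F_\sigma$, Young's inequality on the bias, the second moment split into squared mean plus the variance bound of Theorem~\ref{thm:grzo_var}, the substitution $\|\nabla F\|^2\le 2\|\nabla F_\sigma\|^2+2c_{\mathrm{bias}}^2\sigma^4D^2$ (so $\alpha=2\kappa$), then telescoping. Your $\nu^2$ coefficient and the $2\kappa c_{\mathrm{bias}}^2\sigma^4D^2$ term reproduce $\beta$ exactly.

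Two steps, however, do not go through as written. The first is the constant bookkeeping in the combining step.
\begin{itemize}
\item After Young with weight $\tfrac12$ you keep only $-\tfrac{\eta}{2}\|\nabla F_\sigma\|^2$.
\item Your quadratic coefficient $\mathcal{L}\eta^2(1+2\kappa)$ is, under $\eta\le 1/(2\mathcal{L})$ and $\eta\le 1/(8\mathcal{L}\kappa)$, bounded only by $\tfrac{\eta}{2}+\tfrac{\eta}{4}$. That exceeds $\tfrac{\eta}{2}$, so there is no net descent at all.
\item The paper reaches $\tfrac{\eta}{8}$ by writing $\mathbb{E}\|\hat{\mat{g}}_t\|^2=\|\nabla F_\sigma\|^2+\mathbb{E}\|\hat{\mat{g}}_t-\nabla F_\sigma\|^2$. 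This puts coefficient $\tfrac{\mathcal{L}\eta^2}{2}$ on $\|\nabla F_\sigma\|^2$, with the bias ignored in that split, giving $\tfrac12-\tfrac14-\tfrac18=\tfrac18$ with zero slack.
\item Because there is zero slack, your parenthetical that the $8$ and $4$ leave room for the $(1+\Delta)$ factors is not right. The paper pushes those into $c_t$ and into $\alpha,\beta$ instead.
\item If you keep the bias honestly via $\|\nabla F_\sigma+b\|^2\le 2\|\nabla F_\sigma\|^2+2\|b\|^2$, you must rebalance Young to $-\langle\nabla F_\sigma,b\rangle\le\tfrac14\|\nabla F_\sigma\|^2+\|b\|^2$. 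Then $\tfrac34-\tfrac12-\tfrac18=\tfrac18$, the constants $8$ and $4$ come out exactly, and a leftover of $\tfrac{3\eta}{2}\|b\|^2$ per step remains.
\end{itemize}

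The second problem is that leftover, which is exactly the obstacle you flag, and it is real. The paper's proof still carries $+O(\eta\sigma^4)$ in \eqref{eq:app_descent_bias_absorbed} and then silently drops it in \eqref{eq:app_descent_final}. That amounts to your option (1). Neither of your fixes works.
\begin{itemize}
\item Option (2) would need $\|b\|^2\lesssim\mathcal{L}\eta\kappa\,c_{\mathrm{bias}}^2\sigma^4D^2$. With your own bound on $\|b\|$ this means $\mathcal{L}\eta\kappa\gtrsim 1$, which is precisely what $\eta\le 1/(8\mathcal{L}\kappa)$ forbids.
\item Option (1) rests on a false claim. Only Gaussian noise gives $b=0$, via Stein's identity. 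For Rademacher noise, already with $D=1$ and $F(\theta)=\theta^3$, the two-sided estimator has mean $3\theta^2+\sigma^2$ while $\nabla F_\sigma=3\theta^2+3\sigma^2$.
\item $\|b\|$ is governed by the Taylor remainder, of order $\rho\sigma^2D^2$, not by the smoothing-bias constant $c_{\mathrm{bias}}\sigma^2D$ you use. So $\|b\|^2$ does have size $\rho^2\sigma^4D^4$.
\item Even so, after dividing by $\eta T/8$ it survives as $12\sup_t\|b_t\|^2$ with no $\mathcal{L}\eta$ prefactor. It is therefore not contained in $4\mathcal{L}\eta\,O(\rho^2\sigma^4D^4)$ unless the hidden constant grows like $1/(\mathcal{L}\eta)$.
\end{itemize}
What the argument actually proves is the stated bound plus an additive $O(\sup_t\|b_t\|^2)$ floor. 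For Gaussian base noise the canonical estimator has $b\equiv 0$, so no Young step is needed and the net descent is $\tfrac{5\eta}{8}$. In that case the stated bound holds as written, modulo the normalization step that you, like the paper, take from Proposition~\ref{prop:grzo_norm}.
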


\paragraph{Comparison with MeZO.}
At the same batch size, MeZO draws a single direction $\mat{z}$ and evaluates the batch-mean loss, so its estimator is $\langle\bar{\mat{g}},\mat{z}\rangle\mat{z}$ with $\bar{\mat{g}}=\frac{1}{B}\sum_i\mat{g}_i$, giving $\mathrm{Var}(\hat{\mat{g}}_{\mathrm{MeZO}})\approx(D{+}m_4{-}2)(\|\nabla F\|^2+\nu^2/B)$. MeZO therefore already averages the \emph{data} noise over the mini-batch; what GRZO additionally averages is the \emph{directional} noise. Writing $r=\nu^2/\|\nabla F\|^2$, the ratio of the two bounds at the Rademacher default ($m_4{=}1$, where the shared-base term vanishes) is
\begin{equation}
\frac{\mathrm{Var}(\hat{\mat{g}}_{\mathrm{MeZO}})}{\mathrm{Var}(\hat{\mat{g}}_{\mathrm{can}})}\;\approx\;\frac{B+r}{1+r}\;=\;cB+(1-c)\;=:\;B_{\mathrm{eff}},
\label{eq:beff}
\end{equation}
where $c=1/(1+r)$ is the mean pairwise cosine between per-example gradients. This gives the informal ``$B$ effective directions'' of Section~\ref{sec:method} a measurable meaning: $B_{\mathrm{eff}}=B$ when per-example gradients are perfectly aligned ($c{=}1$) and degrades to $1$ when they are mutually orthogonal ($c{=}0$), with $c$ estimable directly from per-example gradients on the target model; Appendix~\ref{app:cosine} measures both $c$ and the resulting alignment with the exact backpropagation gradient. With $\eta\propto 1/\sqrt{T}$, \eqref{eq:grzo_conv} then improves MeZO's stationarity bound by $\sqrt{B_{\mathrm{eff}}}$: GRZO reaches an $\epsilon$-stationary point in $B_{\mathrm{eff}}\times$ fewer steps at exactly the same two forward passes per step, without extra perturbation queries. Proposition~\ref{prop:grzo_norm} additionally requires $B$ large enough for $s$ to concentrate, consistent with the $B\ge 16$ recommendation of Section~\ref{sec:grzo_bs_sensitivity}.

\section{Experiments}
\label{sec:experiments}

We consider two architectural families. Masked language models such as BERT~\citep{devlin2019bert} and RoBERTa~\citep{liu2019roberta} learn bidirectional representations under a masked-token objective; auto-regressive language models such as Llama~\citep{grattafiori2024llama} and OPT~\citep{zhang2022opt} predict the next token. We benchmark GRZO under full-parameter fine-tuning (Section~\ref{sec:lm_results}), report memory and per-step time on Llama3-8B (Section~\ref{sec:memory-analysis}), show GRZO composes with parameter-efficient ZO variants such as Sparse-MeZO, LOZO, and QuZO (Section~\ref{sec:composing}), and ablate the components of GRZO (Section~\ref{sec:ablation}).

\noindent \textbf{Setup.}
We compare against first-order baselines (Adam~\citep{kingma2014adam}, LoRA~\citep{hu2022lora}) and zeroth-order baselines (MeZO~\citep{malladi2023fine}, FZOO~\citep{dang2025fzoo}) on classification tasks from GLUE~\citep{wang2018glue} (SST-2) and SuperGLUE~\citep{wang2019superglue} (RTE, CB, BoolQ, WiC, MultiRC, COPA), and QA tasks SQuAD~\citep{rajpurkar2016squad} and DROP~\citep{dua2019drop}. On the autoregressive models all methods train for $20k$ steps at batch size 16 in FP16; ZO methods use perturbation scale $\sigma{=}10^{-3}$. For RoBERTa-large we follow the $k{=}512$ few-shot protocol of \citet{malladi2023fine}, including its batch size of 64. Hyperparameter details are in Appendix~\ref{app:exp_settings}.

\subsection{Accuracy and Convergence}
\label{sec:lm_results}

\begin{table}[t]
\centering
\small
\caption{Results on RoBERTa-large (350M, $k$=512). FO methods are marked with orange bullets. Among ZO methods, the highest accuracy is highlighted in \textbf{bold}.}
\label{tab:roberta-table}
\setlength{\tabcolsep}{3.5pt}
\renewcommand{\arraystretch}{1.12}
\begin{tabular}{@{}lcccccc@{}}
\toprule
\rowcolor{tableheader}
\textbf{Method} & \multicolumn{2}{c}{\textbf{Sentiment}} & \multicolumn{3}{c}{\textbf{NLI}} & \textbf{Topic} \\
\cmidrule(lr){2-3}\cmidrule(lr){4-6}\cmidrule(lr){7-7}
\rowcolor{tableheader}
 & \textbf{SST-2} & \textbf{SST-5} & \textbf{SNLI} & \textbf{MNLI} & \textbf{RTE} & \textbf{TREC} \\
\midrule
Zero-shot & 79.0 & 35.5 & 50.2 & 48.8 & 51.4 & 32.0 \\
\forow LP(FO) & 91.3 & 51.7 & 80.9 & 71.5 & 73.1 & 89.4 \\
\midrule
\forow FT(FO) & 91.9 & 47.5 & 77.5 & 70.0 & 66.4 & 85.0 \\
\midrule
MeZO & 92.8 & 53.2 & 83.0 & 78.3 & 78.6 & 94.3 \\
FZOO & 93.0 & 54.2 & 84.6 & \textbf{79.9} & 78.1 & \textbf{95.6} \\
\rowcolor{grzopink}
\textbf{GRZO} & \textbf{93.3} & \textbf{54.8} & \textbf{85.4} & 79.1 & \textbf{79.0} & 94.7 \\
\bottomrule
\end{tabular}
\end{table}

\paragraph{Masked Language Models.}
We evaluate GRZO on RoBERTa-large (350M) under the $k{=}512$ few-shot setting. Table~\ref{tab:roberta-table} shows GRZO outperforms MeZO on all six tasks and FZOO on four of six, with the largest gains over FZOO on RTE ($+0.9$) and SNLI ($+0.8$). Averaged across tasks, GRZO reaches 81.1, vs.\ 80.9 for FZOO and 80.0 for MeZO.

\begin{table*}[!t]
\centering
\small
\caption{Results on Llama3-8B across SuperGLUE and QA tasks. FO methods are marked with orange bullets. Among ZO methods, the highest accuracy is highlighted in \textbf{bold}.}
\label{tab:main-llm-table}
\setlength{\tabcolsep}{5.4pt}
\renewcommand{\arraystretch}{1.12}
\begin{tabular}{@{}lccccccccc@{}}
\toprule
\rowcolor{tableheader}
\textbf{Method} & \multicolumn{7}{c}{\textbf{SuperGLUE (Classification)}} & \multicolumn{2}{c}{\textbf{QA}} \\
\cmidrule(lr){2-8}\cmidrule(l){9-10}
\rowcolor{tableheader}
 & \textbf{SST-2} & \textbf{RTE} & \textbf{CB} & \textbf{BoolQ} & \textbf{WiC} & \textbf{MultiRC} & \textbf{COPA} & \textbf{SQuAD} & \textbf{DROP} \\
\midrule
\rowcolor{llamablue}
\multicolumn{10}{c}{\textbf{Llama3-8B}} \\
\forow Adam (FO) & 96.0 & 92.0 & 92.0 & 86.6 & 72.6 & 84.7 & 89 & 90.4 & 59.4 \\
\forow LoRA (FO) & 95.0 & 80.9 & 73.2 & 86.4 & 70.7 & 82.4 & 89 & 89.4 & 58.2 \\
MeZO & 92.2 & 74.4 & 69.6 & 76.7 & 57.8 & 77.6 & 88.0 & \textbf{86.7} & 57.1 \\
FZOO & 93.0 & 76.6 & 68.6 & 81.2 & 59.4 & 77.6 & \textbf{89.0} & 86.0 & 57.4 \\
\rowcolor{grzopink}
\textbf{GRZO (Ours)} & \textbf{93.4} & \textbf{81.6} & \textbf{72.0} & \textbf{81.4} & \textbf{59.8} & \textbf{78.6} & \textbf{89.0} & 86.2 & \textbf{65.0} \\
\midrule
\rowcolor{optgreen}
\multicolumn{10}{c}{\textbf{OPT-13B}} \\
\forow Adam (FO) & 95.3 & 80.9 & 94.6 & 83.5 & 66.3 & 76.2 & 88 & 89.5 & 31.3 \\
\forow LoRA (FO) & 94.8 & 78.3 & 69.6 & 80.2 & 64.3 & 69.4 & 89 & 88.0 & 30.9 \\
MeZO & 91.4 & 66.1 & 66.0 & 67.6 & \textbf{59.4} & 57.3 & \textbf{88.0} & 84.7 & 30.9 \\
FZOO & \textbf{93.8} & 76.8 & 69.6 & \textbf{72.2} & \textbf{59.4} & 57.6 & 87.0 & 84.8 & 28.7 \\
\rowcolor{grzopink}
\textbf{GRZO (Ours)} & 93.4 & \textbf{78.0} & \textbf{70.2} & 70.4 & 58.6 & \textbf{57.8} & \textbf{88.0} & \textbf{85.2} & \textbf{32.8} \\
\bottomrule
\end{tabular}
\end{table*}

\paragraph{Auto-Regressive Language Models.}
We expand to Llama3-8B and OPT-13B on the same SuperGLUE+QA suite. Table~\ref{tab:main-llm-table} shows GRZO is the best ZO method on 8/9 Llama3-8B tasks (average 78.6 vs.\ 76.5 FZOO and 75.6 MeZO, $+2.0$ under the same two-forward-pass budget) and 6/9 OPT-13B tasks (average 70.5 vs.\ 70.0 FZOO and 67.9 MeZO). The largest gains over FZOO appear on tasks requiring deeper understanding: $+3.4$ CB, $+5.0$ RTE, $+7.6$ DROP on Llama3-8B; $+4.1$ DROP, $+1.2$ RTE, $+1.0$ COPA on OPT-13B.

\begin{figure*}[!t]
\centering
\includegraphics[width=\textwidth]{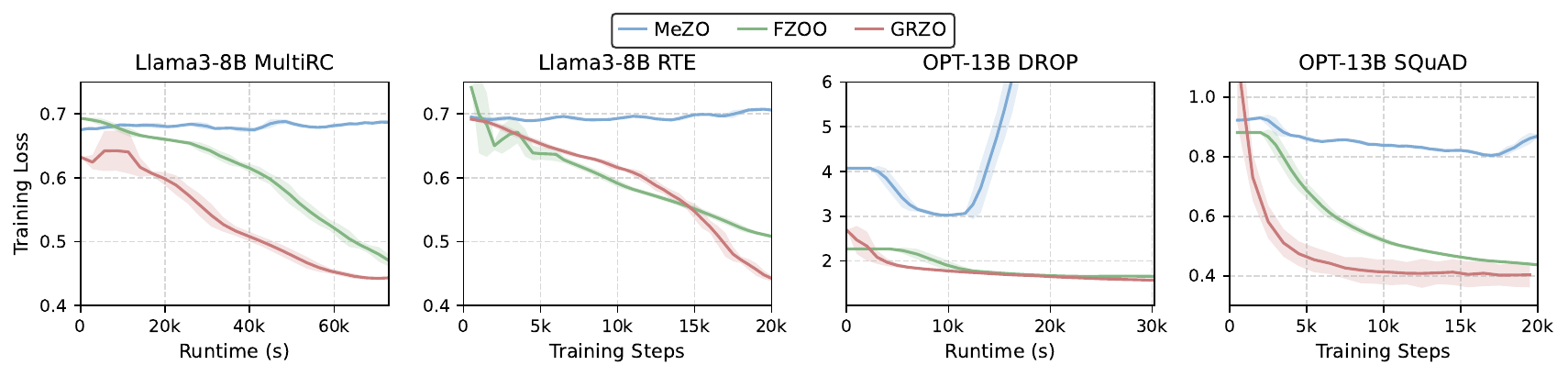}
\caption{Training-loss curves on Llama3-8B (RTE, MultiRC) and OPT-13B (SQuAD, DROP) plotted against training steps and wall-clock time.}
\label{fig:convergence}
\end{figure*}

\paragraph{Convergence and Wall-Clock Efficiency.}
Figure~\ref{fig:convergence} plots training-loss curves on Llama3-8B (RTE, MultiRC) and OPT-13B (SQuAD, DROP) against both training steps and wall-clock seconds. GRZO descends faster than FZOO and MeZO across all four panels and reaches a lower final loss on Llama-RTE, Llama-MultiRC, and OPT-SQuAD; on Llama-MultiRC in particular, GRZO matches the final loss of FZOO in roughly half the wall-clock time. MeZO fails to descend on OPT-DROP and makes no measurable progress on Llama-RTE or Llama-MultiRC.

\subsection{Memory and Time Analysis}
\label{sec:memory-analysis}

GRZO holds inference-level memory at a modest per-step time cost. Figure~\ref{fig:profile} reports a production profile on Llama3-8B; Figure~\ref{fig:gpu-memory} extends the memory picture across model sizes.
\begin{figure}[t]
    \includegraphics[width=\columnwidth]{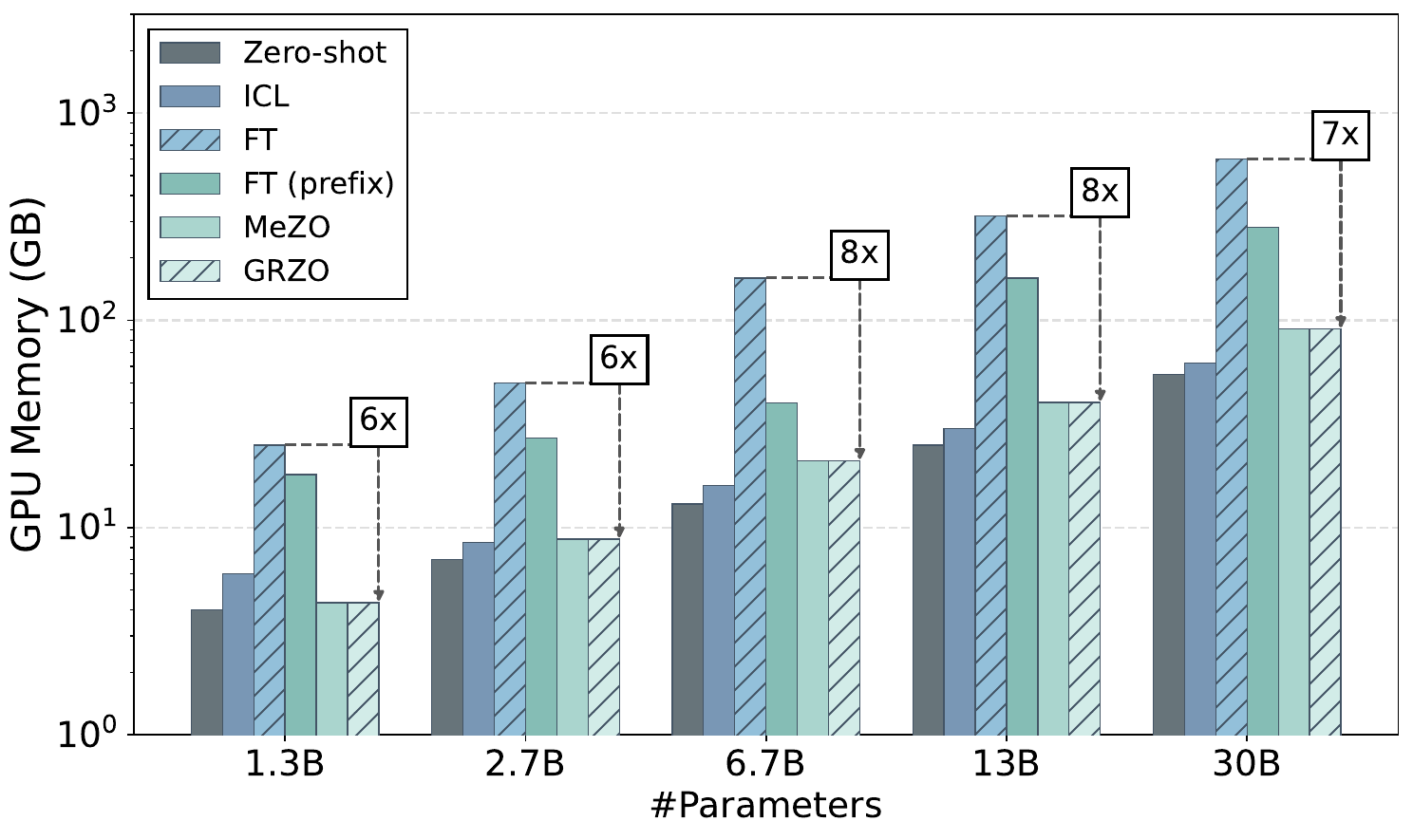}
    \caption{Peak GPU memory (GB) vs.\ model size for OPT (1.3B--30B). GRZO tracks the forward-only inference footprint across scales, so the $B$ per-example directions are obtained without departing from inference-level memory.}
    \label{fig:gpu-memory}
    \vspace{-10pt}
\end{figure}

\begin{figure*}[t]
\centering
\begin{subfigure}[b]{0.49\textwidth}
    \centering
    \includegraphics[width=\linewidth]{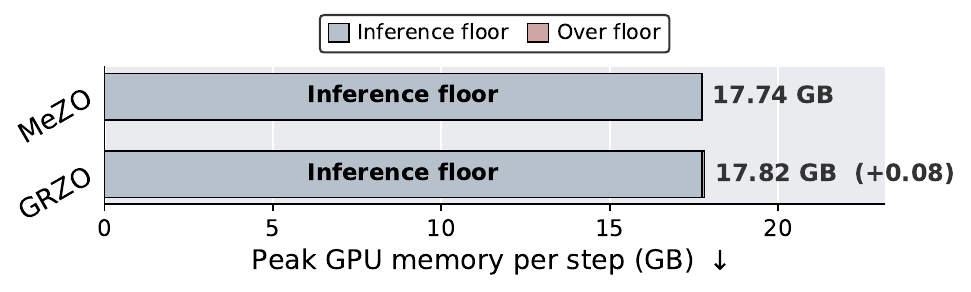}
    \caption{Peak GPU memory breakdown.}
    \label{fig:memory-profile}
\end{subfigure}
\hfill
\begin{subfigure}[b]{0.49\textwidth}
    \centering
    \includegraphics[width=\linewidth]{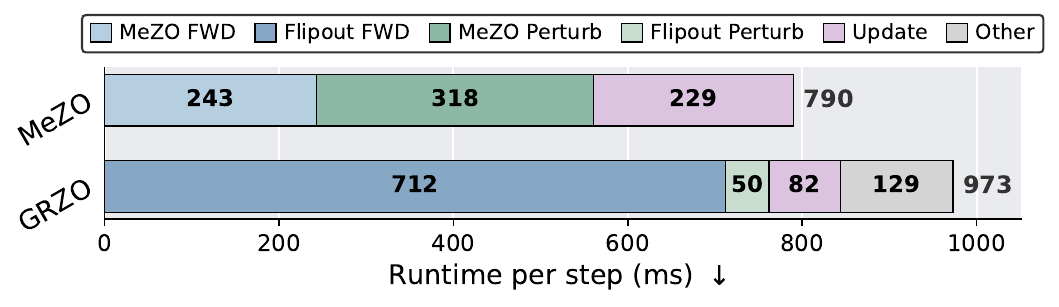}
    \caption{Per-step time breakdown.}
    \label{fig:time-breakdown}
\end{subfigure}
\caption{Production profile on Llama3-8B (RTE, fp16, $B{=}16$, 4$\times$A100, mean over 20 steps $\times$ 4 ranks). \textbf{Left}: peak GPU memory per step, split into the forward-only inference floor and the overhead above it. \textbf{Right}: per-step time decomposition.}
\label{fig:profile}
\end{figure*}

\paragraph{Memory.}
We take the \emph{forward-only inference peak} as the reference point, since no forward-only optimizer can go below it; on Llama3-8B (RTE, fp16, $B{=}16$) it is $17.74$~GB, which already includes activations and workspace on top of the $16.0$~GB of weights. Fig.~\ref{fig:memory-profile} shows that vanilla GRZO peaks at $17.82$~GB, i.e.\ $+0.08$~GB or $0.5\%$ above this floor, so extracting $B$ per-example directions costs essentially nothing in memory. A memory-optimal in-place MeZO also sits at the floor ($17.74$~GB, $+0.00$~GB): GRZO does not use \emph{less} memory than MeZO, it obtains $B$ directions instead of one at the same footprint. We profile MeZO with a fully in-place perturb--restore--update implementation so that it attains its true inference-level optimum. The picture carries over to the combined variants (Section~\ref{sec:composing}): the GRZO family stays within $0.7\%$ of the floor wherever the paired variant does, and where a variant adds its own scratch storage---Sparse and quantized perturbations---GRZO is the cheaper of the pair by $0.3$ and $1.5$~GB, the latter within the fake-quant regime discussed in Appendix~\ref{app:profile_breakdown}. Figure~\ref{fig:gpu-memory} shows that GRZO tracks the inference footprint across OPT 1.3B--30B, staying $6$--$8\times$ below full fine-tuning.

\paragraph{Per-Step Time.}
Figure~\ref{fig:time-breakdown} decomposes the per-step wall-clock cost. A fully in-place MeZO completes in $790$~ms; GRZO fuses the per-example perturbation into the forward via per-Linear pre-hooks, making this fused forward $\sim$27\% slower than MeZO's forward-plus-in-place-perturbation ($712$ vs.\ $561$~ms). The update shrinks from $229$ to $82$~ms via sign-vector products, and GRZO totals $973$~ms ($+23\%$ over MeZO). The added forward work is one GEMM per Linear layer of the same shape as that layer's own matmul, plus two elementwise sign products; since both scale with the layer's own cost, the relative overhead is largely invariant to batch size, hidden dimension, and model size. This per-step cost is more than offset by the variance reduction (factor of $1/B_{\mathrm{eff}}$, Theorem~\ref{thm:grzo_var}), which translates into proportionally fewer optimization steps to a target loss (Figure~\ref{fig:convergence}). The same trade applies to combined variants; full per-method breakdowns are in Appendix~\ref{app:profile_breakdown}.

\begin{figure*}[!t]
\centering
{\small
\captionof{table}{GRZO with orthogonal ZO baselines on Llama3-8B. Parentheses: $(\Delta$ vs.\ paired baseline$\,/\,\Delta$ vs.\ vanilla GRZO$)$. Best per task in \textbf{bold}. QuZO and Qu-GRZO both use 8-bit quantization for weights and perturbations.}
\label{tab:additional-baselines}
\setlength{\tabcolsep}{3pt}
\renewcommand{\arraystretch}{1.2}
\begin{tabular}{@{}lccccc@{}}
\toprule
\rowcolor{tableheader}
\textbf{Method} & \multicolumn{3}{c}{\textbf{SuperGLUE (Classification)}} & \multicolumn{2}{c}{\textbf{QA (F1)}} \\
\cmidrule(lr){2-4}\cmidrule(lr){5-6}
\rowcolor{tableheader}
 & \textbf{BoolQ} & \textbf{RTE} & \textbf{COPA} & \textbf{SQuAD} & \textbf{DROP} \\
\midrule
\multicolumn{6}{@{}l}{\textit{ZO variants baselines}} \\
Sparse-MeZO~\citep{liu2024sparse} & 80.5 & 73.4 & 83.0 & 87.5 & 48.4 \\
LOZO~\citep{chen2025enhancing}        & 79.4 & 72.1 & 84.0 & \textbf{89.0} & 65.4 \\
QuZO (\text{int}8)~\citep{zhou2025quzo}      & 76.8 & 75.2 & 87.0 & 80.6 & 52.3 \\
\midrule
\multicolumn{6}{@{}l}{\textit{Vanilla GRZO}} \\
\rowcolor{grzopink}
\textbf{GRZO} & 81.4 & \textbf{81.6} & 89.0 & 86.2 & 65.0 \\
\midrule
\multicolumn{6}{@{}l}{\textit{GRZO combined with orthogonal ZO variants}} \\
\rowcolor{grzopinkdeep}
Sparse-GRZO & \textbf{85.1} (\dpos{+4.6}/\dpos{+3.7}) & 79.4 (\dpos{+6.0}/\dneg{-2.2}) & 88.0 (\dpos{+5.0}/\dneg{-1.0}) & \textbf{89.0} (\dpos{+1.5}/\dpos{+2.8}) & 59.3 (\dpos{+10.9}/\dneg{-5.7}) \\
\rowcolor{grzopinkdeep}
LO-GRZO   & 84.4 (\dpos{+5.0}/\dpos{+3.0}) & 75.1 (\dpos{+3.0}/\dneg{-6.5}) & 90.0 (\dpos{+6.0}/\dpos{+1.0}) & 88.4 (\dneg{-0.6}/\dpos{+2.2}) & \textbf{65.5} (\dpos{+0.1}/\dpos{+0.5}) \\
\rowcolor{grzopinkdeep}
Qu-GRZO (\text{int}8)   & 79.3 (\dpos{+2.5}/\dneg{-2.1}) & 80.5 (\dpos{+5.3}/\dneg{-1.1}) & \textbf{91.0} (\dpos{+4.0}/\dpos{+2.0}) & 88.6 (\dpos{+8.0}/\dpos{+2.4}) & 63.9 (\dpos{+11.6}/\dneg{-1.1}) \\
\bottomrule
\end{tabular}
}

\vspace{10pt}

\includegraphics[width=\textwidth]{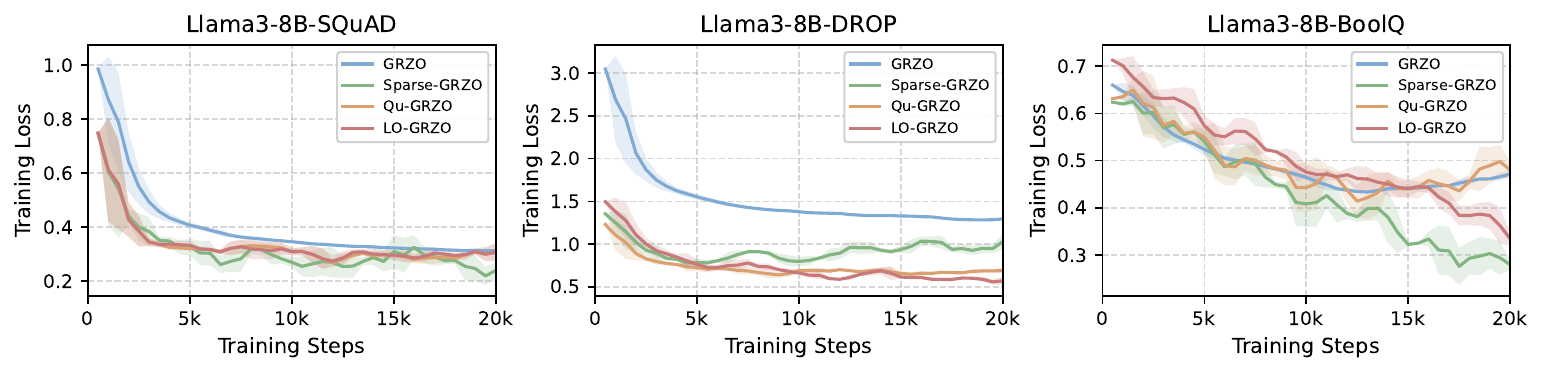}
\caption{Training loss curves on Llama3-8B comparing vanilla GRZO with the three GRZO-combined variants.}
\label{fig:combine-convergence}
\end{figure*}

\paragraph{Why $B$ Directions Are Memory-Free.}
The question is not why GRZO is cheaper than MeZO---at their memory optima both sit at the inference floor---but why moving from one perturbation direction to $B$ does not multiply the footprint. Two mechanisms are responsible. First, GRZO never materializes a perturbed weight tensor: each layer's perturbed weight is built transiently inside a forward pre-hook and freed before the next layer runs, so at most one layer's worth of perturbation is live at any moment, independent of depth. Second, the sign factorization $\Delta\mat{W}_i=\mat{U}\odot(\mat{r}_i\mat{s}_i^\top)$ compresses all per-example variation into $\pm1$ sign vectors of size $B(d_{\text{out}}{+}d_{\text{in}})$ per layer, so the $B$-direction estimator carries the same per-layer state as a single-direction one. The residual $+0.08$~GB is exactly these sign vectors and the transient per-layer buffer.

\subsection{Combination with ZO Variants}
\label{sec:composing}

\noindent\textbf{GRZO as a drop-in base for other ZO variants.}
We frame GRZO not only as a stronger ZO estimator than MeZO, but as a base on which existing MeZO variants---sparse (Sparse-MeZO), low-rank (LOZO), and quantized (QuZO) perturbations---can be dropped in for further gains. GRZO intervenes only at the MeZO core, swapping the single-direction perturbation and scalar update for multi-directional per-example perturbation and group-relative normalization; sparsity, low-rank, and quantization act on orthogonal axes of the perturbation and remain complementary to GRZO.

Table~\ref{tab:additional-baselines} shows the composition is Pareto-favorable along two directions. First, every GRZO-combined variant outperforms its baseline on every shared task (the sole exception is LO-GRZO at $-0.6$ F1 on SQuAD); the most striking case is Qu-GRZO on DROP, where $52.3 \to 63.9$ F1 ($+11.6$) brings the low-bit baseline within $1.1$ F1 of vanilla GRZO ($65.0$). The quantized pairing benefits twice: on top of the $1/B$ variance reduction, low-bit forward passes inflate the scale of the loss differences, and a MeZO-style update---being proportional to $\delta$---absorbs that inflation as an uncontrolled increase in effective step size, whereas the group-relative weights divide by an $s$ inflated by the same factor and are therefore unaffected (Section~\ref{sec:grzo_core}). The gain thus reflects GRZO's scale invariance rather than any special interaction with quantization; Appendix~\ref{app:quant_scale} measures both effects. Second, the combinations also beat \emph{vanilla} GRZO on 8 of 15 task--variant cells, consistently so on SQuAD and most reliably for LO-GRZO, showing that these ZO variants complement, rather than compete with, GRZO. Fig.~\ref{fig:combine-convergence} confirms both effects in the training dynamics; two per-variant comparison examples on SQuAD and BoolQ are in Figure~\ref{fig:per-variant-compare} (Appendix~\ref{app:per_variant_compare}). The standard ZO taxonomy---sparsity, low-rank, quantization, and variance reduction---thus reads as independent axes that can be stacked, with GRZO supplying the variance-reduction axis that has no native solution at MeZO's forward budget.

\paragraph{Efficiency Benefit of the GRZO Core.}
Swapping the MeZO core for GRZO inside any ZO variant improves not only convergence quality but also resource efficiency. Across the three fp16 variant pairings (vanilla, LOZO, Sparse), GRZO+X is memory-neutral relative to MeZO+X---$+0.08$~GB on vanilla and LOZO, and $0.33$~GB lower on Sparse---so the $B$ per-example directions come at no memory premium. GRZO+X is 6--23\% slower per step than MeZO+X---each forward fuses the per-example perturbation---but the variance reduction greatly reduces the number of optimization steps to a target loss (Figure~\ref{fig:combine-convergence}). We exclude QuZO and Qu-GRZO from this efficiency claim because our fp16 implementations measure fake-quant overhead rather than the low-bit deployment regime targeted by~\citet{zhou2025quzo}; per-method numbers and a full discussion are in Appendix~\ref{app:profile_breakdown}.

\subsection{Ablation Study}
\label{sec:ablation}

Two ablations isolate GRZO's design: which component---per-example (PE) perturbation vs.\ group-relative normalization (GN)---drives convergence, and whether the choice of perturbation noise distribution materially affects performance.

\paragraph{Component Ablation.}
Figure~\ref{fig:ablation} (left) compares full GRZO, GRZO without GN, and GRZO without both PE and GN (a MeZO-style estimator) on SST-2. Removing GN alone slows descent on this task; further removing PE degrades it more, showing the two components are complementary---PE enables efficient batched estimation, GN stabilizes the gradient signal. GRZO beats FZOO by $+0.4$ on SST-2 ($+2.0$ on Llama3-8B average; Section~\ref{sec:lm_results}), ruling out a normalization-only explanation.

\begin{figure}[t]
\centering
\includegraphics[width=\columnwidth]{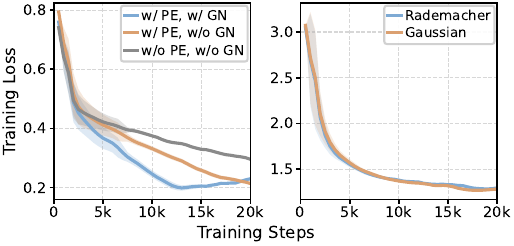}
\caption{\textbf{Left}: GRZO components on SST-2. \textbf{Right}: Perturbation ablation on DROP.}
\label{fig:ablation}
\end{figure}

\paragraph{Perturbation Type.}
Figure~\ref{fig:ablation} (right) compares Gaussian and Rademacher perturbations on DROP; the two loss curves are nearly indistinguishable, confirming robustness to the choice of base distribution. We default to Rademacher for its lower memory overhead and because it is the choice for which the shared-base variance term of Theorem~\ref{thm:grzo_var} vanishes exactly ($m_4{=}1$).

\paragraph{Batch Size Sensitivity.}
\label{sec:grzo_bs_sensitivity}
Figure~\ref{fig:bs-ablation} (Appendix~\ref{app:bs-ablation}) shows $B{=}4$ diverges and $B{=}8$ is unstable, while $B{\geq}16$ converges smoothly. This matches Proposition~\ref{prop:grzo_norm}: group-relative normalization perturbs the estimator by a relative $O(1/B)$, which is negligible at $B{\geq}16$ but no longer so once $B$ falls to $4$. We therefore recommend $B \geq 16$.

\section{Conclusion}

We have presented GRZO, a zeroth-order optimizer that treats the mini-batch as a source of perturbation directions rather than only loss-averaging samples. By combining pseudo-independent per-example perturbations with group-relative normalization, GRZO extracts a gradient direction from every mini-batch example under MeZO's two-forward-pass budget, greatly mitigating the variance bottleneck of single-direction ZO methods. We have provided theoretical guarantees on directional unbiasedness, variance reduction, and nonconvex convergence. Extensive experiments on multiple models have shown that GRZO outperforms state-of-the-art ZO baselines on the large majority of tasks at inference-level memory, and serves as a drop-in replacement for the MeZO core that composes with sparsity, low-rank, and quantization variants. Extending GRZO to further MeZO variants and full pre-training of large-scale models remains open.

\section*{Limitations}
\label{sec:limitations}

GRZO's main per-step cost comes from running two perturbed forwards ($\ell^+$ and $\ell^-$) to form a directionally unbiased two-sided estimate. A one-sided variant comparing a single perturbed forward to an unperturbed forward would be faster than MeZO but at the cost of a biased gradient estimate and slower descent; building a low-bias one-sided design is open future work.

Our empirical study covers four ZO families (vanilla, Sparse-MeZO, LOZO, QuZO) up to 13B parameters. Other MeZO variants---curvature-preconditioned (HiZOO), control-variate (MeZO-SVRG), and subspace-orthogonalization---and 70B+ scaling are the most immediate empirical extensions: the GRZO core swap is mechanically straightforward in each case, but the variance-versus-stability trade-off and downstream quality at those settings remain to be verified.

\section*{Ethical Considerations}
\label{sec:ethics}

This work uses publicly available pre-trained models and benchmarks for research purposes, follows their respective licenses and terms of use, and does not involve the collection or release of personally identifiable information. We do not foresee specific ethical concerns beyond the general risks associated with large language models. In particular, GRZO may inherit biases, hallucinations, or misleading patterns from the underlying models and data. We therefore do not recommend deploying it as a standalone decision-making system, especially in high-stakes settings. Any practical use should include human oversight and task-specific safety evaluation.

\bibliography{custom}

@inproceedings{chen2025enhancing,
  title={Enhancing Zeroth-Order Fine-Tuning for Language Models with Low-Rank Structures},
  author={Chen, Yiming and Zhang, Yuan and Cao, Liyuan and Yuan, Kun and Wen, Zaiwen},
  booktitle={International Conference on Learning Representations},
  volume={2025},
  pages={62581--62607},
  year={2025}
}

@article{malladi2023fine,
  title={Fine-tuning language models with just forward passes},
  author={Malladi, Sadhika and Gao, Tianyu and Nichani, Eshaan and Damian, Alex and Lee, Jason D and Chen, Danqi and Arora, Sanjeev},
  journal={Advances in Neural Information Processing Systems},
  volume={36},
  pages={53038--53075},
  year={2023}
}

@article{amari1993backpropagation,
  title={Backpropagation and stochastic gradient descent method},
  author={Amari, Shun-ichi},
  journal={Neurocomputing},
  volume={5},
  number={4-5},
  pages={185--196},
  year={1993},
  publisher={Elsevier}
}

@inproceedings{loshchilov2017decoupled,
  title={Decoupled weight decay regularization},
  author={Loshchilov, Ilya and Hutter, Frank},
  booktitle={International Conference on Learning Representations},
  year={2019}
}

@inproceedings{gautam2024mezosvrg,
  title={Variance-reduced Zeroth-Order Methods for Fine-Tuning Language Models},
  author={Gautam, Tanmay and Park, Youngsuk and Zhou, Hao and Raman, Parameswaran and Ha, Wooseok},
  booktitle={Proceedings of the 41st International Conference on Machine Learning},
  series={Proceedings of Machine Learning Research},
  volume={235},
  pages={15180--15208},
  year={2024},
  publisher={PMLR}
}

@inproceedings{zhao2024hizoo,
  title={Second-Order Fine-Tuning without Pain for {LLM}s: A {H}essian Informed Zeroth-Order Optimizer},
  author={Zhao, Yanjun and Dang, Sizhe and Ye, Haishan and Dai, Guang and Qian, Yi and Tsang, Ivor W.},
  booktitle={International Conference on Learning Representations},
  year={2025}
}

@inproceedings{yang2024adazeta,
  title={{AdaZeta}: Adaptive zeroth-order tensor-train adaption for memory-efficient large language models fine-tuning},
  author={Yang, Yifan and Zhen, Kai and Banijamali, Ershad and Mouchtaris, Athanasios and Zhang, Zheng},
  booktitle={Proceedings of the 2024 Conference on Empirical Methods in Natural Language Processing},
  pages={977--995},
  year={2024}
}

@inproceedings{zhang2025mazo,
  title={{MaZO}: Masked zeroth-order optimization for multi-task fine-tuning of large language models},
  author={Zhang, Zhen and Yang, Yifan and Zhen, Kai and Susanj, Nathan and Mouchtaris, Athanasios and Kunzmann, Siegfried and Zhang, Zheng},
  booktitle={Proceedings of the 2025 Conference on Empirical Methods in Natural Language Processing},
  pages={18537--18554},
  year={2025}
}

@inproceedings{zhou2025quzo,
  title={{QuZO}: Quantized Zeroth-Order Fine-Tuning for Large Language Models},
  author={Zhou, Jiajun and Yang, Yifan and Zhen, Kai and Liu, Ziyue and Zhao, Yequan and Banijamali, Ershad and Mouchtaris, Athanasios and Wong, Ngai and Zhang, Zheng},
  booktitle={Proceedings of the 2025 Conference on Empirical Methods in Natural Language Processing},
  pages={5341--5359},
  year={2025},
  doi={10.18653/v1/2025.emnlp-main.271}
}

@inproceedings{liu2024sparse,
  title={Sparse {MeZO}: Less Parameters for Better Performance in Zeroth-Order {LLM} Fine-Tuning},
  author={Liu, Yong and Zhu, Zirui and Gong, Chaoyu and Cheng, Minhao and Hsieh, Cho-Jui and You, Yang},
  booktitle={Advances in Neural Information Processing Systems},
  year={2025}
}

@inproceedings{wen2018flipout,
  title={Flipout: Efficient pseudo-independent weight perturbations on mini-batches},
  author={Wen, Yeming and Vicol, Paul and Ba, Jimmy and Tran, Dustin and Grosse, Roger},
  booktitle={International Conference on Learning Representations},
  year={2018}
}

@inproceedings{kingma2014adam,
  title={{Adam}: A method for stochastic optimization},
  author={Kingma, Diederik P. and Ba, Jimmy},
  booktitle={International Conference on Learning Representations},
  year={2015}
}

@inproceedings{hu2022lora,
  title={{LoRA}: {L}ow-{R}ank adaptation of large language models},
  author={Hu, Edward J and Shen, Yelong and Wallis, Phillip and Allen-Zhu, Zeyuan and Li, Yuanzhi and Wang, Shean and Wang, Lu and Chen, Weizhu},
  booktitle={International Conference on Learning Representations},
  year={2022}
}

@inproceedings{dang2025fzoo,
  title={{FZOO}: Fast Zeroth-Order Optimizer for Fine-Tuning Large Language Models towards {A}dam-Scale Speed},
  author={Dang, Sizhe and Guo, Yangyang and Zhao, Yanjun and Ye, Haishan and Zheng, Xiaodong and Dai, Guang and Tsang, Ivor W.},
  booktitle={International Conference on Learning Representations},
  year={2026}
}

@article{wang2019superglue,
  title={{SuperGLUE}: A stickier benchmark for general-purpose language understanding systems},
  author={Wang, Alex and Pruksachatkun, Yada and Nangia, Nikita and Singh, Amanpreet and Michael, Julian and Hill, Felix and Levy, Omer and Bowman, Samuel},
  journal={Advances in neural information processing systems},
  volume={32},
  year={2019}
}

@article{grattafiori2024llama,
  title={The {Llama} 3 herd of models},
  author={Grattafiori, Aaron and Dubey, Abhimanyu and Jauhri, Abhinav and Pandey, Abhinav and Kadian, Abhishek and Al-Dahle, Ahmad and Letman, Aiesha and Mathur, Akhil and Schelten, Alan and Vaughan, Alex and others},
  journal={arXiv preprint arXiv:2407.21783},
  year={2024}
}

@article{zhang2022opt,
  title={{OPT}: {O}pen {P}re-trained transformer language models},
  author={Zhang, Susan and Roller, Stephen and Goyal, Naman and Artetxe, Mikel and Chen, Moya and Chen, Shuohui and Dewan, Christopher and Diab, Mona and Li, Xian and Lin, Xi Victoria and others},
  journal={arXiv preprint arXiv:2205.01068},
  year={2022}
}

@inproceedings{zhao2024galorememoryefficientllmtraining,
  title={{GaLore}: Memory-Efficient {LLM} Training by Gradient Low-Rank Projection},
  author={Zhao, Jiawei and Zhang, Zhenyu and Chen, Beidi and Wang, Zhangyang and Anandkumar, Anima and Tian, Yuandong},
  booktitle={International Conference on Machine Learning},
  year={2024}
}

@inproceedings{devlin2019bert,
  title={{BERT}: {P}re-training of deep bidirectional transformers for language understanding},
  author={Devlin, Jacob and Chang, Ming-Wei and Lee, Kenton and Toutanova, Kristina},
  booktitle={Proceedings of the 2019 conference of the North American chapter of the association for computational linguistics: human language technologies, volume 1 (long and short papers)},
  pages={4171--4186},
  year={2019}
}

@article{liu2019roberta,
  title={{RoBERTa}: A robustly optimized {BERT} pretraining approach},
  author={Liu, Yinhan and Ott, Myle and Goyal, Naman and Du, Jingfei and Joshi, Mandar and Chen, Danqi and Levy, Omer and Lewis, Mike and Zettlemoyer, Luke and Stoyanov, Veselin},
  journal={arXiv preprint arXiv:1907.11692},
  year={2019}
}

@article{lang2026powering,
  title={Powering Up Zeroth-Order Training via Subspace Gradient Orthogonalization},
  author={Lang, Yicheng and Wang, Changsheng and Zhang, Yihua and Hong, Mingyi and Zhang, Zheng and Yin, Wotao and Liu, Sijia},
  journal={arXiv preprint arXiv:2602.17155},
  year={2026}
}

@inproceedings{ma2025revisiting,
  title={Revisiting Zeroth-Order Optimization: Minimum-Variance Two-Point Estimators and Directionally Aligned Perturbations},
  author={Ma, Shaocong and Huang, Heng},
  booktitle={International Conference on Learning Representations},
  volume={2025},
  pages={13267--13293},
  year={2025}
}

@article{duchi2015optimal,
  title={Optimal rates for zero-order convex optimization: The power of two function evaluations},
  author={Duchi, John C and Jordan, Michael I and Wainwright, Martin J and Wibisono, Andre},
  journal={IEEE Transactions on Information Theory},
  volume={61},
  number={5},
  pages={2788--2806},
  year={2015},
  publisher={IEEE}
}

@article{ghadimi2013stochastic,
  title={Stochastic first- and zeroth-order methods for nonconvex stochastic programming},
  author={Ghadimi, Saeed and Lan, Guanghui},
  journal={SIAM journal on optimization},
  volume={23},
  number={4},
  pages={2341--2368},
  year={2013},
  publisher={SIAM}
}

@article{spall2002multivariate,
  title={Multivariate stochastic approximation using a simultaneous perturbation gradient approximation},
  author={Spall, James C},
  journal={IEEE transactions on automatic control},
  volume={37},
  number={3},
  pages={332--341},
  year={2002},
  publisher={IEEE}
}

@article{shao2024deepseekmath,
  title={{DeepSeekMath}: Pushing the limits of mathematical reasoning in open language models},
  author={Shao, Zhihong and Wang, Peiyi and Zhu, Qihao and Xu, Runxin and Song, Junxiao and Bi, Xiao and Zhang, Haowei and Zhang, Mingchuan and Li, YK and Wu, Yang and others},
  journal={arXiv preprint arXiv:2402.03300},
  year={2024}
}

@article{zhao2025poor,
  title={Poor man's training on {MCU}s: A memory-efficient quantized back-propagation-free approach},
  author={Zhao, Yequan and Li, Hai and Young, Ian and Zhang, Zheng},
  journal={ACM Transactions on Design Automation of Electronic Systems},
  volume={30},
  number={5},
  pages={1--33},
  year={2025},
  publisher={ACM New York, NY}
}

@article{liu2020primer,
  title={A primer on zeroth-order optimization in signal processing and machine learning: Principals, recent advances, and applications},
  author={Liu, Sijia and Chen, Pin-Yu and Kailkhura, Bhavya and Zhang, Gaoyuan and Hero III, Alfred O and Varshney, Pramod K},
  journal={IEEE Signal Processing Magazine},
  volume={37},
  number={5},
  pages={43--54},
  year={2020},
  publisher={IEEE}
}

@article{nesterov2017random,
  title={Random gradient-free minimization of convex functions},
  author={Nesterov, Yurii and Spokoiny, Vladimir},
  journal={Foundations of Computational Mathematics},
  volume={17},
  number={2},
  pages={527--566},
  year={2017},
  publisher={Springer}
}

@inproceedings{gao2022generalizing,
  title={Generalizing gaussian smoothing for random search},
  author={Gao, Katelyn and Sener, Ozan},
  booktitle={International Conference on Machine Learning},
  pages={7077--7101},
  year={2022},
  organization={PMLR}
}

@inproceedings{malladi2023kernel,
  title={A kernel-based view of language model fine-tuning},
  author={Malladi, Sadhika and Wettig, Alexander and Yu, Dingli and Chen, Danqi and Arora, Sanjeev},
  booktitle={International Conference on Machine Learning},
  pages={23610--23641},
  year={2023},
  organization={PMLR}
}

@inproceedings{aghajanyan2021intrinsic,
  title={Intrinsic dimensionality explains the effectiveness of language model fine-tuning},
  author={Aghajanyan, Armen and Gupta, Sonal and Zettlemoyer, Luke},
  booktitle={Proceedings of the 59th annual meeting of the association for computational linguistics and the 11th international joint conference on natural language processing (volume 1: long papers)},
  pages={7319--7328},
  year={2021}
}

@article{berahas2022theoretical,
  title={A theoretical and empirical comparison of gradient approximations in derivative-free optimization},
  author={Berahas, Albert S and Cao, Liyuan and Choromanski, Krzysztof and Scheinberg, Katya},
  journal={Foundations of Computational Mathematics},
  volume={22},
  number={2},
  pages={507--560},
  year={2022},
  publisher={Springer}
}

@inproceedings{chen2024deepzero,
  title={{DeepZero}: Scaling up zeroth-order optimization for deep model training},
  author={Chen, Aochuan and Zhang, Yimeng and Jia, Jinghan and Diffenderfer, James and Parasyris, Konstantinos and Liu, Jiancheng and Zhang, Yihua and Zhang, Zheng and Kailkhura, Bhavya and Liu, Sijia},
  booktitle={International Conference on Learning Representations},
  volume={2024},
  pages={50185--50206},
  year={2024}
}

@article{zhao2023tensor,
  title={Tensor-compressed back-propagation-free training for (physics-informed) neural networks},
  author={Zhao, Yequan and Yu, Xinling and Chen, Zhixiong and Liu, Ziyue and Liu, Sijia and Zhang, Zheng},
  journal={arXiv preprint arXiv:2308.09858},
  year={2023}
}

@inproceedings{wang2018glue,
  title={{GLUE}: A multi-task benchmark and analysis platform for natural language understanding},
  author={Wang, Alex and Singh, Amanpreet and Michael, Julian and Hill, Felix and Levy, Omer and Bowman, Samuel},
  booktitle={Proceedings of the 2018 {EMNLP} Workshop {BlackboxNLP}: Analyzing and Interpreting Neural Networks for {NLP}},
  pages={353--355},
  year={2018}
}

@inproceedings{rajpurkar2016squad,
  title={{SQuAD}: 100,000+ questions for machine comprehension of text},
  author={Rajpurkar, Pranav and Zhang, Jian and Lopyrev, Konstantin and Liang, Percy},
  booktitle={Proceedings of the 2016 conference on empirical methods in natural language processing},
  pages={2383--2392},
  year={2016}
}

@inproceedings{dua2019drop,
  title={{DROP}: A reading comprehension benchmark requiring discrete reasoning over paragraphs},
  author={Dua, Dheeru and Wang, Yizhong and Dasigi, Pradeep and Stanovsky, Gabriel and Singh, Sameer and Gardner, Matt},
  booktitle={Proceedings of the 2019 Conference of the North American Chapter of the Association for Computational Linguistics: Human Language Technologies, Volume 1 (Long and Short Papers)},
  pages={2368--2378},
  year={2019}
}

@inproceedings{liu2025cola,
  title={{CoLA}: Compute-efficient pre-training of {LLM}s via low-rank activation},
  author={Liu, Ziyue and Zhang, Ruijie and Wang, Zhengyang and Yan, Mingsong and Yang, Zi and Hovland, Paul D and Nicolae, Bogdan and Cappello, Franck and Tang, Sui and Zhang, Zheng},
  booktitle={Proceedings of the 2025 Conference on Empirical Methods in Natural Language Processing},
  pages={4627--4645},
  year={2025}
}

@article{zhang2026lax,
  title={{LaX}: Boosting low-rank training of foundation models via latent crossing},
  author={Zhang, Ruijie Ray and Liu, Ziyue Alvin and Wang, Zhengyang and Zhang, Zheng},
  journal={Advances in Neural Information Processing Systems},
  volume={38},
  pages={142920--142948},
  year={2025}
}

@article{yang2026sharpzo,
  title={{SharpZO}: Hybrid sharpness-aware vision language model prompt tuning via forward-only passes},
  author={Yang, Yifan and Zhang, Zhen and Swaminathan, Rupak Vignesh and Liu, Jing and Susanj, Nathan and Zhang, Zheng},
  journal={Advances in Neural Information Processing Systems},
  volume={38},
  pages={143695--143721},
  year={2025}
}

@inproceedings{li2021prefix,
  title={Prefix-tuning: Optimizing continuous prompts for generation},
  author={Li, Xiang Lisa and Liang, Percy},
  booktitle={Proceedings of the 59th Annual Meeting of the Association for Computational Linguistics and the 11th International Joint Conference on Natural Language Processing (Volume 1: Long Papers)},
  pages={4582--4597},
  year={2021}
}

@inproceedings{houlsby2019parameter,
  title={Parameter-efficient transfer learning for {NLP}},
  author={Houlsby, Neil and Giurgiu, Andrei and Jastrzebski, Stanislaw and Morrone, Bruna and De Laroussilhe, Quentin and Gesmundo, Andrea and Attariyan, Mona and Gelly, Sylvain},
  booktitle={International conference on machine learning},
  pages={2790--2799},
  year={2019},
  organization={PMLR}
}

@inproceedings{lester2021power,
  title={The power of scale for parameter-efficient prompt tuning},
  author={Lester, Brian and Al-Rfou, Rami and Constant, Noah},
  booktitle={Proceedings of the 2021 conference on empirical methods in natural language processing},
  pages={3045--3059},
  year={2021}
}

@article{tan2026zoaf,
  title={{ZOAF}: Towards Efficient Zeroth-Order Optimization for Analog/{RF} Circuit Design},
  author={Tan, Liyan and Zhao, Yequan and Lu, Jinming and Jamroz, Ben F. and Feldman, Ari and Zhang, Zheng},
  journal={arXiv preprint arXiv:2606.02869},
  year={2026}
}

\clearpage
\appendix

\section{GRZO Algorithm}
\label{app:grzo_algorithm}

\begin{algorithm*}[!t]
\caption{GRZO (Group-Relative Zeroth-Order Optimization)}
\label{alg:grzo}
\begin{algorithmic}[1]
\Require Parameters $\boldsymbol{\theta}=\{\mat{W}^{(\ell)}\}$; scale $\sigma$; batch size $B$;
         steps $T$; learning rates $\{\eta_t\}$; $\epsilon>0$
\For{$t = 1, \ldots, T$}
  \State Sample $\mathcal{B}_t$; draw per-example sign vectors $\{(\mat{r}_i^{(\ell)}, \mat{s}_i^{(\ell)})\}$ and layer seeds $\{\textit{seed}^{(\ell)}\}$
  \For{each layer $\ell$} \Comment{\textit{Fused forward: two forward passes, per-example perturbations}}
    \State Regenerate $\mat{U}^{(\ell)}$ from $\textit{seed}^{(\ell)}$;\quad $\mat{P}_i^{(\ell)} \gets ((\mat{X}_i \odot \mat{S}_i^{(\ell)})\,\mat{U}^{(\ell)}) \odot \mat{R}_i^{(\ell)}$
    \State $\mat{X}_i \gets \textsc{Concat}\!\bigl(\phi(\mat{X}_i \mat{W}^{(\ell)}{+}\sigma \mat{P}_i^{(\ell)}),\; \phi(\mat{X}_i \mat{W}^{(\ell)}{-}\sigma \mat{P}_i^{(\ell)})\bigr)$
  \EndFor
  \State $\delta_i \gets \ell_i^+ - \ell_i^-$ \quad for all $i=1,\ldots,B$ \Comment{\textit{Group-relative normalization}}
  \State $s \gets \sqrt{\tfrac{1}{B}\sum_i(\delta_i{-}\bar\delta)^2}$ with $\bar\delta=\tfrac{1}{B}\sum_i\delta_i$;\quad $a_i \gets \delta_i/(s{+}\epsilon)$
  \For{each layer $\ell$} \Comment{\textit{Seed-regenerated weight update}}
    \State Regenerate $\mat{U}^{(\ell)}$ from $\textit{seed}^{(\ell)}$;\quad $\bar{\mat{M}}^{(\ell)} \gets \tfrac{1}{B}\sum_i a_i\, \mat{r}_i^{(\ell)}(\mat{s}_i^{(\ell)})^\top$
    \State $\mat{W}^{(\ell)} \gets \mat{W}^{(\ell)} - \dfrac{\eta_t}{2\sigma}\, \mat{U}^{(\ell)} \odot \bar{\mat{M}}^{(\ell)}$
  \EndFor
\EndFor
\end{algorithmic}
\end{algorithm*}

Algorithm~\ref{alg:grzo} gives the per-step pseudocode for GRZO, fusing the per-example sign-factorized perturbation construction with group-relative normalization. The same random seeds are used for the perturbation in the forward pass and the seed-regenerated weight update, so the per-example perturbations $\mat{P}_i^{(\ell)}$ never have to be materialized between the two phases.

\section{Additional Background on MeZO and Flipout}
\label{app:background_details}

\subsection{MeZO as In-Place Two-Point Zeroth-Order Optimization}

MeZO \citep{malladi2023fine} adapts the classical two-point SPSA estimator to LLM fine-tuning with inference-level memory by perturbing parameters in place and regenerating the same noise direction from a random seed. Given parameters $\boldsymbol{\theta} \in \mathbb{R}^{D}$, mini-batch $\mathcal{B}$, perturbation scale $\sigma$, and seed-generated direction $\mat{z}(s)$, it evaluates
\begin{equation}
\widehat{\mat{g}}(\boldsymbol{\theta};\mathcal{B})
=
\frac{L(\boldsymbol{\theta} + \sigma \mat{z};\mathcal{B}) - L(\boldsymbol{\theta} - \sigma \mat{z};\mathcal{B})}{2\sigma}\, \mat{z},
\end{equation}
or equivalently the projected scalar
\begin{equation}
\begin{aligned}
g_{\mathrm{proj}} &= \frac{\ell^{+}-\ell^{-}}{2\sigma}, \\
\ell^{+} &= L(\boldsymbol{\theta}+\sigma \mat{z}(s);\mathcal{B}), \\
\ell^{-} &= L(\boldsymbol{\theta}-\sigma \mat{z}(s);\mathcal{B}),
\end{aligned}
\end{equation}
followed by the in-place update $\boldsymbol{\theta} \leftarrow \boldsymbol{\theta} - \eta\, g_{\mathrm{proj}}\, \mat{z}(s)$. This design avoids storing activations or full perturbation tensors, but because each step uses only one direction, reducing estimator variance by averaging more directions increases forward cost linearly.

\begin{table*}[!t]
\centering
\small
\setlength{\tabcolsep}{3pt}
\begin{tabular}{p{2.6cm}p{2.2cm}p{3.4cm}p{3.2cm}cp{1.9cm}}
\toprule
\textbf{Method} & \textbf{Fwd passes/step} & \textbf{Extra memory vs.\ inference} & \textbf{Variance reduction} & \textbf{BP-free} & \textbf{Update space} \\
\midrule
MeZO~\citep{malladi2023fine}              & 2                     & none (inference level)$^*$         & ---                                  & \checkmark & Full param \\
HiZOO~\citep{zhao2024hizoo}$^\dagger$    & 3                     & fp32 diagonal Hessian              & Curvature preconditioning            & \checkmark & Full param \\
MeZO-SVRG~\citep{gautam2024mezosvrg}$^\ddagger$ & $2{+}$ periodic sweep & reference copy + full-batch est. & SVRG control variates             & \checkmark & Full param \\
FZOO~\citep{dang2025fzoo}$^\S$          & $N{+}1$ (one-sided)   & activations from $N$ parallel fwds & $N$ indep.\ directions               & \checkmark & Full param \\
Sparse-MeZO~\citep{liu2024sparse}        & 2                     & mask + mutation                    & Reduced update dimension             & \checkmark & Sparse subset \\
LOZO~\citep{chen2025enhancing} & 2 & inherits MeZO mutation             & Low-rank subspace constraint         & \checkmark & Low-rank \\
LoRA + Adam~\citep{hu2022lora}           & 2 (fwd+bwd)           & full-net activations + Adam state on adapter & ---                                  & $\times$   & Low-rank adapter \\
\midrule
\textbf{GRZO (ours)}                     & \textbf{2}            & none (inference level)$^{**}$ & \textbf{$B$ pseudo-indep.\ directions per step} & \checkmark & Full param \\
\bottomrule
\end{tabular}
\caption{Comparison of representative ZO fine-tuning methods, listing the qualitative source of each method's extra memory; quantitative measurements appear in Appendix~\ref{app:profile_breakdown}. $^*$We profile MeZO with a fully in-place perturb--restore--update implementation, which reaches the inference floor.
$^\dagger$HiZOO uses one extra forward per step and stores an fp32 diagonal Hessian.
$^\ddagger$MeZO-SVRG additionally performs periodic full-dataset sweeps.
$^\S$Parallel FZOO runs $N$ perturbed forwards concurrently; a sequential variant trades activation memory for $N{\times}$ wall-clock.
$^{**}$GRZO applies perturbations via forward pre-hooks, so at most one layer's perturbation is live at a time; it reaches the same inference floor while providing $B$ directions rather than one.}
\label{tab:related_work}
\end{table*}

\subsection{Flipout for Pseudo-Independent Per-Example Perturbations}

Flipout \citep{wen2018flipout} addresses the inefficiency of sharing one weight perturbation across an entire mini-batch. For a linear layer with weight matrix $\mat{W} \in \mathbb{R}^{d_{\text{out}} \times d_{\text{in}}}$, it samples a shared base perturbation $\mat{U}$ and constructs an effective perturbation for example $i$ as
\begin{equation}
\Delta \mat{W}_i \;=\; \mat{U} \odot (\mat{r}_i \mat{s}_i^\top),
\end{equation}
where $\mat{r}_i \in \{\pm 1\}^{d_{\text{out}}}$ and $\mat{s}_i \in \{\pm 1\}^{d_{\text{in}}}$ are independent sign vectors. Stacking a mini-batch of activations into $X$, and the sign vectors into matrices $R$ and $S$, yields the vectorized form
\begin{equation}
Y
=\phi\!\left(XW \;+\; \big((X \odot S)\mat{U}\big)\odot R\right),
\label{eq:flipout_vectorized}
\end{equation}
which avoids materializing $B$ separately perturbed weight matrices while still producing pseudo-independent example-level perturbations. GRZO uses this construction to obtain $B$ perturbation-induced loss signals within the same two-forward-pass budget used by MeZO.

\section{Comparison of ZO Fine-Tuning Methods}
\label{app:related_work_table}

Table~\ref{tab:related_work} compares representative zeroth-order fine-tuning methods across four dimensions: per-step forward-pass count, extra memory relative to MeZO, variance reduction mechanism, and whether backpropagation is required.

\section{Detailed Experimental Settings}
\label{app:exp_settings}

\paragraph{Hardware.}
All experiments are conducted on servers equipped with 8$\times$ NVIDIA A100 (40\,GB) or 8$\times$ NVIDIA A6000 (48\,GB) GPUs. Each Llama3-8B or OPT-13B fine-tuning run on a single task takes approximately 4--8 hours of wall-clock time on one such node, and longer for the quantized variants.

\paragraph{Aggregation.}
All accuracy and loss numbers reported in this paper are means across 3 runs (different random seeds) per task-method configuration.

\paragraph{Implementation.}
All experiments use PyTorch with the Hugging Face \texttt{transformers} library and standard model checkpoints (\texttt{roberta-large}, \texttt{meta-llama/Llama-3-8B}, \texttt{facebook/opt-13b}). LoRA adapters are implemented via the \texttt{peft} library; AdamW is from \texttt{torch.optim}. F1 scorers for SQuAD and DROP follow the official scripts from the respective dataset releases.

\paragraph{RoBERTa-large.}
We follow the experimental protocol of \citet{malladi2023fine} exactly, including data sampling, evaluation splits, and prompt templates. Each task uses $k{=}512$ labeled examples per class.

\paragraph{Llama3-8B and OPT-13B.}
Each task is trained on 1{,}000 examples (200 for CB, 350 for COPA), with a held-out development set of 500 examples used for learning rate selection and a test set of up to 1{,}000 examples. We train for 20{,}000 steps with a linear warmup over 500 steps and a constant learning rate thereafter. The batch size is 16 and we use FP16 precision. The perturbation scale is $\sigma{=}10^{-3}$. Learning rates for GRZO are selected from $\{1\text{e-}7,\,3\text{e-}7,\,5\text{e-}7\}$ based on development set accuracy; baseline settings follow their respective sources~\citep{malladi2023fine,dang2025fzoo}.

\paragraph{Perturbed Parameters.}
We apply full-parameter zeroth-order fine-tuning: all learnable parameters are perturbed, including linear projection weights (sign-factorized), embedding matrices (sparse row-wise perturbation indexed by active tokens), and normalization layer parameters (LayerNorm/RMSNorm). This matches the full-parameter MeZO setting.

\paragraph{Prompts and Task Formulation.}
We adopt the same prompt templates as \citet{malladi2023fine}. For multiple-choice tasks (CB, COPA, WiC, BoolQ, MultiRC), inference uses candidate log-likelihood scoring: each candidate is appended to the prompt and the candidate with the highest mean per-token log-likelihood is selected. During training, we apply teacher forcing on the correct candidate only, computing the loss solely on candidate tokens while excluding prompt tokens.

\paragraph{Hyperparameters.}
Tables~\ref{tab:hparams-roberta} and~\ref{tab:hparams-llm} summarize the hyperparameter settings for all methods. For all methods, learning rates are selected by grid search on the development set; the best value per task is reported. Adam and LoRA use AdamW with $\beta_1{=}0.9$, $\beta_2{=}0.999$.

\begin{table*}[!t]
\centering
\small
\caption{Hyperparameter settings for RoBERTa-large ($k{=}512$).}
\label{tab:hparams-roberta}
\setlength{\tabcolsep}{8pt}
\renewcommand{\arraystretch}{1.15}
\begin{tabular}{l l c}
\toprule
\textbf{Method} & \textbf{Hyperparameter} & \textbf{Value} \\
\midrule
MeZO
& Batch size        & 64 \\
& Learning rate     & $\{1\mathrm{e}{-7},\,5\mathrm{e}{-7},\,1\mathrm{e}{-6}\}$ \\
& Perturbation $\sigma$ & $1\mathrm{e}{-3}$ \\
& Weight decay      & 0 \\
\midrule
FZOO
& Batch size        & 64 \\
& Learning rate     & $\{1\mathrm{e}{-5},\,1\mathrm{e}{-4},\,5\mathrm{e}{-4}\}$ \\
& Perturbation $\sigma$ & $1\mathrm{e}{-3}$ \\
& Weight decay      & 0 \\
\midrule
GRZO (Ours)
& Batch size        & 64 \\
& Learning rate     & $\{1\mathrm{e}{-6},\,1\mathrm{e}{-5},\,5\mathrm{e}{-5}\}$ \\
& Perturbation $\sigma$ & $1\mathrm{e}{-3}$ \\
& Weight decay      & 0 \\
\bottomrule
\end{tabular}
\end{table*}

Adam (FO) and LoRA (FO) hyperparameters follow \citet{malladi2023fine} and prior work.

\begin{table*}[!t]
\centering
\small
\caption{Hyperparameter settings for Llama3-8B and OPT-13B.}
\label{tab:hparams-llm}
\setlength{\tabcolsep}{6pt}
\renewcommand{\arraystretch}{1.15}
\begin{tabular}{l l c c}
\toprule
\textbf{Method} & \textbf{Hyperparameter} & \textbf{Llama3-8B} & \textbf{OPT-13B} \\
\midrule
MeZO
& Batch size        & 16 & 16 \\
& Learning rate     & $\{1\mathrm{e}{-7},\,1\mathrm{e}{-6},\,1\mathrm{e}{-5}\}$ & $\{1\mathrm{e}{-7},\,1\mathrm{e}{-6},\,1\mathrm{e}{-5}\}$ \\
& Perturbation $\sigma$ & $1\mathrm{e}{-3}$ & $1\mathrm{e}{-3}$ \\
& Weight decay      & 0 & 0 \\
& Training steps    & 20{,}000 & 20{,}000 \\
\midrule
FZOO
& Batch size        & 16 & 16 \\
& Learning rate     & $\{1\mathrm{e}{-5},\,5\mathrm{e}{-5},\,1\mathrm{e}{-4}\}$ & $\{1\mathrm{e}{-5},\,5\mathrm{e}{-5},\,1\mathrm{e}{-4}\}$ \\
& Perturbation $\sigma$ & $1\mathrm{e}{-3}$ & $1\mathrm{e}{-3}$ \\
& Weight decay      & 0 & 0 \\
& Training steps    & 20{,}000 & 20{,}000 \\
\midrule
GRZO (Ours)
& Batch size        & 16 & 16 \\
& Learning rate     & $\{1\mathrm{e}{-7},\,3\mathrm{e}{-7},\,5\mathrm{e}{-7}\}$ & $\{1\mathrm{e}{-7},\,3\mathrm{e}{-7},\,5\mathrm{e}{-7}\}$ \\
& Perturbation $\sigma$ & $1\mathrm{e}{-3}$ & $1\mathrm{e}{-3}$ \\
& Weight decay      & 0 & 0 \\
& Training steps    & 20{,}000 & 20{,}000 \\
\bottomrule
\end{tabular}
\end{table*}

\section{Detailed Time and Memory Breakdown}
\label{app:profile_breakdown}

This appendix tabulates the per-step wall-clock decomposition and peak GPU memory measurements that underlie Section~\ref{sec:memory-analysis} and Figure~\ref{fig:profile}. All numbers are from the same production profile: Llama-3-8B (fp16), RTE, batch size $B{=}16$, 4$\times$A100-40GB, mean of 80 samples ($20$ steps $\times$ 4 ranks). In Table~\ref{tab:profile-time}, \textbf{Forward} sums the per-step forward passes (two standard for MeZO-family; two fused-perturbation for GRZO-family); \textbf{Perturb/Setup} covers perturbation handling (three in-place operations for MeZO-family; per-Linear sign-vector and base-noise setup for GRZO-family); \textbf{Update} is the weight-update step; \textbf{Other} is the residual (loss reduce + HF Trainer/DDP overhead).

\begin{table*}[!t]
\centering
\small
\caption{Per-step wall-clock breakdown (ms) across all profiled methods. \textbf{Total}s are production measurements; column definitions are given in the text above.}
\label{tab:profile-time}
\setlength{\tabcolsep}{5pt}
\renewcommand{\arraystretch}{1.12}
\begin{tabular}{@{}lrrrrr@{}}
\toprule
\rowcolor{tableheader}
\textbf{Method} & \textbf{Forward} & \textbf{Perturb/Setup} & \textbf{Update} & \textbf{Other} & \textbf{Total} \\
\midrule
\multicolumn{6}{@{}l}{\textit{MeZO family (in-place perturbation, two forward passes)}} \\
MeZO         & 243 & 318  & 229 & 0.1  & 790  \\
LOZO         & 243 & 421  & 268 & 0.1  & 932  \\
Sparse-MeZO  & 216 & 520  & 292 & 0.1  & 1028 \\
QuZO         & 243 & 1203 & 538 & 0.1  & 1984 \\
\midrule
\multicolumn{6}{@{}l}{\textit{GRZO family (fused per-example perturbation, two forward passes)}}\\
\rowcolor{grzopink}
GRZO         & 712 & 50  & 82 & 129  & 973  \\
\rowcolor{grzopinkdeep}
LO-GRZO    & 713 & 61  & 82 & 134  & 990  \\
\rowcolor{grzopinkdeep}
Sparse-GRZO  & 702 & 170 & 82 & 235  & 1189 \\
\rowcolor{grzopinkdeep}
Qu-GRZO    & 709 & 208 & 82 & 1567 & 2566 \\
\bottomrule
\end{tabular}
\end{table*}

\begin{table*}[!t]
\centering
\small
\caption{Peak GPU memory per step (GB), same setup as Table~\ref{tab:profile-time}. The forward-only inference peak is $17.74$~GB; \textbf{Over floor} = peak $-$ inference floor, in GB and as a percentage. MeZO-family methods are profiled with fully in-place implementations.}
\label{tab:profile-memory}
\setlength{\tabcolsep}{6pt}
\renewcommand{\arraystretch}{1.12}
\begin{tabular}{@{}lrrr@{}}
\toprule
\rowcolor{tableheader}
\textbf{Method} & \textbf{Peak (GB)} & \multicolumn{2}{c}{\textbf{Over floor}} \\
\cmidrule(l){3-4}
\rowcolor{tableheader}
 &  & \textbf{GB} & \textbf{\%} \\
\midrule
Forward-only inference & 17.74 & --- & --- \\
\midrule
\multicolumn{4}{@{}l}{\textit{MeZO family}} \\
MeZO         & 17.74 & $+0.00$ & $0.0$ \\
LOZO         & 17.78 & $+0.04$ & $0.2$ \\
Sparse-MeZO  & 24.64 & $+6.90$ & $38.9$ \\
QuZO         & 20.66 & $+2.92$ & $16.5$ \\
\midrule
\multicolumn{4}{@{}l}{\textit{GRZO family}} \\
\rowcolor{grzopink}
GRZO         & 17.82 & $+0.08$ & $0.5$ \\
\rowcolor{grzopinkdeep}
LO-GRZO      & 17.86 & $+0.12$ & $0.7$ \\
\rowcolor{grzopinkdeep}
Sparse-GRZO  & 24.31 & $+6.57$ & $37.0$ \\
\rowcolor{grzopinkdeep}
Qu-GRZO      & 19.14 & $+1.40$ & $7.9$ \\
\bottomrule
\end{tabular}
\end{table*}

\begin{figure*}[!ht]
    \centering
    \includegraphics[width=\textwidth]{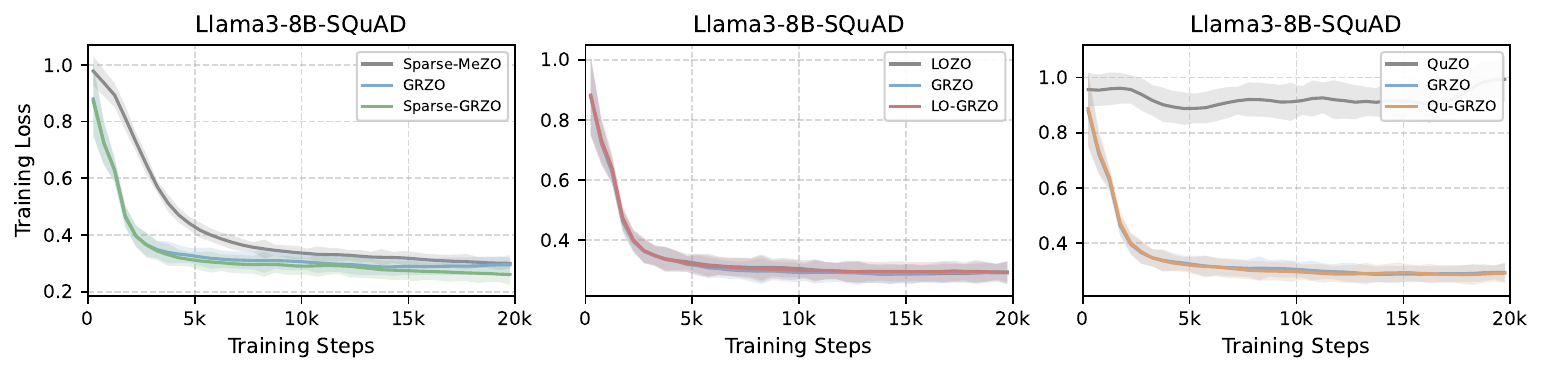}
    \hfill
    \includegraphics[width=\textwidth]{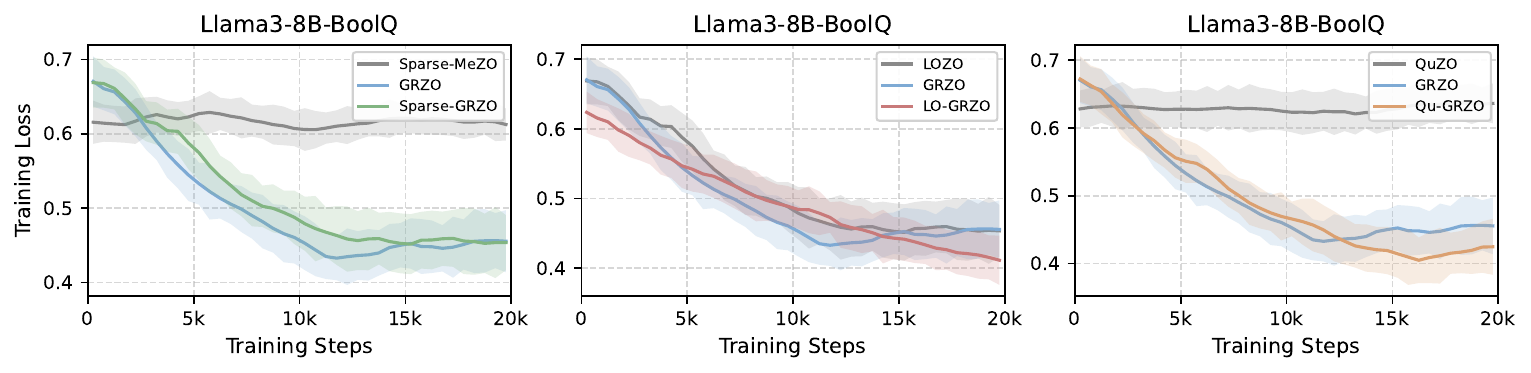}
    \caption{Per-variant GRZO+X vs MeZO+X training-loss curves on Llama3-8B. \textbf{Left}: SQuAD. \textbf{Right}: BoolQ.}
    \label{fig:per-variant-compare}
\end{figure*}

\paragraph{Observations.}
Two patterns dominate. First, MeZO-family in-place perturbation cost scales steeply with the variant's per-parameter work---LOZO ($1.32\times$), Sparse-MeZO ($1.64\times$), QuZO ($3.78\times$) over plain MeZO---because each variant traverses all $\sim$8B parameters during perturb, restore, and update. GRZO removes this scaling by representing each per-example perturbation as $B(d_{\text{out}}{+}d_{\text{in}})$ int-8 sign vectors per layer rather than a per-parameter modification; consequently the GRZO-family update step is uniformly $82$~ms across all four pairings, and combined-variant perturb/setup cost is $\leq 270$~ms. Second, on memory, what separates the rows of Table~\ref{tab:profile-memory} is the variant's own scratch storage, not the choice of ZO core: vanilla and LO-GRZO sit within $0.7\%$ of the inference floor, whereas Sparse-GRZO and Qu-GRZO carry the sparse mask and quantization buffers of their paired variant. Pairwise, GRZO costs $+0.08$~GB over MeZO on the vanilla and LOZO pairings and is $0.33$ and $1.52$~GB \emph{cheaper} on the Sparse and quantized pairings, where avoiding a mutated weight copy alongside the variant's own buffers pays off.

\paragraph{What the $B$ Directions Cost.}
Implemented at their respective optima, MeZO and GRZO both reach the inference floor, so the question Table~\ref{tab:profile-memory} answers is not which core is cheaper but what the extra $B{-}1$ directions cost: $+0.08$~GB, or $0.5\%$. Two mechanisms hold it there. Each layer's perturbed weight is constructed transiently inside a forward pre-hook and freed before the next layer runs, so at most one layer's perturbation is live at any moment regardless of depth; and the sign factorization compresses all per-example variation into $B(d_{\text{out}}{+}d_{\text{in}})$ int-8 sign vectors per layer, so the per-layer state of the $B$-direction estimator matches that of a single-direction one. Neither term scales with $B\times$ the parameter count, which is what a naive $B$-direction estimator would pay. The LOZO pairing reproduces this exactly: $+0.04$~GB for LOZO and $+0.12$~GB for LO-GRZO over the floor, again $+0.08$~GB for the extra directions. Sparse and quantized pairings are instead dominated by the variant's own per-parameter scratch storage, which is common to both families rather than attributable to the ZO core.

\paragraph{Caveat: FZOO Variant Ambiguity.}
FZOO~\citep{dang2025fzoo} admits both sequential and batched-parallel perturbation variants with substantially different wall-clock and memory profiles (the latter trading $O(N)$ activation memory for near-MeZO wall-clock). Reporting either single number would mischaracterize the method, and the choice is implementation-dependent rather than algorithmic. We therefore restrict the FZOO comparison to accuracy, which is independent of the parallel/sequential choice.

\paragraph{Caveat: QuZO and Qu-GRZO Profile Numbers.}
The peak-memory and per-step wall-clock numbers reported by \citet{zhou2025quzo} for QuZO (their Tables~4 and 5; Appendix~C) come from a Cutlass INT8 kernel that stores weights in packed int4/int8 format and dispatches GEMMs onto integer Tensor Core paths; this kernel is not part of the public release. The released code (\texttt{qft} mode) simulates low-bit fine-tuning by fake-quantizing weights inside each forward pass while retaining fp16 storage and fp16 compute, and our QuZO and Qu-GRZO implementations follow the same paradigm. Profiling either method on A100 in fp16 therefore captures fake-quant overhead, not the weight footprint and INT8-GEMM throughput that drive the deployment numbers of~\citet{zhou2025quzo}; the QuZO and Qu-GRZO rows in Tables~\ref{tab:profile-time}--\ref{tab:profile-memory} should be read as fake-quant simulation, not as comparable to QuZO's Table~5. We therefore omit QuZO and Qu-GRZO from the cross-variant memory and time claim in Section~\ref{sec:composing}. Their accuracy entries in Table~\ref{tab:additional-baselines} apply paper Algorithm~1 with identical W8 weight and perturbation quantization to both methods and remain apples-to-apples. Within the fake-quant regime on A100, Qu-GRZO is still empirically more memory-efficient than QuZO (though $29\%$ slower per step, the largest per-step premium of the four pairings); this memory advantage stems from forward-hook streaming rather than storage precision, and would persist or amplify on true low-bit hardware.

\section{GRZO-combined Variants Convergence Examples}
\label{app:per_variant_compare}

Figure~\ref{fig:per-variant-compare} shows per-task training-dynamics views of GRZO+X versus its paired MeZO variants and vanilla GRZO baseline on SQuAD and BoolQ (Llama3-8B).

\section{Batch Size Sensitivity}
\label{app:bs-ablation}

Figure~\ref{fig:bs-ablation} shows training loss curves for GRZO across batch sizes $B \in \{4, 8, 16, 32\}$ on two tasks: SST-2 (Llama3-8B) and COPA (OPT-13B).
The results corroborate the theoretical prediction in Section~\ref{sec:grzo_bs_sensitivity}: the group-relative normalizer requires a stable within-batch loss standard deviation $s$ to produce reliable advantage weights, and this stability breaks down at very small batch sizes.

\begin{figure}[!ht]
    \centering
    \includegraphics[width=\columnwidth]{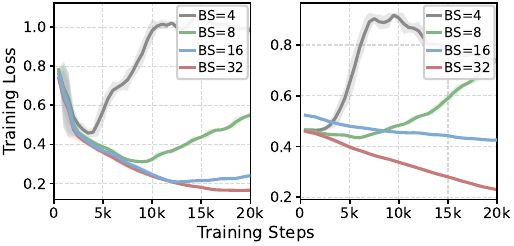}
    \caption{Batch size sensitivity of GRZO. \textbf{Left}: Llama3-8B/SST-2. \textbf{Right}: OPT-13B/COPA. $B{=}4$ diverges; $B{=}8$ unstable; $B{\geq}16$ converges stably.}
    \label{fig:bs-ablation}
\end{figure}

\section{Alignment with the True Gradient}
\label{app:cosine}

We compare each single ZO gradient estimate against the exact backpropagation gradient on one
pretrained Llama3-8B decoder block ($218$M parameters), over $3{,}000$ paired trials per batch size.
Table~\ref{tab:cosine} reports the cosine similarity between the estimate and the true gradient.
In $218$M dimensions the absolute cosines are small for both methods, but GRZO's grows with $B$
while MeZO's is flat, which is the behaviour predicted by \eqref{eq:beff}: MeZO draws one direction
regardless of $B$, whereas GRZO draws one per example. On the same block, the mean pairwise cosine
between per-example gradients is $c=0.83$, which by \eqref{eq:beff} gives $B_{\mathrm{eff}}\approx13$
at $B{=}16$.

\begin{table}[h]
\centering
\small
\caption{Cosine similarity to the exact backpropagation gradient on a Llama3-8B decoder block
($218$M parameters), in units of $10^{-5}$, mean $\pm$ standard error over $3{,}000$ paired trials per $B$.}
\label{tab:cosine}
\setlength{\tabcolsep}{5pt}
\renewcommand{\arraystretch}{1.12}
\begin{tabular}{@{}lccc@{}}
\toprule
\rowcolor{tableheader}
$B$ & \textbf{GRZO} & \textbf{MeZO} & \textbf{Ratio} \\
\midrule
1  & $5.280\pm0.072$  & $5.420\pm0.073$ & $0.97$ \\
4  & $11.87\pm0.091$  & $5.330\pm0.074$ & $2.23$ \\
8  & $17.21\pm0.094$  & $5.329\pm0.075$ & $3.23$ \\
16 & $24.51\pm0.096$  & $5.346\pm0.075$ & $4.58$ \\
32 & $34.80\pm0.099$  & $5.339\pm0.075$ & $6.52$ \\
\bottomrule
\end{tabular}
\end{table}

\section{Effect of Quantization on the Loss-Difference Scale}
\label{app:quant_scale}

To separate the effect of quantization on the loss differences from its effect on the update, we run
a controlled MLP probe ($B{=}16$, $20$k trials) with and without W4 weight quantization.
Quantization raises the dispersion of the per-example loss differences from
$\mathrm{std}(\delta)=0.0119$ to $0.0493$, a factor of $4.15$.
Table~\ref{tab:quant-scale} reports how this propagates to the mean update norm for the
MeZO estimator, the unnormalized canonical estimator, and GRZO.
MeZO and the canonical estimator scale by $4.13\times$ and $4.15\times$ respectively, matching the
dispersion increase; GRZO changes by $1.00\times$, since the group-relative weights
$a_i=\delta_i/(s+\epsilon)$ divide by a within-batch standard deviation that is inflated by the same
factor.

\begin{table}[h]
\centering
\small
\caption{Mean update norm with and without W4 weight quantization, on a controlled MLP probe
($B{=}16$, $20$k trials).}
\label{tab:quant-scale}
\setlength{\tabcolsep}{6pt}
\renewcommand{\arraystretch}{1.12}
\begin{tabular}{@{}lccc@{}}
\toprule
\rowcolor{tableheader}
\textbf{Estimator} & \textbf{No quant.} & \textbf{W4} & \textbf{Change} \\
\midrule
MeZO                 & $66.6$  & $274.8$ & $4.13\times$ \\
Canonical (unnorm.)  & $75.6$  & $313.9$ & $4.15\times$ \\
\rowcolor{grzopink}
GRZO                 & $13.42$ & $13.44$ & $1.00\times$ \\
\bottomrule
\end{tabular}
\end{table}

\onecolumn
\section{Unbiasedness and Smoothing Bias of GRZO}
\label{app:grzo_unbiased}

\subsection{Estimator Definition and Scope}
\label{app:grzo_est_def}

\paragraph{Joint Parameterization.}
GRZO perturbs all trainable blocks at every step, and each loss difference $\delta_i$ is produced by perturbing them \emph{simultaneously}. The analysis is therefore carried out on the concatenated parameter vector rather than one block at a time; Appendix~\ref{app:blockwise_full_network} explains why a per-block treatment is unavailable. Let
\begin{equation}
\boldsymbol{\theta}=\bigl(\mathrm{vec}(\mat{W}^{(1)}),\dots,\mathrm{vec}(\mat{W}^{(L)})\bigr)\in\mathbb{R}^{D},
\end{equation}
with $D=\sum_{\ell=1}^{L}D_{\text{out}}^{(\ell)}D_{\text{in}}^{(\ell)}$ the total number of perturbed parameters, and for each example $i\in\{1,\dots,B\}$ let
\begin{equation}
\mat{z}_i=\bigl(\mat{z}_i^{(1)},\dots,\mat{z}_i^{(L)}\bigr),\quad
\mat{z}_i^{(\ell)}=\mathrm{vec}\bigl(\mat{U}^{(\ell)}\!\odot(\mat{r}_i^{(\ell)}(\mat{s}_i^{(\ell)})^{\!\top})\bigr),
\label{eq:def_flipout_pert}
\end{equation}
with an independent base $\mat{U}^{(\ell)}$ per block and independent Rademacher sign pairs $(\mat{r}_i^{(\ell)},\mat{s}_i^{(\ell)})$ per block and example. Writing $\mat{u}\in\mathbb{R}^{D}$ for the concatenated base entries and $\mat{v}_i\in\{\pm1\}^{D}$ for the concatenated sign pattern, \eqref{eq:def_flipout_pert} reads compactly as $\mat{z}_i=\mat{u}\odot\mat{v}_i$. Every result below uses only this form, so all statements hold verbatim for a single layer ($L{=}1$, $D=D_{\text{out}}D_{\text{in}}$).

The two-sided losses and their differences are
\begin{equation}
\ell_i^{\pm} := \ell(\boldsymbol{\theta} \pm \sigma \mat{z}_i;\xi_i), \qquad \delta_i := \ell_i^{+}-\ell_i^{-}.
\label{eq:def_two_sided_loss}
\end{equation}

\paragraph{Canonical Two-Sided ZO Estimator.}
The standard (canonical) two-sided estimator is
\begin{equation}
\widehat{\mat{g}}_{\text{can}}(\boldsymbol{\theta})
:= \frac{1}{2\sigma B}\sum_{i=1}^{B} \delta_i\, \mat{z}_i .
\label{eq:g_canonical}
\end{equation}
This serves as the theoretical baseline from which GRZO departs.

\paragraph{GRZO Estimator (Group-Relative Normalization).}
GRZO computes the within-batch standard deviation
\begin{equation}
s = \sqrt{\frac{1}{B}\sum_{i=1}^{B}(\delta_i-\bar\delta)^2}, \qquad \bar\delta = \frac{1}{B}\sum_{i=1}^{B}\delta_i,
\label{eq:grzo_batch_stats}
\end{equation}
defines group-relative weights $a_i := \delta_i\,/\,(s+\epsilon)$, and forms
\begin{equation}
\widehat{\mat{g}}_{\text{GRZO}}(\boldsymbol{\theta})
:= \frac{1}{2\sigma B}\sum_{i=1}^{B} a_i\, \mat{z}_i.
\label{eq:g_grzo}
\end{equation}
The scaling by $1/(s+\epsilon)$ makes the update invariant to the magnitude of $\{\delta_i\}$, analogous to advantage normalization in GRPO~\citep{shao2024deepseekmath}. Because two-sided finite differences satisfy $\mathbb{E}[\delta_i]=0$ by symmetry of $\mat{z}_i \leftrightarrow -\mat{z}_i$, no explicit mean baseline is subtracted from the numerator.

\subsection{Marginal Distribution of Flipout Perturbations}
\label{app:flipout_marginal}

We only need the \emph{marginal} distribution of $\mat{z}_i$ for unbiasedness.
Let $\mat{U}$ have i.i.d.\ entries $\mat{U}_{jk}$ and let $\mat{r}_i\in\{\pm 1\}^{D_{\text{out}}}$,
$\mat{s}_i\in\{\pm 1\}^{D_{\text{in}}}$ be independent Rademacher vectors.
Then each coordinate of $\mat{z}_i$ is of the form
\begin{equation}
(\mat{z}_i)_{(j,k)} = \mat{U}_{jk}\cdot (\mat{r}_i)_j\cdot (\mat{s}_i)_k,
\label{eq:coord_signflip}
\end{equation}
i.e., a sign flip of $\mat{U}_{jk}$.

\paragraph{Gaussian Base Noise (Exact).}
If $\mat{U}_{jk}\sim\mathcal{N}(0,1)$, then $(\mat{z}_i)_{(j,k)} \sim \mathcal{N}(0,1)$ as well,
because multiplying a standard normal by an independent $\pm 1$ does not change its distribution.
Hence $\mat{z}_i \sim \mathcal{N}(0,\mat{I}_D)$ marginally.

\paragraph{Rademacher Base Noise (Isotropic, Symmetric).}
If $\mat{U}_{jk}\sim \mathrm{Rad}(\pm1)$, then $(\mat{z}_i)_{(j,k)}\sim \mathrm{Rad}(\pm1)$ marginally.
In either case, $\mat{z}_i$ is symmetric ($\mat{z}_i \stackrel{d}{=} -\mat{z}_i$) and isotropic
($\mathbb{E}[\mat{z}_i]=0$, $\mathbb{E}[\mat{z}_i \mat{z}_i^\top]=\mat{I}_D$).

\subsection{Unbiasedness w.r.t.\ a Smoothed Objective (Gaussian Case: Exact)}
\label{app:unbiased_gaussian_exact}

In the Gaussian case, define the Gaussian-smoothed population objective
\begin{equation}
F_\sigma(\boldsymbol{\theta}) := \mathbb{E}_{u\sim\mathcal{N}(0,\mat{I}_D)}\big[F(\boldsymbol{\theta}+\sigma \mat{u})\big].
\label{eq:def_gauss_smooth}
\end{equation}
By Stein's identity, we have
\begin{equation}
\nabla F_\sigma(\boldsymbol{\theta})
= \frac{1}{\sigma}\,\mathbb{E}_{u}\big[F(\boldsymbol{\theta}+\sigma \mat{u})\,\mat{u}\big].
\label{eq:stein_identity}
\end{equation}
Using symmetry of the Gaussian, $\mat{u}\stackrel{d}{=}-\mat{u}$, we obtain the antithetic form:
\begin{align}
\nabla F_\sigma(\boldsymbol{\theta})
&= \frac{1}{2\sigma}\,\mathbb{E}_{u}\Big[\big(F(\boldsymbol{\theta}+\sigma \mat{u})-F(\boldsymbol{\theta}-\sigma \mat{u})\big)\,\mat{u}\Big].
\label{eq:stein_antithetic}
\end{align}

Now consider a single-example objective $f_i(\boldsymbol{\theta}):=\mathbb{E}_{\xi_i}[\ell(\boldsymbol{\theta};\xi_i)]$.
If $\mat{z}_i\sim\mathcal{N}(0,\mat{I}_D)$ marginally (Appendix~\ref{app:flipout_marginal}),
then applying \eqref{eq:stein_antithetic} to $f_i$ yields
\begin{equation}
\mathbb{E}_{z_i}\left[\frac{f_i(\boldsymbol{\theta}+\sigma \mat{z}_i)-f_i(\boldsymbol{\theta}-\sigma \mat{z}_i)}{2\sigma}\,\mat{z}_i\right]
= \nabla f_{i,\sigma}(\boldsymbol{\theta}),
\label{eq:single_exact_unbiased}
\end{equation}
where $f_{i,\sigma}$ is the Gaussian smoothing of $f_i$.

Taking expectation over the mini-batch sampling and averaging over $i$,
linearity of expectation gives the unbiasedness of the canonical estimator:
\begin{equation}
\mathbb{E}\big[\widehat{\mat{g}}_{\text{can}}(\boldsymbol{\theta})\big]
= \nabla F_\sigma(\boldsymbol{\theta}).
\label{eq:exact_unbiased_gauss}
\end{equation}

\paragraph{GRZO as Self-Normalized Estimator.}
GRZO applies a positive self-normalization to the canonical estimator. Since $a_i = \delta_i/(s+\epsilon)$, the estimator factors as
\begin{equation}
\widehat{\mat{g}}_{\text{GRZO}}(\boldsymbol{\theta})
= \frac{1}{s+\epsilon}\cdot\frac{1}{2\sigma B}\sum_{i=1}^{B}\delta_i\,\mat{z}_i
= \frac{1}{s+\epsilon}\cdot\widehat{\mat{g}}_{\text{can}}(\boldsymbol{\theta}).
\label{eq:grzo_decomp}
\end{equation}
The scalar $1/(s+\epsilon)$ depends on $\{\mat{z}_i\}_{i=1}^B$ through $\{\delta_i\}$; it is therefore random and correlated with $\widehat{\mat{g}}_{\text{can}}$, and cannot be factored out of the expectation or absorbed into the learning rate by taking expectations. Equivalently, the GRZO update can be written as $\boldsymbol{\theta}_{t+1} = \boldsymbol{\theta}_t - \tilde{\eta}_t\,\widehat{\mat{g}}_{\text{can}}(\boldsymbol{\theta}_t)$ with a \emph{random} effective step size $\tilde{\eta}_t := \eta_t/(s_t+\epsilon)$, paralleling the FZOO~\citep{dang2025fzoo} convergence framing. Rather than assume that $s$ concentrates, we bound its finite-batch effect directly in Appendix~\ref{app:grzo_normalization}: with $c_\star=1/(s_\star+\epsilon)$, $s_\star^2=\mathbb{E}[s^2]$, Lemma~\ref{lem:s_concentration} and Corollary~\ref{cor:grzo_normalized} give
\begin{equation}
\mathbb{E}\big[\widehat{\mat{g}}_{\text{GRZO}}(\boldsymbol{\theta})\big]
\;=\; c_\star\bigl(1+O(B^{-1}{+}d_{\mathrm{eff}}^{-1})\bigr)\,\nabla F_\sigma(\boldsymbol{\theta}) + O(\sigma^2).
\label{eq:grzo_approx_unbiased}
\end{equation}
GRZO is thus direction-preserving up to an explicit finite-batch correction that vanishes as $B$ grows, and the convergence analysis (Appendix~\ref{app:grzo_convergence}) runs on the effective step size $\tilde\eta=\eta\,c_\star$ with this correction carried through the constants.

\subsection{Taylor Expansion View (General Symmetric Isotropic Directions)}
\label{app:taylor_general}

This subsection provides an alternative derivation via Taylor expansion, which applies to
any symmetric isotropic direction distribution (including Rademacher).

Assume $f(\boldsymbol{\theta})$ is three-times differentiable and its third derivative tensor is bounded:
\begin{equation}
\|\nabla^3 f(\boldsymbol{\theta})\|_{\mathrm{op}} \le \rho, \quad \forall \boldsymbol{\theta}.
\label{eq:assump_third_deriv}
\end{equation}
For a fixed direction $\mat{z}$,
a third-order Taylor expansion gives
\begin{align}
f(\boldsymbol{\theta}+\sigma \mat{z})
&= f(\boldsymbol{\theta}) + \sigma \langle \nabla f(\boldsymbol{\theta}), \mat{z}\rangle
+ \frac{\sigma^2}{2} \mat{z}^\top \nabla^2 f(\boldsymbol{\theta})\, \mat{z}
+ \frac{\sigma^3}{6}\nabla^3 f(\boldsymbol{\theta})[\mat{z},\mat{z},\mat{z}] + O(\sigma^4\|\mat{z}\|^4),
\\
f(\boldsymbol{\theta}-\sigma \mat{z})
&= f(\boldsymbol{\theta}) - \sigma \langle \nabla f(\boldsymbol{\theta}), \mat{z}\rangle
+ \frac{\sigma^2}{2} \mat{z}^\top \nabla^2 f(\boldsymbol{\theta})\, \mat{z}
- \frac{\sigma^3}{6}\nabla^3 f(\boldsymbol{\theta})[\mat{z},\mat{z},\mat{z}] + O(\sigma^4\|\mat{z}\|^4).
\end{align}
Subtracting,
\begin{equation}
f(\boldsymbol{\theta}+\sigma \mat{z})-f(\boldsymbol{\theta}-\sigma \mat{z})
= 2\sigma\langle \nabla f(\boldsymbol{\theta}), \mat{z}\rangle
+ \frac{\sigma^3}{3}\nabla^3 f(\boldsymbol{\theta})[\mat{z},\mat{z},\mat{z}]
+ O(\sigma^5\|\mat{z}\|^5).
\label{eq:taylor_diff}
\end{equation}
Plugging \eqref{eq:taylor_diff} into the two-sided estimator for a single direction,
\begin{align}
\frac{f(\boldsymbol{\theta}+\sigma \mat{z})-f(\boldsymbol{\theta}-\sigma \mat{z})}{2\sigma}\, \mat{z}
&= \langle \nabla f(\boldsymbol{\theta}), \mat{z}\rangle \mat{z}
+ \frac{\sigma^2}{6}\nabla^3 f(\boldsymbol{\theta})[\mat{z},\mat{z},\mat{z}]\, \mat{z}
+ O(\sigma^4\|\mat{z}\|^6).
\label{eq:taylor_est}
\end{align}
Taking expectation over a symmetric isotropic $\mat{z}$ with $\mathbb{E}[\mat{z}\mat{z}^\top]=\mat{I}$ yields
\begin{equation}
\mathbb{E}\big[\langle \nabla f(\boldsymbol{\theta}), \mat{z}\rangle \mat{z}\big]
= \mathbb{E}[\mat{z}\mat{z}^\top]\nabla f(\boldsymbol{\theta})
= \nabla f(\boldsymbol{\theta}).
\label{eq:main_term}
\end{equation}
For the remainder, symmetry implies cancellation of odd moments and
\eqref{eq:assump_third_deriv} implies
\begin{equation}
\left\|\mathbb{E}\left[\frac{\sigma^2}{6}\nabla^3 f(\boldsymbol{\theta})[\mat{z},\mat{z},\mat{z}]\, \mat{z}\right]\right\|
\le \frac{\sigma^2}{6}\rho\,\mathbb{E}\|\mat{z}\|^4
= O(\sigma^2).
\label{eq:bias_order_sigma2}
\end{equation}
Therefore,
\begin{equation}
\mathbb{E}\left[\frac{f(\boldsymbol{\theta}+\sigma \mat{z})-f(\boldsymbol{\theta}-\sigma \mat{z})}{2\sigma}\, \mat{z}\right]
= \nabla f(\boldsymbol{\theta}) + O(\sigma^2).
\label{eq:approx_unbiased}
\end{equation}
Averaging over examples yields $\mathbb{E}[\widehat{\mat{g}}_{\text{can}}(\boldsymbol{\theta})]
= \nabla F(\boldsymbol{\theta}) + O(\sigma^2)$.

\subsection{Smoothing Bias: Smoothed vs. Original Gradient}
\label{app:smoothing_bias}

We bound $\|\nabla F_\sigma(\boldsymbol{\theta})-\nabla F(\boldsymbol{\theta})\|$.
Assume $\nabla^2 F$ is $\rho$-Lipschitz (equivalently, \eqref{eq:assump_third_deriv} holds for $F$).
Then for any $\mat{u}$,
\begin{equation}
\nabla F(\boldsymbol{\theta}+\sigma \mat{u})
= \nabla F(\boldsymbol{\theta}) + \nabla^2 F(\boldsymbol{\theta})\,\sigma \mat{u} + R(\boldsymbol{\theta},\mat{u}),
\qquad
\|R(\boldsymbol{\theta},\mat{u})\|\le \frac{\rho}{2}\sigma^2\|\mat{u}\|^2.
\label{eq:grad_taylor_remainder}
\end{equation}
Taking expectation over $\mat{u}\sim\mathcal{N}(0,\mat{I}_D)$ (or any symmetric isotropic distribution),
the linear term vanishes since $\mathbb{E}[\mat{u}]=0$, giving
\begin{align}
\|\nabla F_\sigma(\boldsymbol{\theta})-\nabla F(\boldsymbol{\theta})\|
&= \left\|\mathbb{E}_{u}[\nabla F(\boldsymbol{\theta}+\sigma \mat{u})]-\nabla F(\boldsymbol{\theta})\right\|
= \left\|\mathbb{E}_{u}[R(\boldsymbol{\theta},\mat{u})]\right\|
\le \frac{\rho}{2}\sigma^2\,\mathbb{E}\|\mat{u}\|^2.
\end{align}
For Gaussian $\mat{u}\sim\mathcal{N}(0,\mat{I}_D)$, $\mathbb{E}\|\mat{u}\|^2=D$, hence
\begin{equation}
\|\nabla F_\sigma(\boldsymbol{\theta})-\nabla F(\boldsymbol{\theta})\|
\le \frac{\rho}{2}\sigma^2 D.
\label{eq:smoothing_bias_bound}
\end{equation}

\paragraph{Remark (Weaker Assumption).}
If we only assume $F$ is $L$-smooth (i.e., $\nabla F$ is $L$-Lipschitz),
then one can obtain the weaker bound
$\|\nabla F_\sigma(\boldsymbol{\theta})-\nabla F(\boldsymbol{\theta})\|
\le L\sigma\,\mathbb{E}\|\mat{u}\| = O(\sigma\sqrt{d})$ for Gaussian $\mat{u}$.

\subsection{Second Moment and Variance Bound of GRZO}
\label{app:grzo_second_moment}

\paragraph{Setup.}
Let $\boldsymbol{\theta}\in\mathbb{R}^{D}$ be the concatenated parameter vector of Appendix~\ref{app:grzo_est_def}.
For each flattened example $i\in\{1,\dots,B\}$, define the Flipout perturbation
$\mat{z}_i=\mathrm{vec}(\Delta \mat{W}_i)$ with $\Delta \mat{W}_i=\mat{U}\odot(\mat{r}_i\mat{s}_i^\top)$ as in
\eqref{eq:def_flipout_pert}. Consider the two-sided loss difference
$\delta_i=\ell(\boldsymbol{\theta}+\sigma \mat{z}_i;\xi_i)-\ell(\boldsymbol{\theta}-\sigma \mat{z}_i;\xi_i)$ and the canonical estimator
\begin{equation}
\widehat g(\boldsymbol{\theta})
:= \widehat g_{\text{can}}(\boldsymbol{\theta})
= \frac{1}{2\sigma B}\sum_{i=1}^{B}\delta_i\, \mat{z}_i.
\label{eq:app_g_def}
\end{equation}

\paragraph{Assumptions for This Subsection.}
We assume:
(i) $\ell(\cdot;\xi)$ is three-times differentiable and $\|\nabla^3 \ell(\boldsymbol{\theta};\xi)\|_{\mathrm{op}}\le \rho$;
(ii) the per-example gradients are bounded in second moment:
$\mathbb{E}\|\nabla \ell(\boldsymbol{\theta};\xi)\|^2 \le \|\nabla F(\boldsymbol{\theta})\|^2+\nu^2$;
(iii) the base noise has i.i.d.\ entries with $\mathbb{E}[\mat{U}_{jk}^2]=1$ and fourth moment
$m_4:=\mathbb{E}[\mat{U}_{jk}^4]$ (Gaussian: $m_4=3$, Rademacher: $m_4=1$).

\subsubsection{Key Conditional Independence}
\label{app:flipout_decorrelation}

Let $\mat{u}=\mathrm{vec}(\mat{U})\in\mathbb{R}^{D}$ and define the sign vector
$\mat{v}_i=\mathrm{vec}(\mat{r}_i\mat{s}_i^\top)\in\{\pm 1\}^d$. Then
\begin{equation}
\mat{z}_i = \mat{u} \odot \mat{v}_i .
\label{eq:zi_u_vi}
\end{equation}
Since $\{(\mat{r}_i,\mat{s}_i)\}_{i=1}^B$ are independent across $i$, the sign vectors $\{\mat{v}_i\}_{i=1}^B$
are independent across $i$. Therefore:

\begin{lemma}[Conditional independence and moments]
\label{lem:cond_indep}
Conditioned on $\mat{U}$ (equivalently, on $\mat{u}$), the perturbations $\{\mat{z}_i\}_{i=1}^B$ are independent.
Moreover,
\[
\mathbb{E}[\mat{z}_i\mid \mat{U}]=0,\qquad
\mathbb{E}[\mat{z}_i \mat{z}_i^\top \mid \mat{U}]=\mathrm{diag}(\mat{u}^2)=:\mat{\Lambda}_{\mat{U}},
\qquad
\mathbb{E}[\mat{z}_i \mat{z}_j^\top\mid \mat{U}]=0 \ (i\neq j).
\]
Unconditionally, $\mathbb{E}[\mat{z}_i \mat{z}_i^\top]=\mat{I}_D$.
\end{lemma}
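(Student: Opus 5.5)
Since $\mat{z}_i=\mat{u}\odot\mat{v}_i$ with $\mat{v}_i=\mathrm{vec}(\mat{r}_i\mat{s}_i^\top)$ a deterministic function of the pair $(\mat{r}_i,\mat{s}_i)$, the plan is to condition on $\mat{U}$ and treat $\mat{u}$ as a fixed vector. Conditional independence across $i$ follows at once: the pairs $\{(\mat{r}_i,\mat{s}_i)\}_{i=1}^B$ are mutually independent and independent of $\mat{U}$. Hence the vectors $\mat{v}_i$ are conditionally independent given $\mat{U}$, and so are the $\mat{z}_i$, which are measurable functions of $(\mat{u},\mat{v}_i)$. For several blocks, the sign pairs are independent across both blocks and examples. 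So conditioning on the full concatenated base $\mat{u}$ leaves each $\mat{v}_i$ a concatenation of independent per-block patterns, still independent across $i$.

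Next I compute the sign moments entrywise. For a coordinate $(j,k)$, $(\mat{v}_i)_{(j,k)}=(\mat{r}_i)_j(\mat{s}_i)_k$ has mean $\mathbb{E}[(\mat{r}_i)_j]\,\mathbb{E}[(\mat{s}_i)_k]=0$, so $\mathbb{E}[\mat{z}_i\mid\mat{U}]=\mat{u}\odot\mathbb{E}[\mat{v}_i]=0$. The off-diagonal cross terms then vanish: for $i\neq j$, conditional independence gives $\mathbb{E}[\mat{z}_i\mat{z}_j^\top\mid\mat{U}]=\mathbb{E}[\mat{z}_i\mid\mat{U}]\,\mathbb{E}[\mat{z}_j\mid\mat{U}]^\top=0$.

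For the same-example second moment, note $\mathbb{E}[\mat{z}_i\mat{z}_i^\top\mid\mat{U}]=\mathrm{diag}(\mat{u})\,\mathbb{E}[\mat{v}_i\mat{v}_i^\top]\,\mathrm{diag}(\mat{u})$. Here $\mathbb{E}[(\mat{v}_i)_{(j,k)}(\mat{v}_i)_{(j',k')}]=\mathbb{E}[r_jr_{j'}]\,\mathbb{E}[s_ks_{k'}]=\mathbb{1}[j=j']\,\mathbb{1}[k=k']$, so $\mathbb{E}[\mat{v}_i\mat{v}_i^\top]=\mat{I}$ and the conditional second moment equals $\mathrm{diag}(\mat{u}^2)=\mat{\Lambda}_{\mat{U}}$. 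This is the one step needing care. The rank-one pattern $\mat{r}_i\mat{s}_i^\top$ does not have independent entries: for example, $v_{(j,k)}v_{(j,k')}v_{(j',k)}v_{(j',k')}=1$ always. Pairwise products nevertheless factor, which is all the second moment requires. Across blocks, distinct blocks contribute zero cross terms by independence and zero mean.

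Finally, taking expectation over $\mat{U}$ with $\mathbb{E}[\mat{U}_{jk}^2]=1$ gives $\mathbb{E}[\mat{z}_i\mat{z}_i^\top]=\mathbb{E}[\mat{\Lambda}_{\mat{U}}]=\mat{I}_D$ by the tower property. This also establishes Lemma~\ref{lem:flipout_indep}.
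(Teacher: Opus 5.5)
Your proof is correct and follows the same route as the paper's: condition on $\mat{U}$, write $\mat{z}_i=\mat{u}\odot\mat{v}_i$, use that the factorized Rademacher signs have zero mean and pairwise-uncorrelated entries, obtain independence across $i$ from the independent sign pairs, and finish with $\mathbb{E}[\mat{U}_{jk}^2]=1$ via the tower property. Your explicit check that $\mathbb{E}[\mat{v}_i\mat{v}_i^\top]=\mat{I}$ holds even though the rank-one sign pattern has dependent entries, and your treatment of the multi-block concatenation, spell out details that the paper's two-line proof leaves implicit.
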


\noindent
\emph{Proof.}
Given $\mat{U}$, each coordinate satisfies $(\mat{z}_i)_k=u_k (\mat{v}_i)_k$ with $\mathbb{E}[(\mat{v}_i)_k]=0$ and
$\mathbb{E}[(\mat{v}_i)_k(\mat{v}_i)_{k'}]=0$ for $k\neq k'$ (factorized Rademacher signs).
Independence across $i$ follows from independence of $\{\mat{v}_i\}$. $\square$

\subsubsection{Taylor Reduction from Perturbation Products to Quadratic Forms}
\label{app:taylor_to_quadratic}

Fix $\xi_i$ and denote $\mat{g}_i := \nabla_{\boldsymbol{\theta}} \ell(\boldsymbol{\theta};\xi_i)$.
By the third-order Taylor expansion (see Appendix~\ref{app:taylor_general}),
for each $i$ there exists a remainder $R_i$ such that
\begin{equation}
\delta_i
= 2\sigma \langle \mat{g}_i, \mat{z}_i\rangle + R_i,
\qquad
|R_i|\le \frac{\rho}{3}\sigma^3 \|\mat{z}_i\|^3.
\label{eq:delta_taylor}
\end{equation}
Plugging into $\delta_i \mat{z}_i$ gives the decomposition
\begin{equation}
\delta_i \mat{z}_i
= 2\sigma\, \underbrace{\big(\langle \mat{g}_i,\mat{z}_i\rangle \mat{z}_i\big)}_{:=\mat{q}_i}
\;+\; \underbrace{R_i \mat{z}_i}_{:=\mat{e}_i}.
\label{eq:delta_z_decomp}
\end{equation}
Hence
\begin{equation}
\widehat g(\boldsymbol{\theta})
= \frac{1}{B}\sum_{i=1}^{B}\mat{q}_i \;+\; \frac{1}{2\sigma B}\sum_{i=1}^{B}\mat{e}_i.
\label{eq:g_xi_eps}
\end{equation}

\subsubsection{Expanding the Estimator Second Moment: Diagonal and Cross Terms}
\label{app:expand_second_moment}

\paragraph{Conditioning convention.}
Throughout this subsubsection and the next, we compute second-moment expressions treating the per-example gradients $\mat{g}_i = \nabla_{\boldsymbol{\theta}} \ell(\boldsymbol{\theta};\xi_i)$ as held fixed, i.e.\ all expectations are conditioned implicitly on $\{\xi_i\}_{i=1}^{B}$. The outer data expectation $\mathbb{E}_\xi$ is applied at the final substitution step \eqref{eq:final_variance_bound} via the standard bounded-variance bound $\sum_i \mathbb{E}_{\xi_i}\|\mat{g}_i\|^2 \le B(\|\nabla F(\boldsymbol{\theta})\|^2+\nu^2)$ from assumption~(ii). Inner conditional expectations $\mathbb{E}[\cdot\mid\mat{U}]$ average over the per-example sign vectors $\{\mat{v}_i\}$ only; the outer expectations $\mathbb{E}[\cdot]$ average additionally over $\mat{U}$.

We first analyze the leading term $\frac{1}{B}\sum_i \mat{q}_i$.
Using $\|\sum_i a_i\|^2 = \sum_i \|a_i\|^2 + \sum_{i\neq j} a_i^\top a_j$,
\begin{align}
\mathbb{E}\left\|\frac{1}{B}\sum_{i=1}^{B}\mat{q}_i\right\|^2
&= \frac{1}{B^2}\sum_{i=1}^{B}\mathbb{E}\|\mat{q}_i\|^2
\;+\;\frac{1}{B^2}\sum_{i\neq j}\mathbb{E}\big[\mat{q}_i^\top \mat{q}_j\big].
\label{eq:diag_cross}
\end{align}

\paragraph{Diagonal Term.}
Condition on $\mat{U}$ and $\mat{q}_i$. Note that $\|\mat{z}_i\|^2=\|\mat{u}\|^2$ does not depend on $\mat{v}_i$.
Moreover,
$\mat{q}_i = \langle \mat{g}_i,\mat{z}_i\rangle \mat{z}_i$ implies
\[
\|\mat{q}_i\|^2 = (\langle \mat{g}_i,\mat{z}_i\rangle)^2 \|\mat{z}_i\|^2.
\]
Using Lemma~\ref{lem:cond_indep} and $\mathbb{E}[(\langle \mat{g}_i,\mat{z}_i\rangle)^2\mid \mat{U}]
= \mat{g}_i^\top \mat{\Lambda}_{\mat{U}} \mat{g}_i = \sum_{k=1}^{D} g_{i,k}^2 u_k^2$, we get
\begin{align}
\mathbb{E}\|\mat{q}_i\|^2
&= \mathbb{E}\Big[\|\mat{u}\|^2 \cdot \mat{g}_i^\top \mat{\Lambda}_{\mat{U}} \mat{g}_i\Big]
= \sum_{k=1}^{D} g_{i,k}^2 \, \mathbb{E}\big[\|\mat{u}\|^2 u_k^2\big].
\label{eq:diag_term_pre}
\end{align}
For i.i.d.\ coordinates with $\mathbb{E}[u_k^2]=1$ and $\mathbb{E}[u_k^4]=m_4$,
\begin{equation}
\mathbb{E}\big[\|\mat{u}\|^2 u_k^2\big]
= \mathbb{E}[u_k^4] + \sum_{\ell\neq k}\mathbb{E}[u_\ell^2]\mathbb{E}[u_k^2]
= m_4 + (D-1).
\label{eq:moment_identity}
\end{equation}
Thus
\begin{equation}
\mathbb{E}\|\mat{q}_i\|^2
= (D-1+m_4)\, \|\mat{g}_i\|^2.
\label{eq:diag_term_final}
\end{equation}

\paragraph{Cross Term (Second Moment).}
For $i\neq j$, conditioned on $\mat{U}$ the vectors $\mat{z}_i$ and $\mat{z}_j$ are independent
(Lemma~\ref{lem:cond_indep}), hence $\mat{q}_i$ and $\mat{q}_j$ are conditionally independent.
Therefore
\begin{equation}
\mathbb{E}[\mat{q}_i^\top \mat{q}_j\mid \mat{U}]
= \mathbb{E}[\mat{q}_i\mid \mat{U}]^\top \mathbb{E}[\mat{q}_j\mid \mat{U}].
\label{eq:cross_cond}
\end{equation}
Moreover, $\mathbb{E}[\mat{q}_i\mid \mat{U}]=\mathbb{E}[\langle \mat{g}_i,\mat{z}_i\rangle \mat{z}_i\mid \mat{U}]
= \mathbb{E}[\mat{z}_i \mat{z}_i^\top\mid \mat{U}] \mat{g}_i = \mat{\Lambda}_{\mat{U}}\mat{g}_i$.
Thus, per the conditioning convention above (gradients $\mat{g}_i$ held fixed in expectation),
\begin{equation}
\mathbb{E}[\mat{q}_i^\top \mat{q}_j]
= \mathbb{E}\big[\mat{g}_i^\top \mat{\Lambda}_{\mat{U}}^2 \mat{g}_j\big]
= \sum_{k=1}^{D} g_{i,k} g_{j,k}\, \mathbb{E}[u_k^4]
= m_4\, \langle \mat{g}_i, \mat{g}_j\rangle.
\label{eq:cross_term_final}
\end{equation}
Under i.i.d.\ data sampling, the subsequent outer data expectation gives $\mathbb{E}_\xi[\langle\mat{g}_i,\mat{g}_j\rangle]=\|\nabla F(\boldsymbol{\theta})\|^2$ for $i\neq j$. Centering removes the part of this contribution proportional to $\|\bar{\mat{g}}\|^2$, but---because the $B$ perturbations share one base $\mat{U}$---not all of it: the residual reappears as the shared-base term \eqref{eq:variance_base_term} of Subsubsection~\ref{app:variance_cross_vanish}.

\paragraph{Putting Diagonal + Cross Together.}
Combining \eqref{eq:diag_cross}, \eqref{eq:diag_term_final}, and \eqref{eq:cross_term_final},
we obtain
\begin{align}
\mathbb{E}\left\|\frac{1}{B}\sum_{i=1}^{B}\mat{q}_i\right\|^2
&= \frac{D-1+m_4}{B^2}\sum_{i=1}^{B}\|\mat{g}_i\|^2
\;+\;\frac{m_4}{B^2}\sum_{i\neq j}\langle \mat{g}_i,\mat{g}_j\rangle
\nonumber\\
&= \frac{D-1}{B^2}\sum_{i=1}^{B}\|\mat{g}_i\|^2
\;+\; \frac{m_4}{B^2}\left\|\sum_{i=1}^{B} \mat{g}_i\right\|^2.
\label{eq:second_moment_xi}
\end{align}

\subsubsection{Variance via the Law of Total Variance}
\label{app:variance_cross_vanish}

The quantity above is a second moment. We now bound the variance of
$\mat{Q}:=\frac{1}{B}\sum_i\mat{q}_i$ around its mean $\mathbb{E}[\mat{Q}]=\bar{\mat{g}}$,
where $\bar{\mat{g}}:=\frac{1}{B}\sum_i\mat{g}_i$. Conditioning on the shared base $\mat{U}$
and applying the law of total variance,
\begin{equation}
\mathrm{Var}(\mat{Q})
= \underbrace{\mathbb{E}\bigl[\mathrm{Var}(\mat{Q}\mid \mat{U})\bigr]}_{\text{(I) per-example noise}}
\;+\;\underbrace{\mathrm{Var}\bigl(\mathbb{E}[\mat{Q}\mid \mat{U}]\bigr)}_{\text{(II) shared-base noise}}.
\label{eq:total_variance}
\end{equation}
Both terms must be retained: conditioning on $\mat{U}$ removes the cross-example
covariance, but the fluctuation of $\mat{U}$ itself is common to all $B$ examples
and is therefore not reduced by per-example averaging.

\paragraph{Term (I): Per-Example Noise.}
Let $\bar{\mat{q}}_i := \mat{q}_i - \mathbb{E}[\mat{q}_i\mid \mat{U}]$. Conditioned on $\mat{U}$, the $\{\bar{\mat{q}}_i\}$ are
independent and zero-mean by Lemma~\ref{lem:cond_indep}, so for $i\neq j$,
\begin{equation}
\mathbb{E}\big[\bar{\mat{q}}_i^\top \bar{\mat{q}}_j \mid \mat{U}\big]=0,
\label{eq:cross_zero_centered}
\end{equation}
and hence
\begin{align}
\mathrm{Var}(\mat{Q}\mid \mat{U})
&= \frac{1}{B^2}\sum_{i=1}^{B}\mathbb{E}\bigl[\|\bar{\mat{q}}_i\|^2 \bigm| \mat{U}\bigr].
\label{eq:var_sum_diag_only}
\end{align}
Taking the outer expectation over $\mat{U}$ and using \eqref{eq:diag_term_final} together with
$\mathbb{E}\|\mathbb{E}[\mat{q}_i\mid\mat{U}]\|^2=\mathbb{E}\|\mat{\Lambda}_{\mat{U}}\mat{g}_i\|^2=m_4\|\mat{g}_i\|^2$,
\begin{align}
\mathbb{E}\|\bar{\mat{q}}_i\|^2
&= \mathbb{E}\|\mat{q}_i\|^2 - \mathbb{E}\|\mathbb{E}[\mat{q}_i\mid \mat{U}]\|^2
\nonumber\\
&= (D-1+m_4)\|\mat{g}_i\|^2 - m_4\|\mat{g}_i\|^2
= (D-1)\|\mat{g}_i\|^2,
\label{eq:var_xi}
\end{align}
so that
\begin{equation}
\text{(I)} \;=\; \frac{D-1}{B^2}\sum_{i=1}^{B}\|\mat{g}_i\|^2 .
\label{eq:variance_main_term}
\end{equation}
This is the term that Flipout decorrelation reduces by a factor $1/B$.

\paragraph{Term (II): Shared-Base Noise.}
By Lemma~\ref{lem:cond_indep}, $\mathbb{E}[\mat{q}_i\mid\mat{U}]=\mat{\Lambda}_{\mat{U}}\mat{g}_i$, so
$\mathbb{E}[\mat{Q}\mid\mat{U}]=\mat{\Lambda}_{\mat{U}}\bar{\mat{g}}$ with $\mathbb{E}[\mat{\Lambda}_{\mat{U}}]=\mat{I}_D$. Therefore
\begin{align}
\text{(II)}
&= \mathbb{E}\|\mat{\Lambda}_{\mat{U}}\bar{\mat{g}}\|^2 - \|\bar{\mat{g}}\|^2
= \sum_{k=1}^{D}\bar g_k^2\bigl(\mathbb{E}[u_k^4]-1\bigr)
\nonumber\\
&= (m_4-1)\,\|\bar{\mat{g}}\|^2 .
\label{eq:variance_base_term}
\end{align}
Term (II) carries no $1/B$ factor: it depends on the mini-batch only through the
mean gradient $\bar{\mat{g}}$.

\paragraph{Remark (Gaussian vs.\ Rademacher).}
For Rademacher base noise $m_4=1$, term (II) vanishes identically and
$\mathrm{Var}(\mat{Q})$ is exactly the $1/B$-scaled expression \eqref{eq:variance_main_term}.
For Gaussian base noise $m_4=3$ and term (II) equals $2\|\bar{\mat{g}}\|^2$, a floor that
persists as $B\to\infty$. The degenerate case $D=1$ separates the two terms unambiguously:
\eqref{eq:variance_main_term} is then identically zero, whereas for Gaussian base noise the
estimator reduces to $\bar g\,u^2$ with variance $2\bar g^2$, which is exactly
\eqref{eq:variance_base_term}. Retaining term (II) is thus necessary rather than merely
conservative.

\subsubsection{Remainder Control and Final Bound with Explicit Dimensions}
\label{app:final_bound}

We now incorporate the Taylor remainder $\mat{e}_i=R_i \mat{z}_i$ from \eqref{eq:delta_z_decomp}.
Using $|R_i|\le \frac{\rho}{3}\sigma^3\|\mat{z}_i\|^3$, we have
\begin{equation}
\|\mat{e}_i\|^2 \le \frac{\rho^2}{9}\sigma^6\|\mat{z}_i\|^8.
\label{eq:eps_bound}
\end{equation}
Hence
\begin{equation}
\mathbb{E}\left\|\frac{1}{2\sigma B}\sum_{i=1}^{B}\mat{e}_i\right\|^2
\le \frac{1}{4\sigma^2 B}\sum_{i=1}^{B}\mathbb{E}\|\mat{e}_i\|^2
\le \frac{1}{4\sigma^2}\cdot \frac{\rho^2}{9}\sigma^6\,\mathbb{E}\|\mat{z}\|^8
= O\!\left(\rho^2 \sigma^4\,\mathbb{E}\|\mat{z}\|^8\right).
\label{eq:remainder_second_moment}
\end{equation}
For Rademacher $\mat{U}$ (hence $\|\mat{z}\|^2=\|\mat{u}\|^2=D$ deterministically), $\mathbb{E}\|\mat{z}\|^8=D^4$.
For Gaussian $\mat{U}$, $\|\mat{u}\|^2$ is $\chi^2(D)$ and $\mathbb{E}\|\mat{z}\|^8=\mathbb{E}\|\mat{u}\|^8=O(D^4)$.
Thus the remainder contributes $O\!\left(\rho^2\sigma^4 D^4\right)$.

\paragraph{Final Variance Bound with Explicit Dimensions.}
Writing $\widehat g=\tfrac{1}{B}\sum_i\mat{q}_i+\tfrac{1}{2\sigma B}\sum_i\mat{e}_i$ via \eqref{eq:g_xi_eps}, Young's inequality $\|a+b\|^2\le(1+\tau)\|a\|^2+(1+1/\tau)\|b\|^2$ with any $\tau>0$, together with $\mathbb{E}\|X-\mathbb{E}X\|^2\le\mathbb{E}\|X\|^2$, controls $\mathrm{Var}(\widehat g_{\mathrm{can}})$ by $(1+\tau)$ times the leading-term variance \eqref{eq:total_variance} plus $(1+1/\tau)$ times the remainder second moment \eqref{eq:remainder_second_moment}. Choosing any fixed $\tau\in(0,1]$ (e.g.\ $\tau=1$) keeps the leading coefficient at $O(1)$, and the resulting $(1+1/\tau)$ constant is absorbed into the $O(\rho^2\sigma^4 D^4)$ remainder. Substituting terms (I) and (II) from \eqref{eq:variance_main_term} and \eqref{eq:variance_base_term} and taking the outer data expectation with
$\mathbb{E}_\xi\sum_{i}\|\mat{g}_i\|^2 \le B(\|\nabla F\|^2+\nu^2)$ and
$\mathbb{E}_\xi\|\bar{\mat{g}}\|^2 = \|\nabla F\|^2+\nu^2/B$ (assumption~(ii) and i.i.d.\ sampling),
we obtain the corrected bound
\begin{align}
\mathrm{Var}\bigl(\widehat g_{\mathrm{can}}(\boldsymbol{\theta})\bigr)
&\le
\frac{D-1}{B}\big(\|\nabla F(\boldsymbol{\theta})\|^2+\nu^2\big)
\nonumber\\
&\quad + (m_4-1)\Bigl(\|\nabla F(\boldsymbol{\theta})\|^2+\frac{\nu^2}{B}\Bigr)
\nonumber\\
&\quad + O\!\left(\rho^2\sigma^4 D^4\right).
\label{eq:final_variance_bound}
\end{align}
The mini-batch data-sampling variance $\nu^2/B$ contributed by $\mathrm{Var}_\xi(\bar{\mat{g}})$ is dominated by
$(D-1)\nu^2/B$ for $D\ge 2$ and is absorbed into the first term.
This is Theorem~\ref{thm:grzo_var} of the main body.

\paragraph{Numerical Verification.}
Table~\ref{tab:var-verify} checks \eqref{eq:final_variance_bound} against the empirical variance of
the estimator on a linear objective, for which the Taylor remainder is exactly zero
($2\times10^{5}$ trials per configuration). The corrected bound is tight in every configuration.
The $D{=}1$ row isolates the shared-base term: the previous expression, which retained only
\eqref{eq:variance_main_term}, predicts zero variance, whereas the estimator reduces to
$\bar g\,u^2$ and is genuinely random. The Rademacher row shows that for $m_4{=}1$ the correction is
inactive and the two expressions coincide. The last column confirms the MeZO reference
\eqref{eq:mezo_var}: at $D{=}100$ its variance is $\approx 101$ independently of $B$, while GRZO's
falls as $99/B+2$.

\begin{table}[h]
\centering
\small
\caption{Empirical variance vs.\ the corrected bound \eqref{eq:final_variance_bound} and the
uncorrected expression \eqref{eq:variance_main_term}, on a linear objective ($\nu^2{=}0$,
$\|\nabla F\|^2{=}1$).}
\label{tab:var-verify}
\setlength{\tabcolsep}{5pt}
\renewcommand{\arraystretch}{1.12}
\begin{tabular}{@{}llrrrrr@{}}
\toprule
\rowcolor{tableheader}
$D$ & Base noise & $B$ & \textbf{Empirical} & \textbf{Corrected} & Uncorrected & MeZO \\
\midrule
1   & Gaussian    & any & 1.9988 & \textbf{2.0000} & 0.0000 & 2.018 \\
100 & Gaussian    & 16  & 8.197  & \textbf{8.1875} & 6.1875 & 100.99 \\
100 & Gaussian    & 64  & 3.550  & \textbf{3.5469} & 1.5469 & 100.99 \\
100 & Rademacher  & 64  & 1.5469 & \textbf{1.5469} & 1.5469 & 99.30 \\
\bottomrule
\end{tabular}
\end{table}

\subsubsection{Finite-Batch Effect of Group-Relative Normalization}
\label{app:grzo_normalization}

The bound \eqref{eq:final_variance_bound} is for the canonical estimator. The implemented
estimator factors as
\begin{equation}
\widehat g_{\mathrm{GRZO}}(\boldsymbol{\theta})
\;=\; \frac{1}{2\sigma B}\sum_{i=1}^{B}\frac{\delta_i}{s+\epsilon}\,\mat{z}_i
\;=\; \frac{1}{s+\epsilon}\cdot\widehat g_{\mathrm{can}}(\boldsymbol{\theta}).
\label{eq:grzo_factor}
\end{equation}
The prefactor $1/(s+\epsilon)$ is a single global scalar, but it is \emph{random} and
correlated with $\widehat g_{\mathrm{can}}$ through the same $\{\delta_i\}$, so it cannot be
absorbed into the learning rate by taking expectations. We instead quantify its
finite-$B$ effect.

\begin{lemma}[Concentration of the group-relative scale]
\label{lem:s_concentration}
Assume the per-example loss differences have finite fourth moment and let
$\kappa_\delta=\mathbb{E}[\delta_i^4]/(\mathbb{E}[\delta_i^2])^2$, $s_\star^2=\mathbb{E}[s^2]$, and let
$d_{\mathrm{eff}}=\|\nabla F\|^4/\sum_{k}(\partial_k F)^4$ be the participation ratio of the
gradient. Then
\begin{equation}
\frac{\mathrm{std}(s)}{s_\star}\;\le\;\sqrt{\frac{\kappa_\delta-1}{4B}}\;+\;O\bigl(d_{\mathrm{eff}}^{-1/2}\bigr).
\label{eq:s_concentration}
\end{equation}
\end{lemma}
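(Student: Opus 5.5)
Work with $S:=s^2=\frac1B\sum_i\delta_i^2-\bar\delta^2$ and transfer the bound to $s$ at the end. Since $s=\sqrt{S}$ and $s_\star=\sqrt{\mathbb{E}S}$, the elementary inequality $|\sqrt{S}-\sqrt{\mathbb{E}S}|\le |S-\mathbb{E}S|/\sqrt{\mathbb{E}S}$ gives
\[
\mathrm{std}(s)\le \sqrt{\mathbb{E}(s-s_\star)^2}\le \frac{\mathrm{std}(S)}{s_\star}.
\]
Hence $\mathrm{std}(s)/s_\star\le \mathrm{std}(S)/s_\star^2$. This relative bound loses a factor $2$ against the delta-method value $\mathrm{std}(S)/(2s_\star^2)$. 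To recover the $\sqrt{\cdot/(4B)}$ constant I would instead apply the delta method, keeping the $O(\cdot)$ slack for higher-order terms. If that fails, I would accept the looser constant.

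\textbf{Step 1: the idealized i.i.d.\ case.} Suppose the $\delta_i$ were i.i.d.\ with mean zero, which holds by the $\mat{z}_i\leftrightarrow-\mat{z}_i$ symmetry of Section~\ref{sec:grzo_core}. Then $\frac1B\sum_i\delta_i^2$ has variance $(\mathbb{E}\delta^4-(\mathbb{E}\delta^2)^2)/B=(\kappa_\delta-1)(\mathbb{E}\delta^2)^2/B$. The centering term $\bar\delta^2$ has mean $\mathbb{E}\delta^2/B$ and fluctuation of order $\mathbb{E}\delta^2/B$, so it contributes only $O(B^{-1})$ relative corrections. Also $s_\star^2=(1-1/B)\mathbb{E}\delta^2$. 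Combining these with the delta method gives $\mathrm{std}(s)/s_\star\approx\sqrt{(\kappa_\delta-1)/(4B)}$, which is the first term of the bound.

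\textbf{Step 2: the shared-base correction.} The $\delta_i$ are not independent, because all examples share $\mat{U}$ (Lemma~\ref{lem:cond_indep}). They are, however, conditionally independent given $\mat{U}$ and the data. I would therefore apply the law of total variance to $S$, conditioning on $\mat{U}$. The conditional term repeats Step 1 with conditional moments and yields the $1/(4B)$ part. The remaining term is $\mathrm{Var}(\mathbb{E}[S\mid\mat{U}])$. To leading Taylor order (\eqref{eq:delta_taylor}),
\[
\mathbb{E}[\delta_i^2\mid\mat{U}]\approx 4\sigma^2\,\mat{g}_i^\top\mat{\Lambda}_{\mat{U}}\mat{g}_i=4\sigma^2\sum_k g_{i,k}^2u_k^2.
\]
Its relative variance is
\[
\frac{(m_4-1)\sum_k g_{i,k}^4}{\|\mat{g}_i\|^4}.
\]
With $\mat{g}_i$ replaced by $\nabla F$ this equals $(m_4-1)/d_{\mathrm{eff}}$. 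Taking the square root and using $\sqrt{a+b}\le\sqrt a+\sqrt b$ produces the additive $O(d_{\mathrm{eff}}^{-1/2})$ term. For Rademacher noise this term vanishes exactly.

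\textbf{Step 3: the main obstacle.} The difficulty is justifying the replacement of the per-example $\mat{g}_i$ by $\nabla F$ in the participation ratio. Data noise ($\nu^2$) makes the per-example ratios $\sum_k g_{i,k}^4/\|\mat{g}_i\|^4$ differ from the population one. I would first absorb this discrepancy into the $O(\cdot)$ constant by assuming the per-example participation ratios are comparable to $d_{\mathrm{eff}}$, stating this as a regularity condition. The Taylor remainder $R_i$ enters only at relative order $O(\sigma^2 D)$, and I would fold it into the same $O$-term. Finally, the bound in Step 1 uses $\kappa_\delta$ defined unconditionally, whereas the conditional kurtosis is what appears after conditioning. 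I would reconcile the two with Jensen's inequality, at the cost of another $O(d_{\mathrm{eff}}^{-1})$ correction.
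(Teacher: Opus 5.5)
Your proposal follows the paper's proof essentially step for step: condition on $\mat{U}$ so that the $\delta_i$ are i.i.d.\ and mean-zero, obtain the relative variance $(\kappa_\delta-1)/B+O(B^{-2})$ of $s^2$, halve it with the delta method, and attribute the $O(d_{\mathrm{eff}}^{-1/2})$ term to $\mathrm{Var}_{\mat{U}}(\mat{g}^\top\mat{\Lambda}_{\mat{U}}\mat{g})=(m_4-1)\sum_k g_k^4$. The paper also silently replaces $\mat{g}_i$ by $\nabla F$ and drops the delta-method remainder, so your Step~3 caveats are fair, and the kurtosis issue in fact costs nothing: with $m_2(\mat{U}):=\mathbb{E}[\delta_i^2\mid\mat{U}]$ one has $\mathbb{E}[\mathrm{Var}(\delta_i^2\mid\mat{U})]=(\kappa_\delta-1)(\mathbb{E}\,m_2)^2-\mathrm{Var}(m_2)\le(\kappa_\delta-1)(\mathbb{E}\,m_2)^2$, so the within-$\mat{U}$ part is bounded by the unconditional $(\kappa_\delta-1)/B$ directly.
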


\noindent\emph{Proof.}
Conditioned on $\mat{U}$, the $\{\delta_i\}$ are i.i.d.\ with $\mathbb{E}[\delta_i\mid\mat{U}]=0$ by the
$\mat{z}_i\leftrightarrow-\mat{z}_i$ symmetry, so $Bs^2=\sum_i\delta_i^2-B\bar\delta^2$ is a sum of $B$
i.i.d.\ terms up to an $O(1/B)$ correction, giving
$\mathrm{Var}(s^2\mid\mat{U})/(\mathbb{E}[s^2\mid\mat{U}])^2=(\kappa_\delta-1)/B+O(B^{-2})$.
The delta method applied to $s=\sqrt{s^2}$ yields
$\mathrm{std}(s)/\mathbb{E}[s]=\tfrac12\,\mathrm{std}(s^2)/\mathbb{E}[s^2]+O(B^{-1})$, which is the first term.
The second term is the fluctuation of the conditional scale across $\mat{U}$: by
\eqref{eq:delta_taylor}, $\mathbb{E}[\delta_i^2\mid\mat{U}]=4\sigma^2\mat{g}_i^\top\mat{\Lambda}_{\mat{U}}\mat{g}_i+O(\sigma^4)$, and
$\mathrm{Var}_{\mat{U}}(\mat{g}^\top\mat{\Lambda}_{\mat{U}}\mat{g})=(m_4-1)\sum_k g_k^4$, so its relative standard
deviation is $\sqrt{m_4-1}\,d_{\mathrm{eff}}^{-1/2}$. $\square$

\begin{corollary}[Normalization changes constants, not rates]
\label{cor:grzo_normalized}
Let $c_\star=1/(s_\star+\epsilon)$ and $\Delta=O(B^{-1}+d_{\mathrm{eff}}^{-1})$. Then
\begin{align}
\mathbb{E}\bigl[\langle\nabla F_\sigma,\widehat g_{\mathrm{GRZO}}\rangle\bigr]
&= c_\star(1+\Delta)\,\|\nabla F_\sigma\|^2+O(\sigma^2),
\nonumber\\
\mathrm{Var}\bigl(\widehat g_{\mathrm{GRZO}}\bigr)
&\le c_\star^2(1+\Delta)\,\mathrm{Var}\bigl(\widehat g_{\mathrm{can}}\bigr).
\nonumber
\end{align}
\end{corollary}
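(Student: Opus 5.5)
I will prove the corollary by linearizing the random prefactor $1/(s+\epsilon)$ around its deterministic surrogate $c_\star=1/(s_\star+\epsilon)$ and controlling each resulting term with Lemma~\ref{lem:s_concentration}. Write $h(s):=1/(s+\epsilon)$. A second-order expansion about $s_\star$ gives
\begin{equation}
h(s)=c_\star-c_\star^2(s-s_\star)+\eta(s),\qquad |\eta(s)|\le c_\star^3(s-s_\star)^2 ,
\end{equation}
where the remainder bound is valid because $h$ is convex on $s\ge 0$ and bounded by $1/\epsilon$. This removes any tail blow-up and makes all expectations finite. Substituting into \eqref{eq:grzo_factor} splits $\widehat g_{\mathrm{GRZO}}$ into three pieces: $c_\star\widehat g_{\mathrm{can}}$, the linear correction $-c_\star^2(s-s_\star)\widehat g_{\mathrm{can}}$, and a quadratic remainder.

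For the descent identity, the first piece contributes $c_\star\mathbb{E}\langle\nabla F_\sigma,\widehat g_{\mathrm{can}}\rangle=c_\star\|\nabla F_\sigma\|^2+O(\sigma^2)$ by Theorem~\ref{thm:grzo_unbiased}. The linear piece is $-c_\star^2\,\mathrm{Cov}(s,\langle\nabla F_\sigma,\widehat g_{\mathrm{can}}\rangle)$, because $\mathbb{E}[s-s_\star]=O(\mathrm{Var}(s)/s_\star)$ is already second order.

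The key structural observation is the $\mat{z}_i\leftrightarrow-\mat{z}_i$ symmetry. Under it, $s$ is even in each $\mat{z}_i$, while $\langle\nabla F_\sigma,\widehat g_{\mathrm{can}}\rangle$ splits into a part that is even in all signs and a fluctuating part. This gives the expected exact relation
\begin{equation}
\langle\nabla F_\sigma,\widehat g_{\mathrm{can}}\rangle=\frac{1}{B}\sum_i \langle \nabla F,\mat{z}_i\rangle\langle \mat{g}_i,\mat{z}_i\rangle+O(\sigma^2).
\end{equation}
Each summand is a quadratic form that correlates with $\delta_i^2$ only through example $i$. Consequently the covariance with $s\propto(\frac1B\sum_j\delta_j^2)^{1/2}$ picks up a factor $1/B$ from the conditionally i.i.d.\ structure given $\mat U$ (Lemma~\ref{lem:cond_indep}), plus a shared-$\mat U$ part. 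That shared part is governed by the fluctuation of $\mat g^\top\Lambda_{\mat U}\mat g$, which by the computation in the proof of Lemma~\ref{lem:s_concentration} has relative size $d_{\mathrm{eff}}^{-1/2}$. Cauchy--Schwarz combined with these two sources yields relative size $O(B^{-1}+d_{\mathrm{eff}}^{-1})$.

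The quadratic remainder contributes $c_\star^3\mathbb{E}[(s-s_\star)^2|\langle\cdot\rangle|]\lesssim c_\star\,(\mathrm{std}(s)/s_\star)^2\|\nabla F_\sigma\|^2$. Squaring the bound of Lemma~\ref{lem:s_concentration} gives $O(B^{-1}+d_{\mathrm{eff}}^{-1})$, which I will treat as a hypothesis in this plan: it requires a conditional fourth-moment/Hölder step to decouple $s$ from the inner product, and I have not checked it in detail. Together these pieces assemble $\Delta$.

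For the variance, I write $\widehat g_{\mathrm{GRZO}}-c_\star\mathbb{E}\widehat g_{\mathrm{can}}=c_\star(\widehat g_{\mathrm{can}}-\mathbb{E}\widehat g_{\mathrm{can}})+(h(s)-c_\star)\widehat g_{\mathrm{can}}$ and apply Young's inequality with parameter $\tau=\Theta(\mathrm{std}(s)/s_\star)$. The cross contribution $\mathbb{E}[(h(s)-c_\star)^2\|\widehat g_{\mathrm{can}}\|^2]$ is bounded by $c_\star^4\,\mathbb{E}[(s-s_\star)^2\|\widehat g_{\mathrm{can}}\|^2]$. I would handle it by conditioning on $\mat U$ and using that $\|\widehat g_{\mathrm{can}}\|^2$ is dominated by the $(D-1)/B\cdot\|\mat g\|^2$ diagonal term, which is nearly deterministic given $\mat U$ (its sign-randomness averages over $D$ coordinates). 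Under this step $(s-s_\star)^2$ and $\|\widehat g_{\mathrm{can}}\|^2$ approximately decouple, giving $c_\star^2 O(B^{-1}+d_{\mathrm{eff}}^{-1})\,\mathbb{E}\|\widehat g_{\mathrm{can}}\|^2$. The mean-square includes $\|\mathbb{E}\widehat g_{\mathrm{can}}\|^2$, which is itself dominated by $\mathrm{Var}(\widehat g_{\mathrm{can}})$ when $D\gg B$, by Theorem~\ref{thm:grzo_var}; I would use this domination to absorb it. Finally, since variance is minimized at the true mean, centering at $c_\star\mathbb{E}\widehat g_{\mathrm{can}}$ instead of the exact mean only overestimates, which yields $\mathrm{Var}(\widehat g_{\mathrm{GRZO}})\le c_\star^2(1+\Delta)\mathrm{Var}(\widehat g_{\mathrm{can}})$.

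The main obstacle is the correlation between $s$ and $\widehat g_{\mathrm{can}}$, since both are built from the same $\delta_i$. I will first try to show that this correlation is only through the single self-term $i=j$, which gives the $1/B$, and through the shared $\Lambda_{\mat U}$, which gives $d_{\mathrm{eff}}^{-1}$. If the resulting covariance bounds are only first order in $\mathrm{std}(s)/s_\star$, I will state $\Delta$ as $O(B^{-1/2}+d_{\mathrm{eff}}^{-1/2})$ as a fallback.
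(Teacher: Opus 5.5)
\textbf{The variance half does not reach the stated rate.} Put $\rho:=\mathrm{std}(s)/s_\star$. You yourself estimate the second piece of your split as $\mathbb{E}\|(h(s)-c_\star)\widehat g_{\mathrm{can}}\|^2\asymp c_\star^2\rho^2\,\mathrm{Var}(\widehat g_{\mathrm{can}})$, and that is its true size. Since $(1+\tau)+(1+\tau^{-1})\rho^2\ge(1+\rho)^2$ for every $\tau>0$, Young's inequality can give at best $\Delta=\Theta(\rho)=O(B^{-1/2}+d_{\mathrm{eff}}^{-1/2})$, up to the constant in $\asymp$. That is the square root of the claimed $\Delta$. It is your stated fallback, but triggered by Young's inequality itself rather than by the covariance bounds.

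What Young discards is that the cross term $2c_\star\mathbb{E}\langle\widehat g_{\mathrm{can}}-\mathbb{E}\widehat g_{\mathrm{can}},(h(s)-c_\star)\widehat g_{\mathrm{can}}\rangle$ is itself second order. Seeing why also exposes a false step: $\|\widehat g_{\mathrm{can}}\|^2$ is not nearly deterministic given $\mat U$. Since $\|\mat z_i\|^2=\|\mat u\|^2$, one has exactly $\|\widehat g_{\mathrm{can}}\|^2=\frac{\|\mat u\|^2}{4\sigma^2B}(s^2+\bar\delta^2)+\frac{1}{4\sigma^2B^2}\sum_{i\neq j}\delta_i\delta_j\langle\mat z_i,\mat z_j\rangle$. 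To leading order in $\sigma$, the first term is $\frac{\|\mat u\|^2}{B^2}\sum_i\langle\mat g_i,\mat z_i\rangle^2$. Each $\langle\mat g_i,\mat z_i\rangle^2$ is a single squared projection with $O(1)$ relative fluctuation, so nothing averages over $D$ coordinates, and the dominant part moves in lockstep with $s^2$. Consequently the linear part of the cross term reduces, up to higher-order terms, to a multiple of $\mathbb{E}[(s-s_\star)s^2]=O(s_\star\mathrm{Var}(s))$, because $\mathbb{E}s-s_\star=O(\mathrm{Var}(s)/s_\star)$.

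The clean route is to use the identity directly. It gives $\|\widehat g_{\mathrm{GRZO}}\|^2\approx\frac{\|\mat u\|^2}{4\sigma^2B}(s^2+\bar\delta^2)h(s)^2$, which is the paper's $\frac1B\sum_ia_i^2$ computation. A second-order expansion of $h^2$ about $s_\star$ then yields $\mathbb{E}[(s^2+\bar\delta^2)h(s)^2]=c_\star^2\,\mathbb{E}[s^2+\bar\delta^2]\,(1+O(\rho^2))$. The mean and off-diagonal terms are negligible in the $D\gg B$ regime you already invoke. For comparison, the paper disposes of the variance in one sentence (the rate ``transfers to the second moment'' because $1/(s+\epsilon)\le1/\epsilon$). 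Your attempt is more explicit than the source, but as set up it proves a weaker statement.

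\textbf{In the descent half your route differs from the paper's and creates an obligation the paper avoids.} With $X=\langle\nabla F_\sigma,\widehat g_{\mathrm{can}}\rangle$, the paper uses the exact identity $\mathbb{E}[h(s)X]-c_\star\mathbb{E}X=\mathrm{Cov}(h(s),X)+(\mathbb{E}h(s)-c_\star)\,\mathbb{E}X$. It bounds the covariance by Cauchy--Schwarz as $\mathrm{std}(h(s))\,\mathrm{std}(X)$, a product of two relative factors of order $B^{-1/2}+d_{\mathrm{eff}}^{-1/2}$. It expands $h$ only inside the deterministic scalar $\mathbb{E}h(s)-c_\star$. The term $\mathbb{E}[\eta(s)X]$ that forced your unchecked H\"older/fourth-moment hypothesis therefore never appears. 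Your self-term versus shared-$\mat U$ analysis of $\mathrm{Cov}(s,X)$ is a correct and sharper account of where $B^{-1}$ and $d_{\mathrm{eff}}^{-1}$ come from, but it is not needed.

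Separately, your remainder bound is false as stated. Exactly, $\eta(s)=c_\star^2(s-s_\star)^2/(s+\epsilon)$, which exceeds $c_\star^3(s-s_\star)^2$ whenever $s<s_\star$. Convexity gives only $\eta\ge0$. The cap $h\le1/\epsilon$ yields $c_\star^2(s-s_\star)^2/\epsilon$, which is a relative $O(\rho^2)$ only with additional lower-tail control of $s$. The bound $(h(s)-c_\star)^2\le c_\star^4(s-s_\star)^2$ in your variance step fails for the same reason. The paper's ``second-order expansion'' step glosses over this same point.
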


\noindent\emph{Proof.}
Write $X=\langle\nabla F_\sigma,\widehat g_{\mathrm{can}}\rangle$ and decompose the descent-relevant error as
\begin{equation}
\mathbb{E}\Bigl[\frac{X}{s+\epsilon}\Bigr]-c_\star\mathbb{E}[X]
= \mathrm{Cov}\Bigl(\frac{1}{s+\epsilon},X\Bigr)
+ \Bigl(\mathbb{E}\Bigl[\frac{1}{s+\epsilon}\Bigr]-c_\star\Bigr)\mathbb{E}[X].
\label{eq:norm_decomp}
\end{equation}
Cauchy--Schwarz bounds the first term by $\mathrm{std}(1/(s+\epsilon))\,\mathrm{std}(X)$; Lemma~\ref{lem:s_concentration}
gives $\mathrm{std}(1/(s+\epsilon))/c_\star=O(B^{-1/2}+d_{\mathrm{eff}}^{-1/2})$, and $\mathrm{std}(X)/\mathbb{E}[X]=O(B^{-1/2})$
because $\widehat g_{\mathrm{can}}$ averages $B$ conditionally independent terms, so the product is a
relative $O(B^{-1}+d_{\mathrm{eff}}^{-1})$. A second-order expansion of $1/(s+\epsilon)$ around $s_\star$
bounds the second term by $\mathrm{Var}(s)/(s_\star+\epsilon)^3$, of the same relative order. Finally
$1/(s+\epsilon)\le 1/\epsilon$ bounds all moments of the prefactor, so no tail term is lost and the
same rate transfers to the second moment. $\square$

Corollary~\ref{cor:grzo_normalized} is the formal version of Proposition~\ref{prop:grzo_norm}: group-relative
normalization perturbs the constants of \eqref{eq:final_variance_bound} by a relative $O(1/B)$ but leaves the
$1/B$ scaling---and hence the convergence rate of Appendix~\ref{app:grzo_convergence}---unchanged. Because the
correction grows as $B$ decreases, it also predicts the loss of stability at small batch sizes observed in
Appendix~\ref{app:bs-ablation}.

\subsection{Nonconvex Convergence of GRZO}
\label{app:grzo_convergence}

\paragraph{Objective and Update.}
Let $F(\boldsymbol{\theta})=\mathbb{E}_{\xi}[\ell(\boldsymbol{\theta};\xi)]$ be the population objective and
$F_\sigma$ be the smoothed objective induced by the (Flipout) perturbation distribution,
as defined in Appendix~\ref{app:unbiased_gaussian_exact} (Gaussian case) or in the
Taylor-based view (Appendix~\ref{app:taylor_general}).
We analyze the GRZO update with group-relative normalization:
\begin{equation}
\boldsymbol{\theta}_{t+1}=\boldsymbol{\theta}_t-\eta\,\widehat{\mat{g}}_t,
\qquad
\widehat{\mat{g}}_t
=\frac{1}{2\sigma B}\sum_{i=1}^{B} a_{t,i}\, \mat{z}_{t,i},
\label{eq:app_update}
\end{equation}
where $\delta_{t,i}=\ell(\boldsymbol{\theta}_t+\sigma \mat{z}_{t,i};\xi_{t,i})-\ell(\boldsymbol{\theta}_t-\sigma \mat{z}_{t,i};\xi_{t,i})$,
$\bar\delta_t=\frac{1}{B}\sum_i \delta_{t,i}$, $s_t=\sqrt{\frac{1}{B}\sum_i(\delta_{t,i}-\bar\delta_t)^2}$,
and $a_{t,i}=\delta_{t,i}/(s_t+\epsilon)$.

\paragraph{Assumptions.}
We assume:
(A1) $F_\sigma$ is lower bounded by $F_\sigma^\star$.
(A2) $F_\sigma$ is $\mathcal{L}$-smooth: $\|\nabla F_\sigma(\boldsymbol{\theta})-\nabla F_\sigma(\boldsymbol{\theta}')\|
\le \mathcal{L}\|\boldsymbol{\theta}-\boldsymbol{\theta}'\|$.
(A3) Data noise: $\mathbb{E}\|\nabla \ell(\boldsymbol{\theta};\xi)-\nabla F(\boldsymbol{\theta})\|^2\le \nu^2$.
(A4) Direction preservation with an explicit finite-batch correction (from \eqref{eq:grzo_approx_unbiased}):
\begin{equation}
\mathbb{E}\big[\widehat{\mat{g}}_t \mid \boldsymbol{\theta}_t\big]= c_t\,\nabla F_\sigma(\boldsymbol{\theta}_t) + O(\sigma^2),
\label{eq:app_unbiased}
\end{equation}
where, by Corollary~\ref{cor:grzo_normalized},
$c_t = c_\star(\boldsymbol{\theta}_t)\bigl(1+O(B^{-1}+d_{\mathrm{eff}}^{-1})\bigr)>0$ with $c_\star=1/(s_\star+\epsilon)$.
For the remainder of this section we absorb $c_t$ into the effective step size $\tilde{\eta} := \eta\,c_t$ and write $\eta$ for it, so that $\mathbb{E}[\widehat{\mat{g}}_t/c_t \mid \boldsymbol{\theta}_t] = \nabla F_\sigma(\boldsymbol{\theta}_t) + O(\sigma^2)$. Unlike a concentration \emph{assumption}, the relative correction $O(B^{-1}+d_{\mathrm{eff}}^{-1})$ is bounded in Corollary~\ref{cor:grzo_normalized} and affects only the constants below.
(A5) Second-moment / variance bound (from Appendix~\ref{app:grzo_second_moment}):
there exist explicit constants $A_{\text{var}}(\boldsymbol{\theta}_t)$ such that
\begin{equation}
\mathbb{E}\big\|\widehat{\mat{g}}_t-\nabla F_\sigma(\boldsymbol{\theta}_t)\big\|^2
\le A_{\text{var}}(\boldsymbol{\theta}_t),
\label{eq:app_var_assump}
\end{equation}
with $A_{\text{var}}(\boldsymbol{\theta}_t)$ depending on $(B,D_{\text{in}},D_{\text{out}})$ through
$D$ is the total perturbed dimension.

\subsubsection{One-Step Descent via Smoothness}
\label{app:one_step_descent}

By $\mathcal{L}$-smoothness of $F_\sigma$, for any random direction $\mat{u}$,
\begin{equation}
F_\sigma(\boldsymbol{\theta}_{t+1})
\le F_\sigma(\boldsymbol{\theta}_t)
+ \langle \nabla F_\sigma(\boldsymbol{\theta}_t), \boldsymbol{\theta}_{t+1}-\boldsymbol{\theta}_t\rangle
+ \frac{\mathcal{L}}{2}\|\boldsymbol{\theta}_{t+1}-\boldsymbol{\theta}_t\|^2.
\label{eq:app_smoothness}
\end{equation}
Substitute $\boldsymbol{\theta}_{t+1}-\boldsymbol{\theta}_t=-\eta \widehat{\mat{g}}_t$:
\begin{equation}
F_\sigma(\boldsymbol{\theta}_{t+1})
\le F_\sigma(\boldsymbol{\theta}_t)
- \eta \langle \nabla F_\sigma(\boldsymbol{\theta}_t), \widehat{\mat{g}}_t\rangle
+ \frac{\mathcal{L}\eta^2}{2}\|\widehat{\mat{g}}_t\|^2.
\label{eq:app_descent_raw}
\end{equation}
Take conditional expectation given $\boldsymbol{\theta}_t$. After absorbing the positive scalar $c_t$ into the effective step size as in (A4), we may write $\mathbb{E}[\widehat{\mat{g}}_t\mid\boldsymbol{\theta}_t]=\nabla F_\sigma(\boldsymbol{\theta}_t)+b_t$ with $\|b_t\|=O(\sigma^2)$ (smoothing bias only).
\begin{align}
\mathbb{E}\big[F_\sigma(\boldsymbol{\theta}_{t+1})\mid \boldsymbol{\theta}_t\big]
&\le F_\sigma(\boldsymbol{\theta}_t)
- \eta\|\nabla F_\sigma(\boldsymbol{\theta}_t)\|^2
- \eta\langle\nabla F_\sigma(\boldsymbol{\theta}_t),b_t\rangle
+ \frac{\mathcal{L}\eta^2}{2}\mathbb{E}\big[\|\widehat{\mat{g}}_t\|^2\mid\boldsymbol{\theta}_t\big].
\label{eq:app_descent_cond}
\end{align}
By Cauchy--Schwarz and Young's inequality, the smoothing-bias cross term satisfies
\[
|\eta\langle\nabla F_\sigma(\boldsymbol{\theta}_t),b_t\rangle|
\;\le\; \tfrac{\eta}{2}\|\nabla F_\sigma(\boldsymbol{\theta}_t)\|^2 + \tfrac{\eta}{2}\|b_t\|^2
\;=\; \tfrac{\eta}{2}\|\nabla F_\sigma(\boldsymbol{\theta}_t)\|^2 + O(\eta\sigma^4),
\]
absorbed into the leading $-\eta\|\nabla F_\sigma\|^2$ term and the $O(\sigma^2)$ remainder. Therefore, with effective gradient coefficient $\tfrac{\eta}{2}$:
\begin{equation}
\mathbb{E}\big[F_\sigma(\boldsymbol{\theta}_{t+1})\mid \boldsymbol{\theta}_t\big]
\le F_\sigma(\boldsymbol{\theta}_t)
- \tfrac{\eta}{2}\|\nabla F_\sigma(\boldsymbol{\theta}_t)\|^2
+ \frac{\mathcal{L}\eta^2}{2}\mathbb{E}\big[\|\widehat{\mat{g}}_t\|^2\mid\boldsymbol{\theta}_t\big]
+ O(\eta\sigma^4).
\label{eq:app_descent_bias_absorbed}
\end{equation}
Next decompose the second moment:
\begin{equation}
\mathbb{E}\big[\|\widehat{\mat{g}}_t\|^2\mid\boldsymbol{\theta}_t\big]
= \|\nabla F_\sigma(\boldsymbol{\theta}_t)\|^2
+ \mathbb{E}\big[\|\widehat{\mat{g}}_t-\nabla F_\sigma(\boldsymbol{\theta}_t)\|^2\mid\boldsymbol{\theta}_t\big].
\label{eq:app_second_moment_decomp}
\end{equation}
Plugging \eqref{eq:app_second_moment_decomp} and \eqref{eq:app_var_assump} into
\eqref{eq:app_descent_bias_absorbed} gives
\begin{equation}
\mathbb{E}\big[F_\sigma(\boldsymbol{\theta}_{t+1})\mid \boldsymbol{\theta}_t\big]
\le F_\sigma(\boldsymbol{\theta}_t)
- \frac{\eta}{2}\bigl(1-\mathcal{L}\eta\bigr)\|\nabla F_\sigma(\boldsymbol{\theta}_t)\|^2
+ \frac{\mathcal{L}\eta^2}{2}\,A_{\text{var}}(\boldsymbol{\theta}_t).
\label{eq:app_descent_final}
\end{equation}
Assuming $\eta\le 1/(2\mathcal{L})$, we have $1-\mathcal{L}\eta\ge \frac{1}{2}$ and thus
\begin{equation}
\mathbb{E}\big[F_\sigma(\boldsymbol{\theta}_{t+1})\mid \boldsymbol{\theta}_t\big]
\le F_\sigma(\boldsymbol{\theta}_t)
- \frac{\eta}{4}\|\nabla F_\sigma(\boldsymbol{\theta}_t)\|^2
+ \frac{\mathcal{L}\eta^2}{2}\,A_{\text{var}}(\boldsymbol{\theta}_t).
\label{eq:app_descent_simplified}
\end{equation}

\subsubsection{Handling Randomness: Data Noise + ZO Noise}
\label{app:randomness_terms}

We now instantiate $A_{\text{var}}(\boldsymbol{\theta}_t)$ using Appendix~\ref{app:grzo_second_moment}.

From \eqref{eq:final_variance_bound} (Appendix~\ref{app:grzo_second_moment}), collecting the
per-example and shared-base contributions into
\begin{equation}
\kappa \;:=\; \frac{D-1}{B} \;+\; m_4-1,
\label{eq:app_kappa}
\end{equation}
we have
\begin{equation}
A_{\text{var}}(\boldsymbol{\theta}_t)
\;\le\;
\kappa\,\|\nabla F(\boldsymbol{\theta}_t)\|^2
\;+\;
\frac{D+m_4-2}{B}\,\nu^2
\;+\;
C_{\text{ZO}}(D,B,\sigma),
\label{eq:app_Avar_raw}
\end{equation}
where $C_{\text{ZO}}$ collects the finite-difference remainder (ZO noise) terms, e.g.,
\begin{equation}
C_{\text{ZO}}(D,B,\sigma)
=
O\!\left(\rho^2\sigma^4 D^4\right).
\label{eq:app_Czo}
\end{equation}
For Rademacher base noise $m_4=1$ and $\kappa=(D-1)/B$ exactly; for Gaussian base noise
$\kappa$ additionally carries the batch-independent floor $m_4-1=2$.

To express everything in terms of $\nabla F_\sigma(\boldsymbol{\theta}_t)$, we use the smoothing bias bound
(Appendix~\ref{app:smoothing_bias}): assuming $\|\nabla F_\sigma(\boldsymbol{\theta})-\nabla F(\boldsymbol{\theta})\|
\le c_{\text{bias}}\sigma^2 D$ for all $\boldsymbol{\theta}$, we have
\begin{equation}
\|\nabla F(\boldsymbol{\theta}_t)\|^2
\le 2\|\nabla F_\sigma(\boldsymbol{\theta}_t)\|^2 + 2c_{\text{bias}}^2\sigma^4 D^2.
\label{eq:app_grad_relation}
\end{equation}
Plug \eqref{eq:app_grad_relation} into \eqref{eq:app_Avar_raw}:
\begin{equation}
A_{\text{var}}(\boldsymbol{\theta}_t)
\le
\underbrace{2\kappa}_{:=\alpha}\,\|\nabla F_\sigma(\boldsymbol{\theta}_t)\|^2
\;+\;
\underbrace{2\kappa c_{\text{bias}}^2\sigma^4 D^2
+ \frac{D+m_4-2}{B}\nu^2
+ C_{\text{ZO}}}_{:=\beta}.
\label{eq:app_Avar_alpha_beta}
\end{equation}

\subsubsection{Telescoping and Average Gradient Norm Bound}
\label{app:telescope}

Take full expectation of \eqref{eq:app_descent_simplified} and sum from $t=0$ to $T-1$:
\begin{align}
\mathbb{E}[F_\sigma(\boldsymbol{\theta}_T)]
&\le F_\sigma(\boldsymbol{\theta}_0)
- \frac{\eta}{4}\sum_{t=0}^{T-1}\mathbb{E}\|\nabla F_\sigma(\boldsymbol{\theta}_t)\|^2
+ \frac{\mathcal{L}\eta^2}{2}\sum_{t=0}^{T-1}\mathbb{E}\big[A_{\text{var}}(\boldsymbol{\theta}_t)\big].
\label{eq:app_sum_descent}
\end{align}
Using the bound \eqref{eq:app_Avar_alpha_beta} and rearranging gives
\begin{align}
\frac{\eta}{4}\sum_{t=0}^{T-1}\mathbb{E}\|\nabla F_\sigma(\boldsymbol{\theta}_t)\|^2
&\le
F_\sigma(\boldsymbol{\theta}_0)-\mathbb{E}[F_\sigma(\boldsymbol{\theta}_T)]
+ \frac{\mathcal{L}\eta^2}{2}\sum_{t=0}^{T-1}
\mathbb{E}\left[\alpha\|\nabla F_\sigma(\boldsymbol{\theta}_t)\|^2+\beta\right]
\nonumber\\
&\le
F_\sigma(\boldsymbol{\theta}_0)-F_\sigma^\star
+ \frac{\mathcal{L}\eta^2\alpha}{2}\sum_{t=0}^{T-1}\mathbb{E}\|\nabla F_\sigma(\boldsymbol{\theta}_t)\|^2
+ \frac{\mathcal{L}\eta^2 T}{2}\beta.
\label{eq:app_rearrange}
\end{align}
Move the gradient-sum term to the left:
\begin{equation}
\left(\frac{\eta}{4}-\frac{\mathcal{L}\eta^2\alpha}{2}\right)
\sum_{t=0}^{T-1}\mathbb{E}\|\nabla F_\sigma(\boldsymbol{\theta}_t)\|^2
\le
F_\sigma(\boldsymbol{\theta}_0)-F_\sigma^\star + \frac{\mathcal{L}\eta^2 T}{2}\beta.
\label{eq:app_before_stepsize}
\end{equation}
Choose step size satisfying
\begin{equation}
\eta \le \min\left\{\frac{1}{2\mathcal{L}}, \frac{1}{4\mathcal{L}\alpha}\right\}
=
\min\left\{\frac{1}{2\mathcal{L}}, \frac{1}{8\mathcal{L}\kappa}\right\}.
\label{eq:app_stepsize_cond}
\end{equation}
\begin{remark}[Regime condition]
Since $\kappa\approx(D-1)/B$ whenever $D\gg B$---the regime of full-parameter LLM fine-tuning,
where $D$ is of order $10^{9}$ and $B=16$---the second term is the binding constraint, and the
admissible step size grows linearly in $B$. This is the step-size counterpart of the variance
reduction: a larger batch buys both a smaller variance and a larger stable step.
The bound in \eqref{eq:app_final_avg_grad} remains meaningful provided $\eta T\to\infty$, i.e.\
$T=\omega(\mathcal{L}\kappa)$ steps.
\end{remark}

With $2\mathcal{L}\eta\alpha \le \frac{1}{2}$, we have $1-2\mathcal{L}\eta\alpha \ge \frac{1}{2}$, hence $\frac{\eta}{4}(1-2\mathcal{L}\eta\alpha) \ge \frac{\eta}{8}$, and \eqref{eq:app_before_stepsize} yields
\begin{equation}
\frac{1}{T}\sum_{t=0}^{T-1}\mathbb{E}\|\nabla F_\sigma(\boldsymbol{\theta}_t)\|^2
\le
\frac{8\big(F_\sigma(\boldsymbol{\theta}_0)-F_\sigma^\star\big)}{\eta T}
+ 4\mathcal{L}\eta\,\beta.
\label{eq:app_final_avg_grad}
\end{equation}

\paragraph{Making $\beta$ explicit in $(B,D,m_4)$.}
From \eqref{eq:app_Avar_alpha_beta},
\begin{equation}
\beta
=
\frac{D+m_4-2}{B}\,\nu^2
+2\kappa\,c_{\text{bias}}^2\sigma^4 D^2
+ O\!\left(\rho^2\sigma^4 D^4\right),
\label{eq:app_beta_explicit}
\end{equation}
with $\kappa$ as in \eqref{eq:app_kappa} and $D=\sum_\ell D_{\text{out}}^{(\ell)}D_{\text{in}}^{(\ell)}$.
Plugging \eqref{eq:app_beta_explicit} into \eqref{eq:app_final_avg_grad} gives an explicit bound
that cleanly separates the \emph{data noise} term $\nu^2$ and the \emph{ZO noise} terms
(proportional to $\sigma^4$).

\paragraph{GRZO Normalization Effect on Convergence.}
Direct algebra gives
$\sum_i a_i^2 = B\bigl(1+(\bar\delta/s)^2\bigr)\bigl(s/(s+\epsilon)\bigr)^2$,
so $\frac{1}{B}\sum_i a_i^2\approx 1$ at typical batch sizes \emph{provided}: (i) $\mathbb{E}[s_t]\gg\epsilon$ (the within-batch SD dominates the numerical-stability constant, so $s/(s+\epsilon)\approx 1$); and (ii) the batch is large enough that $\mathbb{E}[(\bar\delta/s)^2]=O(1/B)$ (for two-sided ZO, $\mathbb{E}[\bar\delta]=0$ by symmetry, so this follows from a standard $O(1/\sqrt{B})$ CLT-type estimate on $\bar\delta$). Under (i)--(ii), the update is invariant to the magnitude of $\{\delta_i\}$. Beyond this scale invariance, the \emph{randomness} of the normalizer is what has to be controlled, and Corollary~\ref{cor:grzo_normalized} does so: it perturbs $c_t$ and $A_{\text{var}}$ by a relative $O(B^{-1}+d_{\mathrm{eff}}^{-1})$, which is absorbed into the constants $\alpha$ and $\beta$ of \eqref{eq:app_Avar_alpha_beta} and leaves \eqref{eq:app_final_avg_grad} unchanged in form. The correction is $O(1/B)$, so it degrades gracefully rather than abruptly as the batch shrinks, matching the progressive loss of stability from $B{=}16$ to $B{=}4$ in Appendix~\ref{app:bs-ablation}.

\subsection{Why the Analysis Must Be Joint Across Blocks}
\label{app:blockwise_full_network}

Appendices~\ref{app:grzo_unbiased}--\ref{app:grzo_convergence} analyze the concatenated
perturbation $\mat{z}_i\in\mathbb{R}^{D}$ directly. This subsection records why that is necessary:
the corresponding per-block statement is false, so the network-level result cannot be obtained by
proving a single-layer bound and summing it over layers.

\paragraph{Block-Wise Parameterization.}
Let $\boldsymbol{\theta}=(\boldsymbol{\theta}^{(1)},\dots,\boldsymbol{\theta}^{(L)})$ with
$\boldsymbol{\theta}^{(\ell)}=\mathrm{vec}(\mat{W}^{(\ell)})\in\mathbb{R}^{d_\ell}$,
$d_\ell=D_{\mathrm{out}}^{(\ell)}D_{\mathrm{in}}^{(\ell)}$, and $D=\sum_{\ell}d_\ell$.
At each step, block $\ell$ receives an independent base $\mat{U}_t^{(\ell)}$ and independent
per-example sign pairs $(\mat{r}_{t,i}^{(\ell)},\mat{s}_{t,i}^{(\ell)})$, so that
$\mat{z}_{t,i}=(\mat{z}_{t,i}^{(1)},\dots,\mat{z}_{t,i}^{(L)})$. By Lemma~\ref{lem:cond_indep}
applied block-wise together with independence across blocks, $\mat{z}_{t,i}$ is symmetric with
$\mathbb{E}[\mat{z}_{t,i}]=0$ and $\mathbb{E}[\mat{z}_{t,i}\mat{z}_{t,i}^\top]=\mat{I}_{D}$. This is exactly the
hypothesis used in Appendices~\ref{app:grzo_unbiased} and~\ref{app:grzo_second_moment}, so
Theorems~\ref{thm:grzo_unbiased} and~\ref{thm:grzo_var} apply verbatim with $D$ the total
perturbed dimension.

\paragraph{The Estimator Does Not Decompose Across Blocks.}
The squared norm of a concatenated vector is the sum of its per-block squared norms, so the
centered second moment does split as
\begin{equation}
\mathbb{E}\big\|\widehat{\mat{g}}_t-\mathbb{E}[\widehat{\mat{g}}_t\mid \boldsymbol{\theta}_t]\big\|^2
=
\sum_{\ell=1}^L
\mathbb{E}\big\|\widehat{\mat{g}}_t^{(\ell)}-\mathbb{E}[\widehat{\mat{g}}_t^{(\ell)}\mid \boldsymbol{\theta}_t]\big\|^2 .
\label{eq:net_var_split}
\end{equation}
What does \emph{not} follow is that each summand obeys the single-layer bound with $d_\ell$ in place
of $D$. GRZO produces one scalar $\delta_i$ per example by perturbing all blocks
\emph{simultaneously}: by \eqref{eq:delta_taylor},
\begin{equation}
\delta_i = 2\sigma\sum_{m=1}^{L}\bigl\langle \mat{g}_i^{(m)},\mat{z}_i^{(m)}\bigr\rangle + R_i ,
\label{eq:delta_crosslayer}
\end{equation}
so the block-$\ell$ estimator
$\widehat{\mat{g}}^{(\ell)}\!\propto\!\bigl(\sum_m\langle\mat{g}_i^{(m)},\mat{z}_i^{(m)}\rangle\bigr)\mat{z}_i^{(\ell)}$
carries the perturbations and gradients of every \emph{other} block. Substituting a single-layer
bound into \eqref{eq:net_var_split} discards these cross-block contributions. The failure is
visible already at $L{=}2$ with $d_1{=}d_2{=}1$: a per-block bound would predict $(d_\ell-1)=0$,
i.e.\ zero variance, whereas $\widehat{\mat{g}}^{(1)}=(g_1z_1+g_2z_2)z_1$ has variance at least
$g_2^2$ contributed by the second block alone.

\paragraph{Network-Level Bound.}
Working jointly instead, \eqref{eq:final_variance_bound} yields the network-level bound directly:
\begin{align}
&\mathbb{E}\big\|\widehat g_{\mathrm{can},t}-\mathbb{E}[\widehat g_{\mathrm{can},t}\mid\boldsymbol{\theta}_t]\big\|^2
\nonumber\\
&\quad\le \frac{D-1}{B}\bigl(\|\nabla F(\boldsymbol{\theta}_t)\|^2+\nu^2\bigr)
\nonumber\\
&\quad\quad + (m_4-1)\Bigl(\|\nabla F(\boldsymbol{\theta}_t)\|^2+\tfrac{\nu^2}{B}\Bigr)
+ O\!\bigl(\rho^2\sigma^4 D^4\bigr).
\label{eq:net_var_bound_explicit}
\end{align}
Relative to the block-wise expression, the joint treatment replaces $\sum_\ell(d_\ell-1)=D-L$ by
$D-1$ and adds the shared-base term of \eqref{eq:variance_base_term}. It therefore changes the
decomposition and the constants, but preserves the $1/B$ scaling of the leading term---and with it
the comparison against MeZO and the convergence rate below.

\begin{theorem}[Network-level nonconvex convergence of GRZO]
\label{thm:grzo_network_convergence}
Let the full parameter vector be $\boldsymbol{\theta}=(\boldsymbol{\theta}^{(1)},\dots,\boldsymbol{\theta}^{(L)})\in\mathbb{R}^{D}$
with block dimensions $d_\ell=D_{\mathrm{out}}^{(\ell)}D_{\mathrm{in}}^{(\ell)}$ and
$D=\sum_{\ell=1}^L d_\ell$ the total perturbed dimension.
Let $F(\boldsymbol{\theta})=\mathbb{E}_{\xi}[\ell(\boldsymbol{\theta};\xi)]$ be the population objective and
$F_\sigma$ be the smoothed objective induced by the joint Flipout perturbation distribution.

Consider the GRZO update with group-relative normalization:
let $\bar\delta_{t}=\frac{1}{B}\sum_i\delta_{t,i}$, $s_{t}=\sqrt{\frac{1}{B}\sum_i(\delta_{t,i}-\bar\delta_t)^2}$,
$a_{t,i}=\delta_{t,i}/(s_t+\epsilon)$, and
\[
\boldsymbol{\theta}_{t+1}=\boldsymbol{\theta}_t-\eta\,\widehat{\mat{g}}_t,
\qquad
\widehat{\mat{g}}_t=\frac{1}{2\sigma B}\sum_{i=1}^{B} a_{t,i}\,\mat{z}_{t,i},
\]
where $B$ is the batch size and
$\delta_{t,i}=\ell(\boldsymbol{\theta}_t+\sigma \mat{z}_{t,i};\xi_{t,i})-\ell(\boldsymbol{\theta}_t-\sigma \mat{z}_{t,i};\xi_{t,i})$.
Assume:

\begin{enumerate}
\item[\textbf{(A1)}] $F_\sigma$ is lower bounded: $F_\sigma(\boldsymbol{\theta})\ge F_\sigma^\star$ for all $\boldsymbol{\theta}$.
\item[\textbf{(A2)}] $F_\sigma$ is $\mathcal{L}$-smooth.
\item[\textbf{(A3)}] Data noise: $\mathbb{E}\|\nabla \ell(\boldsymbol{\theta};\xi)-\nabla F(\boldsymbol{\theta})\|^2\le \nu^2$.
\item[\textbf{(A4)}] Direction preservation: with the step size rescaled by the positive scale
$c_t=c_\star(\boldsymbol{\theta}_t)(1+O(B^{-1}{+}d_{\mathrm{eff}}^{-1}))$ of Corollary~\ref{cor:grzo_normalized}
(cf.\ \eqref{eq:grzo_approx_unbiased}),
$\mathbb{E}[\widehat{\mat{g}}_t\mid \boldsymbol{\theta}_t]=\nabla F_\sigma(\boldsymbol{\theta}_t)+O(\sigma^2)$.
\item[\textbf{(A5)}] Joint Flipout construction across blocks, so that the network-level variance
admits the explicit bound \eqref{eq:net_var_bound_explicit}:
\begin{align*}
\mathbb{E}\big\|\widehat{\mat{g}}_t-\mathbb{E}[\widehat{\mat{g}}_t\mid \boldsymbol{\theta}_t]\big\|^2
\le
\kappa\,\|\nabla F(\boldsymbol{\theta}_t)\|^2
+ \frac{D{+}m_4{-}2}{B}\nu^2
+ C_{\mathrm{ZO}},
\end{align*}
with $\kappa=\frac{D-1}{B}+m_4-1$ as in \eqref{eq:app_kappa} and
$C_{\mathrm{ZO}}=O(\rho^2\sigma^4D^4)$ collecting the finite-difference/Taylor remainder
(with $\rho$ bounding the third derivative).
\end{enumerate}

Assume additionally the smoothing-bias bound
$\|\nabla F_\sigma(\boldsymbol{\theta})-\nabla F(\boldsymbol{\theta})\|\le c_{\mathrm{bias}}\sigma^2 D$ for all $\boldsymbol{\theta}$.
Define
\[
\alpha_{\mathrm{net}}:=2\kappa,
\qquad
\beta_{\mathrm{net}}:=\frac{D{+}m_4{-}2}{B}\nu^2
+2\kappa\,c_{\mathrm{bias}}^2\sigma^4 D^2 + C_{\mathrm{ZO}}.
\]
Choose a constant step size
\[
\eta \;\le\; \min\left\{\frac{1}{2\mathcal{L}},\ \frac{1}{4\mathcal{L}\alpha_{\mathrm{net}}}\right\}
=
\min\left\{\frac{1}{2\mathcal{L}},\ \frac{1}{8\mathcal{L}\kappa}\right\}.
\]
Then for any $T\ge 1$,
\[
\frac{1}{T}\sum_{t=0}^{T-1}\mathbb{E}\big\|\nabla F_\sigma(\boldsymbol{\theta}_t)\big\|^2
\;\le\;
\frac{8\big(F_\sigma(\boldsymbol{\theta}_0)-F_\sigma^\star\big)}{\eta T}
\;+\;
4\mathcal{L}\eta\,\beta_{\mathrm{net}}.
\]
Equivalently, if $R\sim\mathrm{Unif}\{0,\dots,T-1\}$, then
\[
\mathbb{E}\big\|\nabla F_\sigma(\boldsymbol{\theta}_R)\big\|^2
\;\le\;
\frac{8\big(F_\sigma(\boldsymbol{\theta}_0)-F_\sigma^\star\big)}{\eta T}
\;+\;
4\mathcal{L}\eta\,\beta_{\mathrm{net}}.
\]

\paragraph{GRZO Normalization and Effective Step Size.}
The group-relative weights $a_{t,i}=\delta_{t,i}/(s_t+\epsilon)$ have approximate unit empirical
variance ($\frac{1}{B}\sum_i a_{t,i}^2\approx 1$), and by Corollary~\ref{cor:grzo_normalized} the
randomness of the normalizer perturbs both (A4) and (A5) by a relative $O(B^{-1}+d_{\mathrm{eff}}^{-1})$.
The bound therefore holds with the same $\eta$ after absorbing the positive scale $c_\star$ into the
learning rate, with that correction carried in $\alpha_{\mathrm{net}}$ and $\beta_{\mathrm{net}}$.
\end{theorem}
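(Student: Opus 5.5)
The plan is the standard smoothness-based descent argument, run on the joint parameter vector $\boldsymbol{\theta}\in\mathbb{R}^{D}$ rather than block by block. We work with the rescaled iterate $\widehat{\mat{g}}_t/c_t$, and absorb the positive scale $c_t$ from (A4) into the step size. Then (A4) gives $\mathbb{E}[\widehat{\mat{g}}_t\mid\boldsymbol{\theta}_t]=\nabla F_\sigma(\boldsymbol{\theta}_t)+b_t$ with $\|b_t\|=O(\sigma^2)$. We start from the $\mathcal{L}$-smoothness inequality (A2) applied to $\boldsymbol{\theta}_{t+1}=\boldsymbol{\theta}_t-\eta\widehat{\mat{g}}_t$ and take conditional expectation given $\boldsymbol{\theta}_t$. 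The bias cross term is handled with Young's inequality: it costs half of the $-\eta\|\nabla F_\sigma\|^2$ term plus an $O(\eta\sigma^4)$ remainder, which we fold into $\beta_{\mathrm{net}}$ (it is dominated by $C_{\mathrm{ZO}}$ for the parameter regime of interest). We then split the second moment as $\|\nabla F_\sigma\|^2+\mathbb{E}\|\widehat{\mat{g}}_t-\mathbb{E}[\widehat{\mat{g}}_t\mid\boldsymbol{\theta}_t]\|^2$, up to the same $O(\sigma^4)$ bias contribution. Under $\eta\le 1/(2\mathcal{L})$ this yields the one-step inequality \eqref{eq:app_descent_simplified}, namely a decrease of $\tfrac{\eta}{4}\|\nabla F_\sigma(\boldsymbol{\theta}_t)\|^2$ against a penalty of $\tfrac{\mathcal{L}\eta^2}{2}A_{\mathrm{var}}(\boldsymbol{\theta}_t)$.

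Next we instantiate $A_{\mathrm{var}}$ using (A5), i.e.\ the joint bound \eqref{eq:net_var_bound_explicit}. That bound is legitimate at the network level because, by Lemma~\ref{lem:flipout_indep} together with the cross-block independence of the bases and sign pairs, the concatenated $\mat{z}_{t,i}$ satisfies exactly the hypotheses of Appendix~\ref{app:grzo_second_moment} with $D=\sum_\ell d_\ell$. The variance bound is stated in terms of $\|\nabla F\|^2$, so we convert it using the smoothing-bias hypothesis, $\|\nabla F\|^2\le 2\|\nabla F_\sigma\|^2+2c_{\mathrm{bias}}^2\sigma^4D^2$. This gives
\[
A_{\mathrm{var}}\le\alpha_{\mathrm{net}}\|\nabla F_\sigma\|^2+\beta_{\mathrm{net}},
\]
with $\alpha_{\mathrm{net}}=2\kappa$ and $\beta_{\mathrm{net}}$ as defined in the statement.

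Then we take full expectations, sum over $t=0,\dots,T-1$, and telescope, using $\mathbb{E}F_\sigma(\boldsymbol{\theta}_T)\ge F_\sigma^\star$ from (A1). Moving the $\tfrac{\mathcal{L}\eta^2\alpha_{\mathrm{net}}}{2}\sum_t\mathbb{E}\|\nabla F_\sigma\|^2$ term to the left leaves the coefficient $\tfrac{\eta}{4}(1-2\mathcal{L}\eta\alpha_{\mathrm{net}})$. The condition $\eta\le 1/(4\mathcal{L}\alpha_{\mathrm{net}})=1/(8\mathcal{L}\kappa)$ bounds this coefficient below by $\eta/8$. Dividing by $\eta T/8$ produces $8(F_\sigma(\boldsymbol{\theta}_0)-F_\sigma^\star)/(\eta T)+4\mathcal{L}\eta\beta_{\mathrm{net}}$. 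The uniformly sampled index form is an immediate restatement, since $\mathbb{E}\|\nabla F_\sigma(\boldsymbol{\theta}_R)\|^2$ equals the average over $t$.

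The main obstacle is justifying (A4)--(A5) for the \emph{normalized} estimator rather than the canonical one. The scalar $1/(s_t+\epsilon)$ is random and correlated with $\widehat{\mat{g}}_{\mathrm{can}}$, so it cannot simply be pulled out of the expectations. I would invoke Corollary~\ref{cor:grzo_normalized} to write the conditional mean as $c_t\nabla F_\sigma+O(\sigma^2)$ and the variance as at most $c_\star^2(1+\Delta)\,\mathrm{Var}(\widehat{\mat{g}}_{\mathrm{can}})$. Dividing by $c_t$ then yields a variance bounded by $(1+O(\Delta))$ times \eqref{eq:net_var_bound_explicit}, and this relative $O(B^{-1}+d_{\mathrm{eff}}^{-1})$ factor is absorbed into $\alpha_{\mathrm{net}}$ and $\beta_{\mathrm{net}}$. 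The bound $1/(s+\epsilon)\le 1/\epsilon$ ensures that no moment blows up. A subtlety I would flag is that $c_t$ depends on $\boldsymbol{\theta}_t$. The cleanest route is to use a uniform lower bound $c_t\ge c_{\min}>0$, or equivalently to take a per-step effective rate $\eta c_t$ that satisfies the step-size condition uniformly, so that the telescoping argument goes through unchanged.
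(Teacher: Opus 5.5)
Your proposal follows the paper's own argument from Appendix~\ref{app:grzo_convergence} step for step: smoothness descent with $c_t$ absorbed into the step size, Young's inequality on the bias cross term, $\eta\le 1/(2\mathcal{L})$ for the $\tfrac{\eta}{4}$ decrease, (A5) plus the smoothing-bias bound to get $A_{\mathrm{var}}\le\alpha_{\mathrm{net}}\|\nabla F_\sigma\|^2+\beta_{\mathrm{net}}$, and telescoping under $\eta\le 1/(8\mathcal{L}\kappa)$ for the $\tfrac{\eta}{8}$ coefficient. The one weak point is the $O(\eta\sigma^4)$ bias remainder, which the paper also has: it silently drops the term between \eqref{eq:app_descent_bias_absorbed} and \eqref{eq:app_descent_final}, whereas you claim it is dominated by $C_{\mathrm{ZO}}$, but that claim does not hold. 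The term carries one fewer factor of $\eta$ than the variance penalty, so after dividing by $\eta T/8$ it leaves an $\eta$-independent additive term of order $\|b_t\|^2\sim\rho^2\sigma^4D^4$; this exceeds $4\mathcal{L}\eta\,C_{\mathrm{ZO}}$ by a factor of order $1/(\mathcal{L}\eta)\ge 8\kappa$, so strictly it must appear as a separate term in the final bound rather than be folded into $\beta_{\mathrm{net}}$.
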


\paragraph{Remark (Scope of Theorem).}
The theorem covers the GRZO estimator with group-relative normalization
$\widehat{\mat{g}}_t = \frac{1}{2\sigma B}\sum_i a_{t,i}\,\mat{z}_{t,i}$, $a_{t,i}=\delta_{t,i}/(s_t+\epsilon)$,
which is the estimator used throughout the paper. The variance bound itself
(Theorem~\ref{thm:grzo_var}) is stated for the canonical estimator $\widehat{\mat{g}}_{\mathrm{can}}$;
Corollary~\ref{cor:grzo_normalized} transfers it to the normalized estimator at the cost of a relative
$O(B^{-1}+d_{\mathrm{eff}}^{-1})$ in the constants. The $O(\sigma^2)$ smoothing remainder in (A4) is the
only residual bias.

\paragraph{Remark (Comparison with MeZO).}
At the same batch size, MeZO draws a single direction $\mat{z}$ and evaluates the \emph{batch-mean}
loss, so its estimator is $\langle\bar{\mat{g}},\mat{z}\rangle\mat{z}$ and the derivation of
Appendix~\ref{app:variance_cross_vanish} with $B{=}1$ and $\mat{g}_i$ replaced by $\bar{\mat{g}}$ gives
\begin{equation}
\mathrm{Var}(\widehat{\mat{g}}_{\mathrm{MeZO}})
= (D+m_4-2)\,\|\bar{\mat{g}}\|^2
\to (D{+}m_4{-}2)\Bigl(\|\nabla F\|^2+\frac{\nu^2}{B}\Bigr)
\label{eq:mezo_var}
\end{equation}
after the outer data expectation. MeZO therefore already averages the \emph{data} noise over the
mini-batch, and it is the \emph{directional} noise that GRZO additionally averages; comparing against
an $O(D\|\nabla F\|^2+D\nu^2)$ bound would overstate the gain by attributing to MeZO a batch size of
one. With $r=\nu^2/\|\nabla F\|^2$, dividing \eqref{eq:mezo_var} by \eqref{eq:final_variance_bound}
at the Rademacher default ($m_4{=}1$) gives the ratio
\begin{equation}
\frac{\mathrm{Var}(\widehat{\mat{g}}_{\mathrm{MeZO}})}{\mathrm{Var}(\widehat{\mat{g}}_{\mathrm{can}})}
\;\approx\;\frac{B+r}{1+r}\;=\;cB+(1-c)\;=:\;B_{\mathrm{eff}},
\label{eq:beff_app}
\end{equation}
where $c=1/(1+r)$ is the mean pairwise cosine between per-example gradients, since
$\mathbb{E}\langle\mat{g}_i,\mat{g}_j\rangle=\|\nabla F\|^2$ and $\mathbb{E}\|\mat{g}_i\|^2=\|\nabla F\|^2+\nu^2$
for $i\neq j$. The reduction is thus $B$-fold when per-example gradients are aligned and degrades
smoothly to no reduction when they are mutually orthogonal, which is the precise content of the
$B_{\mathrm{eff}}$ used in Section~\ref{sec:theory}.

\twocolumn
\section{LLM Usage}
\label{app:llm_usage}

Large language models were used only as writing assistants for minor grammar and phrasing polish on author-drafted text. They played no role in research conception, experimental design, or interpretation of results. All technical content and claims are the authors' own and were independently verified.

\end{document}